\tikzset{external/force remake=false}
\newcommand{\cmark}{}%
\DeclareRobustCommand{\cmark}{%
  \tikz\fill[scale=0.4, color=black!30!green]
  (0,.35) -- (.25,0) -- (1,.7) -- (.25,.15) -- cycle;%
}
\newcommand{\xmark}{}%
\DeclareRobustCommand{\xmark}{%
  \tikz [x=1.4ex,y=1.4ex,line width=.2ex, color=red] \draw (0,0) -- (1,1) (0,1) -- (1,0);
}
\newcommand{\umark}{{\color{orange}\(\thicksim\)}}
\pgfplotsset{compat=1.15}
\algrenewcommand{\algorithmiccomment}[1]{\hfill {\textcolor{darkgray}{\# #1}}}  
\algrenewcommand\algorithmicindent{2.5em}   
\algrenewcommand\alglinenumber[1]{\tiny #1} 
\newcommand\fs@booktabsruled{%
  \def\@fs@cfont{\bfseries\strut}\let\@fs@capt\floatc@ruled
  \def\@fs@pre{\hrule height\heavyrulewidth depth0pt \kern\belowrulesep}%
  \def\@fs@mid{\kern\aboverulesep\hrule height\lightrulewidth\kern\belowrulesep}%
  \def\@fs@post{\kern\aboverulesep\hrule height\heavyrulewidth\relax}%
  \let\@fs@iftopcapt\iftrue
}
\newtheoremstyle{theorem-style}
  {\topsep} 
  {\topsep} 
  {\itshape} 
  {} 
  {\bfseries} 
  {} 
  {\newline} 
  {} 
\theoremstyle{theorem-style}
\newtheorem{theorem}{Theorem}
\newtheorem{proposition}{Proposition}
\newtheorem{corollary}{Corollary}
\newtheorem{lemma}{Lemma}
\newtheoremstyle{definition-style}
  {\topsep} 
  {\topsep} 
  {} 
  {} 
  {\bfseries} 
  {} 
  {\newline} 
  {} 
\theoremstyle{definition-style}
\newtheorem{definition}{Definition}
\newtheorem{remark}{Remark}
\newtheorem{example}{Example}
\theoremstyle{theorem-style}
\newenvironment{customthm}[1]
  {\innercustomthm}
  {\endinnercustomthm}
\newenvironment{customprop}[1]
  {\innercustomprop}
  {\endinnercustomprop}
\theoremstyle{definition-style}
\newenvironment{customdef}[1]
  {\innercustomdef}
  {\endinnercustomdef}
\newlength\figH
\newlength\figW
\newcommand{\superimpose}[2]{
  {\ooalign{$#1\@firstoftwo#2$\cr\hfil$#1\@secondoftwo#2$\hfil\cr}}}
\newcommand{\ostimes}{\mathbin{\mathpalette\superimpose{{\otimes}{\ominus}}}}
\DeclareSymbolFont{stmry}{U}{stmry}{m}{n}
\DeclareMathSymbol\leftarrowtriangle\mathrel{stmry}{"5E}
\DeclareMathSymbol\rightarrowtriangle\mathrel{stmry}{"5F}
\DeclareMathSymbol\sslash\mathrel{stmry}{"0C}
\DeclareMathSymbol\obar\mathrel{stmry}{"3A}
\DeclareMathSymbol\otimes\mathrel{stmry}{"0F}
\DeclareMathSymbol\ominus\mathrel{stmry}{"17}
\DeclareMathSymbol\minuso\mathrel{stmry}{"0A}
\renewcommand{\vec}{\operatorname{vec}}
\newcommand{\svec}{\operatorname{svec}}
\newcommand{\smat}{\operatorname{smat}}
\newcommand{\mat}{\operatorname{mat}}
\def\1{\bm{1}}
\def\rvx{{\bm{\mathsf{x}}}}
\def\rmA{{\bm{\mathsf{A}}}}
\def\rmH{{\bm{\mathsf{H}}}}
\def\rmL{{\bm{\mathsf{L}}}}
\def\rmX{{\bm{\mathsf{X}}}}
\def\vmu{{\bm{\mu}}}
\def\vb{{\bm{b}}}
\def\ve{{\bm{e}}}
\def\vf{{\bm{f}}}
\def\vk{{\bm{k}}}
\def\vr{{\bm{r}}}
\def\vs{{\bm{s}}}
\def\vu{{\bm{u}}}
\def\vv{{\bm{v}}}
\def\vx{{\bm{x}}}
\def\vy{{\bm{y}}}
\def\vz{{\bm{z}}}
\def\evtheta{{\theta}}
\def\evf{{f}}
\def\mA{{\bm{A}}}
\def\mB{{\bm{B}}}
\def\mC{{\bm{C}}}
\def\mD{{\bm{D}}}
\def\mE{{\bm{E}}}
\def\mF{{\bm{F}}}
\def\mG{{\bm{G}}}
\def\mH{{\bm{H}}}
\def\mI{{\bm{I}}}
\def\mK{{\bm{K}}}
\def\mL{{\bm{L}}}
\def\mM{{\bm{M}}}
\def\mP{{\bm{P}}}
\def\mQ{{\bm{Q}}}
\def\mS{{\bm{S}}}
\def\mT{{\bm{T}}}
\def\mU{{\bm{U}}}
\def\mV{{\bm{V}}}
\def\mW{{\bm{W}}}
\def\mX{{\bm{X}}}
\def\mY{{\bm{Y}}}
\def\mZ{{\bm{Z}}}
\def\mPi{{\bm{\Pi}}}
\def\mPhi{{\bm{\Phi}}}
\def\mPsi{{\bm{\Psi}}}
\def\mLambda{{\bm{\Lambda}}}
\def\mSigma{{\bm{\Sigma}}}
\def\mDelta{{\bm{\Delta}}}
\DeclareMathAlphabet{\mathsfit}{\encodingdefault}{\sfdefault}{m}{sl}
\SetMathAlphabet{\mathsfit}{bold}{\encodingdefault}{\sfdefault}{bx}{n}
\def\emX{{X}}
\newcommand{\R}{\mathbb{R}}
\DeclareMathOperator*{\argmin}{arg\,min}
\renewcommand{\top}{{\intercal}}
\title{Probabilistic Linear Solvers for Machine Learning}
\author{
	Jonathan Wenger \qquad Philipp Hennig\\
	University of T\" ubingen\\
	Max Planck Institute for Intelligent Systems\\T\" ubingen, Germany\\
	\texttt{\{jonathan.wenger, philipp.hennig\}@uni-tuebingen.de}
}
\begin{document}

\maketitle

\begin{abstract}
Linear systems are the bedrock of virtually all numerical computation. Machine learning poses specific challenges for the solution of such systems due to their scale, characteristic structure, stochasticity and the central role of uncertainty in the field. Unifying earlier work we propose a class of probabilistic linear solvers which jointly infer the matrix, its inverse and the solution from matrix-vector product observations. This class emerges from a fundamental set of desiderata which constrains the space of possible algorithms and recovers the method of conjugate gradients under certain conditions. We demonstrate how to incorporate prior spectral information in order to calibrate uncertainty and experimentally showcase the potential of such solvers for machine learning.
\end{abstract}

\section{Introduction}

Arguably one of the most fundamental problems in machine learning, statistics and scientific computation at large is the solution of linear systems of the form \(\mA \vx_* = \vb\), where \(\mA \in \mathbb{R}^{n \times n}_{\textup{sym}}\) is a symmetric positive definite matrix \cite{Saad1992, Trefethen1997, Golub2013}. Such matrices usually arise in the context of second-order or quadratic optimization problems and as Gram matrices. Some of the numerous application areas in machine learning and related fields are least-squares regression \cite{Bishop2006}, kernel methods \cite{Hofmann2008}, Kalman filtering \cite{Kalman1960}, Gaussian (process) inference \cite{Rasmussen2006}, spectral graph theory \cite{Chung1997}, (linear) differential equations \cite{Fletcher1984} and (stochastic) second-order methods \cite{Nocedal2006}.

Linear systems in machine learning are typically large-scale, have characteristic structure arising from generative processes, and are subject to noise. These distinctive features call for linear solvers that can explicitly make use of such structural information. While classic solvers are highly optimized for general problems, they lack key functionality for machine learning. In particular, they do not consider generative prior information about the matrix.

An important example are kernel Gram matrices, which exhibit specific sparsity structure and spectral properties, depending on the kernel choice and the generative process of the data. Exploiting such prior information is a prime application for probabilistic linear solvers, which aim to quantify numerical uncertainty arising from limited computational resources. Another key challenge, which we will not yet address here, are noisy matrix evaluations arising from data subsampling. Ultimately, linear algebra for machine learning should integrate all sources of uncertainty in a computational pipeline -- aleatoric, epistemic and numerical -- into one coherent probabilistic framework.

\begin{figure}
        \centering
        \includegraphics[width=\textwidth]{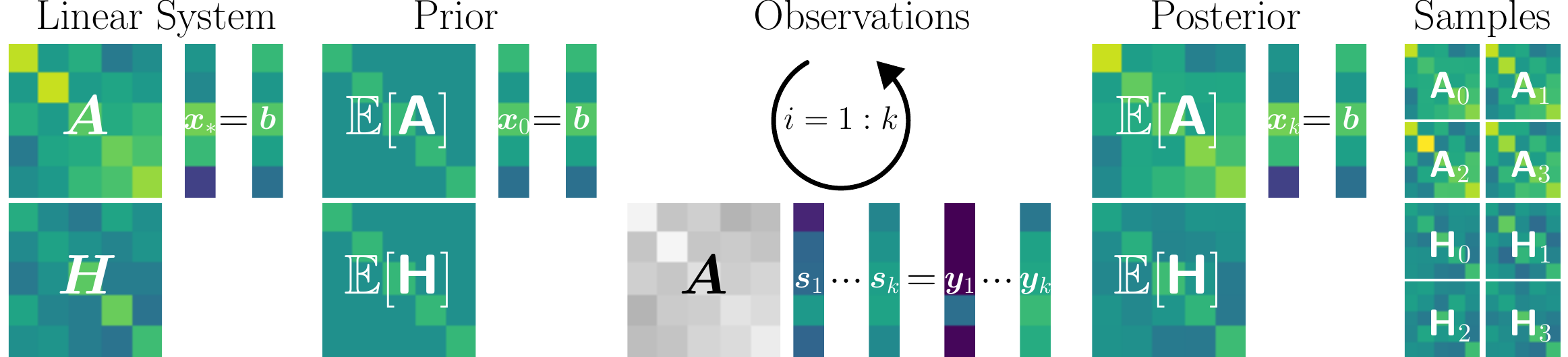}
	\caption{\textit{Illustration of a probabilistic linear solver.} Given a prior for \(\rmA\) or \(\rmH\) modelling the linear operator \(\mA\) and its inverse \(\mA^{-1}\), posterior beliefs are inferred via observations \(\vy_i = \mA\vs_i\). This induces a distribution on the solution \(\vx_*\), quantifying numerical uncertainty arising from finite computation. The plot shows \(k=3\) iterations of \Cref{alg:problinsolve} on a toy problem of dimension \(n=5\).\label{fig:problinear_solver}}
\end{figure}

\paragraph{Contribution}
This paper sets forth desiderata for probabilistic linear solvers which establish first principles for such methods. From these, we derive an algorithm incorporating prior information on the matrix \(\mA\) or its inverse \(\mA^{-1}\), which jointly estimates both via repeated application of \(\mA\). This results in posterior beliefs over the two operators and the solution which quantify numerical uncertainty. Our approach unifies and extends earlier formulations and constitutes a new way of interpreting linear solvers. Further, we propose a prior covariance class which recovers the method of conjugate gradients as its posterior mean and uses prior spectral information for uncertainty calibration, one of the primary shortcomings of  probabilistic linear solvers. We conclude by presenting simplified examples of promising applications of such solvers within machine learning.

\section{Probabilistic Linear Solvers}
\label{sec:probabilistic_linear_solvers}

Let \(\mA \vx_* = \vb\) be a linear system with \(\mA \in \mathbb{R}^{n \times n}_{\textup{sym}}\) positive definite and \(\vb \in \R^n\). \emph{Probabilistic linear solvers} (PLS) \cite{Hennig2015, Cockayne2019a,Bartels2019} iteratively build a model for the linear operator \(\mA\), its inverse \(\mH = \mA^{-1}\) or the solution \(\vx_*\), represented by random variables \(\rmA, \rmH\) or \(\rvx\). In the framework of probabilistic numerics  \cite{Hennig2015a, Oates2019} such solvers can be seen as Bayesian agents performing \emph{inference} via linear \emph{observations} \(\mY = [\vy_1, \dots, \vy_k] \in \mathbb{R}^{n \times k}\) resulting from \emph{actions} \(\mS = [\vs_1, \dots, \vs_k]  \in \mathbb{R}^{n \times k}\) given by an internal \emph{policy} \(\pi(\vs \mid \rmA, \rmH, \rvx, \mA, \vb)\). For a matrix-variate prior \(p(\rmA)\) or \(p(\rmH)\) encoding prior (generative) information, our solver computes posterior beliefs over the matrix, its inverse and the solution of the linear system. An illustration of a probabilistic linear solver is given in \Cref{fig:problinear_solver}.

\paragraph{Desiderata}
We begin by stipulating a fundamental set of desiderata for probabilistic linear solvers. To our knowledge such a list has not been collated before. Connecting previously disjoint threads, the following presents a roadmap for the development of these methods. Probabilistic linear solvers modelling \(\mA\) and \(\mA^{-1}\) must assume matrix-variate distributions which are expressive enough to capture structure and generative prior information either for \(\mA\) or its inverse. The distribution choice must also allow computationally efficient sampling and density evaluation. It should encode symmetry and positive definiteness and must be closed under positive linear combinations. Further, the two models for the system matrix or its inverse should be translatable into and consistent with each other. Actions \(\vs_i\) of a PLS should be model-based and induce a tractable distribution on linear observations \(\vy_i = \mA \vs_i\). Since probabilistic linear solvers are low-level procedures, their inference procedure must be computationally lightweight. Given (noise-corrupted) observations this requires tractable posteriors over \(\rmA\), \(\rmH\) and \(\rvx\), which are calibrated in the sense that at convergence the true solution \(\vx_*\) represents a draw from the posterior \(p(\rvx \mid \mY, \mS)\). Finally, such solvers need to allow preconditioning of the problem and ideally should return beliefs over non-linear properties of the system matrix extending the functionality of classic methods. These desiderata are summarized concisely in \Cref{tab:desiderata}.

\begin{table}
\caption{\textit{Desired properties of probabilistic linear solvers.} Symbols (\xmark , \umark , \cmark ) indicate which properties are encoded in our proposed solver (see \Cref{alg:problinsolve}) and to what degree.\label{tab:desiderata}}
\vspace{-.5em}
\begin{center}
\begin{small}
    \begin{tabular}{rlcc}
        \toprule
        No. & Property & Formulation &  \\
        \midrule
        (1) & distribution over matrices & \(\rmA \sim \mathcal{D}, \ p_{\mathcal{D}}(\rmA)\)& \cmark \\
        (2) & symmetry & \(\rmA = \rmA^{\top}\) a.s. & \cmark \\
        (3) & positive definiteness & \(\forall \vv \neq 0 : \ \vv^{\top}\rmA \vv > 0 \ \) a.s.& \umark \\
        (4) & positive linear combination in same distribution family& \(\forall \alpha_j > 0 : \ \sum_j \alpha_j \rmA_j \sim \mathcal{D}\)& \cmark \\
        (5) & corresponding priors on the matrix and its inverse & \(p(\rmA) \longleftrightarrow p(\rmH)\)& \cmark \\
        \midrule
        (6) & model-based policy & \(\vs_i \sim \pi( \vs \mid \mA, \vb, \rmA, \rmH, \rvx)\)& \cmark \\
        (7) & matrix-vector product in tractable distribution family & \(\rmA\vs \sim \mathcal{D}'\)& \cmark \\
        (8) & noisy observations & \(p(\mY \mid \rmA, \mS) = \mathcal{N}(\mY; \mA\mS, \bm{\Lambda})\)& \xmark\\
        (9) & tractable posterior & \(p(\rmA \mid \mY, \mS)\) or \(p(\rmH \mid \mY, \mS)\)& \cmark \\
       (10) & calibrated uncertainty & \(\vx_* \sim \mathcal{N}(\mathbb{E}[\rvx], \operatorname{Cov}[\rvx])\) & \umark \\
        \midrule
        (11) & preconditioning & \((\mP^{-\top}\mA \mP^{-1}) \mP \vx_* = \mP^{-\top}\vb\)& \cmark \\
        (12) & distributions over non-linear derived quantities of \(\mA\)& \(\operatorname{det}(\rmA), \, \sigma(\rmA), \, \rmA = \rmL^\top \rmL, \dots\) & \xmark \\
        \bottomrule
    \end{tabular}
\end{small}
\end{center}
\vspace{-.5em}
\end{table}

\subsection{Bayesian Inference Framework}
\label{sec:inference_framework}

Guided by these desiderata, we will now outline the inference framework for \(\rmA, \rmH\) and \(\rvx\) forming the base of the algorithm. The choice of a matrix-variate prior distribution is severely limited by the desideratum that conditioning on linear observations \(\vy_i = \mA \vs_i\) must be tractable. This reduces the choice to stable distributions \cite{Levy1925} and thus excludes candidates such as the Wishart, which has measure zero outside the cone of symmetric positive semi-definite matrices. For symmetric matrices, this essentially forces use of the symmetric matrix-variate normal distribution, introduced in this context by \citet{Hennig2015}. Given \(\mA_0, \mW_0^{\rmA} \in \mathbb{R}^{n \times n}_{\textup{sym}}\), assume a prior distribution
\begin{equation*}
p(\rmA) = \mathcal{N}(\rmA; \mA_0, \mW_0^{\rmA} \ostimes \mW_0^{\rmA}),
\end{equation*}
where \(\ostimes\) denotes the symmetric Kronecker product \cite{Loan2000}.\footnote{See \Cref{sec:kronecker_products,sec:matrixvariate_normal} of the supplementary material for more detail on Kronecker-type products and matrix-variate normal distributions.} The symmetric matrix-variate Gaussian induces a Gaussian distribution on linear observations. While it has non-zero measure only for symmetric matrices, its support is not the positive definite cone. However, positive definiteness can still be enforced post-hoc (see \Cref{thm:hereditary_posdef}). We assume noise-free linear observations of the form \(\vy_i = \mA \vs_i\), leading to a Dirac likelihood
\begin{equation*}
p(\mY \mid \rmA, \mS) = \lim_{\varepsilon \downarrow 0} \mathcal{N}(\mY; \mA \mS, \varepsilon^2 \mI \otimes \mI)=\delta(\mY - \mA \mS).
\end{equation*}
The posterior distribution follows from the properties of Gaussians \cite{Bishop2006} and has been investigated in detail in previous work \cite{Hennig2013, Hennig2015, Bartels2019}. It is given by \(p(\rmA \mid \mS, \mY)  = \mathcal{N}(\rmA; \mA_k, \mSigma_k)\) with
\begin{align*}
\mA_k&= \mA_0 + \mDelta_0^{\rmA} \mU^\top + \mU (\mDelta_0^{\rmA})^\top - \mU \mS^\top \mDelta_0^{\rmA} \mU^\top \\
\mSigma_k &=\mW_0^{\rmA} (\mI_n-\mS\mU^\top) \ostimes \mW_0^{\rmA} (\mI_n-\mS\mU^\top)
\end{align*}
where \(\mDelta_0^{\rmA} = \mY - \mA_0\mS\) and \(\mU = \mW_0^{\rmA} \mS (\mS^\top \mW_0^{\rmA} \mS)^{-1}\). We aim to construct a probabilistic model \(\rmH\) for the inverse \(\mH=\mA^{-1}\) consistent with the model \(\rmA\) as well. However, not even in the scalar case does the inverse of a Gaussian have finite mean. We ask instead what Gaussian model for \(\rmH\) is as consistent as possible with our observational model for \(\rmA\). For a prior of the form \(p(\rmH) = \mathcal{N}(\rmH; \mH_0, \mW_0^{\rmH} \ostimes \mW_0^{\rmH})\) and likelihood \(p(\mS \mid \rmH, \mY) = \delta(\mS - \mH \mY)\), we analogously to the \(\rmA\)-model obtain a posterior distribution \(p(\rmH \mid \mS, \mY)  = \mathcal{N}(\rmH; \mH_k, \mSigma^{\rmH}_k)\) with
\begin{align*}
\mH_k&= \mH_0 +\mDelta_0^{\rmH}(\mU^{\rmH})^\top + \mU^{\rmH}(\mDelta_0^{\rmH})^\top - \mU^{\rmH} \mY^\top \mDelta_0^{\rmH} (\mU^{\rmH})^\top \\
\mSigma^{\rmH}_k &=\mW_0^{\rmH} (\mI_n-\mY(\mU^{\rmH})^\top) \ostimes \mW_0^{\rmH} (\mI_n-\mY(\mU^{\rmH})^\top)
\end{align*}
where \(\mDelta_0^{\rmH} = \mS - \mH_0\mY\) and \(\mU^{\rmH} = \mW_0^{\rmH} \mY (\mY^\top \mW_0^{\rmH} \mY)^{-1}\). In \Cref{sec:prior_covariance_class} we will derive a covariance class, which establishes correspondence between the two Gaussian viewpoints for the linear operator and its inverse and is consistent with our desiderata.

\subsection{Algorithm}
\label{sec:algorithm}
The above inference procedure leads to \Cref{alg:problinsolve}. The degree to which the desiderata are encoded in our formulation of a PLS can be found in \Cref{tab:desiderata}. We will now go into more detail about the policy, the choice of step size, stopping criteria and the implementation.

\begin{algorithm}
	\caption{Probabilistic Linear Solver with Uncertainty Calibration}
	\label{alg:problinsolve}
	\begin{algorithmic}[1]
	\Procedure{\textsc{ProbLinSolve}}{$\mA(\cdot), \vb, \rmA, \rmH$} 															\Comment{prior for $\rmA$ or $\rmH$}
		\State \(\vx_0 \gets \mathbb{E}[\rmH]\vb\) 																			\Comment{initial guess}
		\State \(\phantom{\vx_0}\mathllap{\vr_0} \gets \mA\vx_0 - \vb\)
		\While{$\min(\sqrt{\tr(\operatorname{Cov}[\rvx])}, \norm{\vr_i}_2) > \max(\delta_{\textup{rtol}} \norm{\vb}_2, \delta_{\textup{atol}})$ 
		}	\Comment{stopping criteria}
			\State \({\vs_i} \gets - \mathbb{E}[\rmH]\vr_{i-1}\)
															\Comment{compute action via policy}
			\State \(\phantom{\vs_i}\mathllap{\vy_i} \gets \mA \vs_i\) 																			\Comment{make observation}
			\State \(\phantom{\vs_i}\mathllap{\alpha_i} \gets - \vs_i^{\top} \vr_{i-1}(\vs_i^\top \vy_i)^{-1}\) 												\Comment{optimal step size}
			\State \(\phantom{\vs_i}\mathllap{\vx_{i}} \gets \vx_{i-1} + \alpha_i \vs_i\) 																\Comment{update solution estimate}
			\State \(\phantom{\vs_i}\mathllap{\vr_i} \gets \vr_{i-1} + \alpha_i \vy_i\) 																\Comment{update residual}
			\State \(\phantom{\vs_i}\mathllap{\rmA} \gets \textsc{Infer}(\rmA, \vs_i, \vy_i)\) 															\Comment{infer posterior distributions}
			\State \(\phantom{\vs_i}\mathllap{\rmH} \gets \textsc{Infer}(\rmH, \vs_i, \vy_i)\) 															\Comment{(see \Cref{sec:inference_framework})}
	    	\State \(\mPhi, \mPsi \gets \textsc{Calibrate}(\mS, \mY)\) \Comment{calibrate uncertainty}
	    \EndWhile
	    \State \(\rvx \gets \mathcal{N}(\vx_k, \operatorname{Cov}[\rmH \vb])\) \Comment{belief over solution}
    	\State \Return $(\rvx, \rmA, \rmH)$
	\EndProcedure
	\end{algorithmic}
\end{algorithm}

\paragraph{Policy and Step Size}
In each iteration our solver collects information about the linear operator \(\mA\) via actions \(\vs_i\) determined by the policy \(\pi(\vs \mid \rmA, \rmH, \rvx, \mA, \vb)\). The next action \(\vs_i = - \mathbb{E}[\rmH]\vr_{i-1}\) is chosen based on the current belief about the inverse. If \(\mathbb{E}[\rmH]= \mA^{-1}\), i.e. if the solver's estimate for the inverse equals the true inverse, then \Cref{alg:problinsolve} converges in a single step since
\begin{equation*}
\vx_{i-1} + \vs_i =\vx_{i-1} - \mathbb{E}[\rmH] \vr_{i-1}=\vx_{i-1}  -\mA^{-1}(\mA\vx_{i-1} - \vb) = \mA^{-1}\vb = \vx_*.
\end{equation*}
The step size minimizing the quadratic \(q(\vx_i + \alpha \vs_i) = \frac{1}{2}(\vx_i + \alpha \vs_i)^\top \mA (\vx_i + \alpha \vs_i) - \vb^\top(\vx_i + \alpha \vs_i)\) along the action \(\vs_i\) is given by \(\alpha_i = \argmin_\alpha q(\vx_i + \alpha \vs_i) = \vs_i^\top(\vb - \mA \vx_i)(\vs_i^\top \mA \vs_i)^{-1}\).

\paragraph{Stopping Criteria}
Classic linear solvers typically use stopping criteria based on the current residual of the form \(\norm{\mA \vx_i - \vb}_2 \leq \max(\delta_{\textup{rtol}} \norm{\vb}_2, \delta_{\textup{atol}})\) for relative and absolute tolerances \(\delta_{\textup{rtol}}\) and \(\delta_{\textup{atol}}\). However, this residual may oscillate or even increase in all but the last step even if the error \(\norm{\vx_* - 
\vx_i}_2\) is monotonically decreasing \cite{Hestenes1952, Gutknecht2000}. From a probabilistic point of view, we should stop if our posterior uncertainty is sufficiently small. Assuming the posterior covariance is calibrated, it holds that \((\mathbb{E}_{\vx_*}[\norm{\vx_* - \mathbb{E}[\rvx]}_2])^2 \leq \mathbb{E}_{\vx_*}[\norm{\vx_* - \mathbb{E}[\rvx]}_2^2] = \tr(\operatorname{Cov}[\rvx])\). Hence given calibration, we can bound the expected (relative) error between our estimate and the true solution by terminating when \(\sqrt{\tr(\operatorname{Cov}[\rvx])} \leq \max(\delta_{\textup{rtol}} \norm{\vb}_2, \delta_{\textup{atol}})\). A probabilistic criterion is also necessary for an extension to the noisy setting, where classic convergence criteria become stochastic. However, probabilistic linear solvers typically suffer from miscalibration \cite{Cockayne2019}, an issue we will address in \Cref{sec:prior_covariance_class}.

\paragraph{Implementation}
We provide an open-source implementation of \Cref{alg:problinsolve} as part of \textsc{ProbNum}, a Python package implementing probabilistic numerical methods, in an online code repository:
\begin{center}
\vspace{-0.5em}
\begin{tabular}{m{1cm} l}
\includegraphics[width=0.08\textwidth]{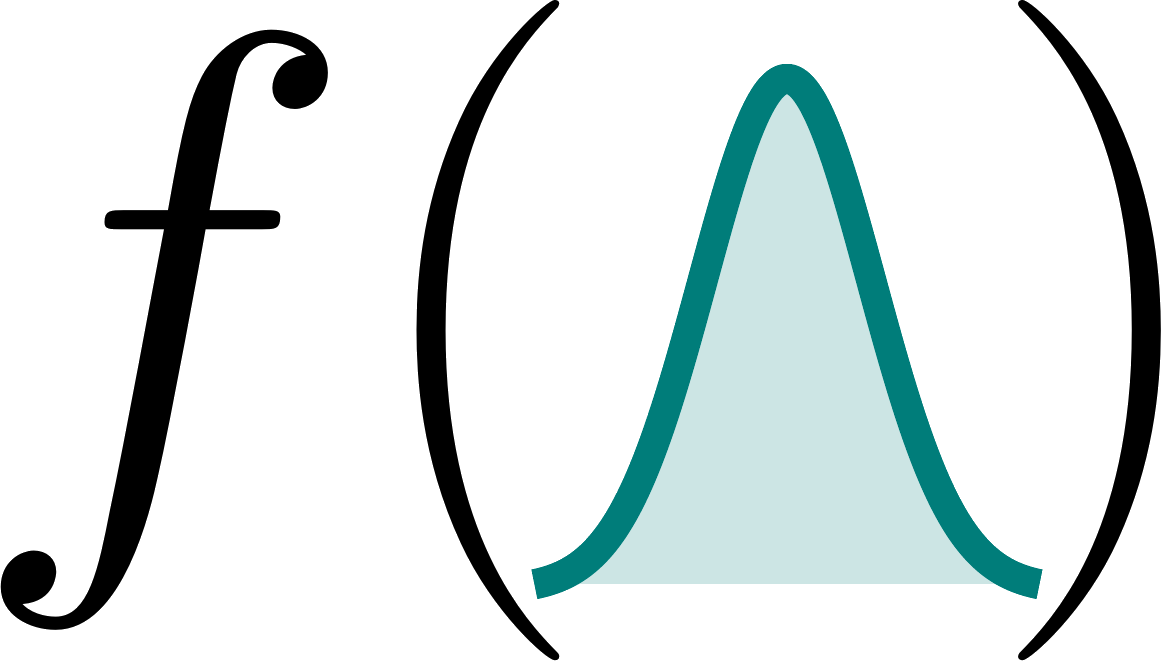} & \url{https://github.com/probabilistic-numerics/probnum}\\
\end{tabular}
\vspace{-0.5em}
\end{center}
The mean and covariance up- and downdates in \Cref{sec:inference_framework} when performed iteratively are of low rank. In order to maintain numerical stability these updates can instead be performed for their respective Cholesky factors \cite{Seeger2008}. This also enables computationally efficient sampling or evaluation of probability density functions downstream.

\subsection{Theoretical Properties}
\label{sec:theoretical_properties}
This section details some theoretical properties of our method such as its convergence behavior and computational complexity. In particular we demonstrate that for a specific prior choice \Cref{alg:problinsolve} recovers the method of conjugate gradients as its solution estimate. All proofs of results in this section and the next can be found in the supplementary material. We begin by establishing that our solver is a \emph{conjugate directions method} and therefore converges in at most \(n\) steps in exact arithmetic.

\begin{theorem}[Conjugate Directions Method]
\label{thm:conj_directions_method}
Given a prior \(p(\rmH)=\mathcal{N}(\rmH; \mH_0, \mW_0^{\rmH} \ostimes \mW_0^{\rmH})\) such that \(\mH_0, \mW_0^{\rmH} \in \mathbb{R}^{n \times n}_{\textup{sym}}\) positive definite, then actions \(\vs_i\) of \Cref{alg:problinsolve} are \(\mA\)-conjugate, i.e. for \(0 \leq i,j \leq k\) with \(i \neq j\) it holds that \(\vs_i^\top \mA \vs_j = 0\).
\end{theorem}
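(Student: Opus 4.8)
The plan is to run the same two-pronged induction that underlies the classical analysis of conjugate-direction and BFGS-type methods on quadratics --- conjugacy of the search directions together with orthogonality of each residual to all previous directions --- with the matrix-variate posterior mean of \(\rmH\) playing the role of the quasi-Newton matrix.

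First I would isolate the two properties of the running posterior mean \(\mH_i := \mathbb{E}[\rmH \mid \mS_i, \mY_i]\) (after \(i\) iterations, with \(\mS_i = [\vs_1,\dots,\vs_i]\) and \(\mY_i = \mA\mS_i\)) that drive everything: (a) \(\mH_i\) is symmetric, and (b) it satisfies the observation-matching (secant) equations \(\mH_i \mA \vs_j = \vs_j\) for every \(j \le i\), i.e.\ \(\mH_i\mY_i = \mS_i\). Both are read off the posterior mean formula of \Cref{sec:inference_framework}: writing \(\mDelta_0^{\rmH} = \mS_i - \mH_0\mY_i\) and \(\mU^{\rmH} = \mW_0^{\rmH}\mY_i(\mY_i^\top\mW_0^{\rmH}\mY_i)^{-1}\), one has \((\mU^{\rmH})^\top\mY_i = \mI\), so the terms telescope down to \(\mH_i\mY_i = \mH_0\mY_i + \mDelta_0^{\rmH} = \mS_i\); the two remaining cross terms cancel and symmetry follows, both because \(\mY_i^\top\mDelta_0^{\rmH}\) is symmetric, which in turn holds since noise-free observations and symmetry of \(\mA\) give \(\mY_i^\top\mS_i = \mS_i^\top\mA\mS_i\). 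Positive definiteness of \(\mW_0^{\rmH}\) makes \(\mY_i^\top\mW_0^{\rmH}\mY_i\) invertible as soon as \(\mS_i\) has full column rank --- a point the induction below supplies. (These two facts are also available from prior work on these updates.)

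Next I would record the two facts about the iterates: the policy gives \(\vs_i = -\mH_{i-1}\vr_{i-1}\), and the step size \(\alpha_i = -\vs_i^\top\vr_{i-1}(\vs_i^\top\vy_i)^{-1}\) is exactly the choice forcing \(\vs_i^\top\vr_i = 0\), since \(\vr_i = \vr_{i-1} + \alpha_i\mA\vs_i\) and \(\vs_i^\top\mA\vs_i > 0\) (by positive definiteness of \(\mA\), with \(\vs_i \neq 0\) as long as the solver has not already converged, using that the posterior mean \(\mH_{i-1}\) stays positive definite). The induction hypothesis is \(P(i)\): ``\(\vs_j^\top\vr_i = 0\) for all \(j \le i\), and \(\vs_j^\top\mA\vs_i = 0\) for all \(j < i\)'' --- the conjugacy clause in particular forcing \(\vs_1,\dots,\vs_i\) to be linearly independent, which closes the rank gap left above. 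The base case \(P(1)\) is the step-size identity. For the step \(P(i-1)\Rightarrow P(i)\): for \(j < i\),
\[
\vs_j^\top\mA\vs_i \;=\; -\,\vs_j^\top\mA\mH_{i-1}\vr_{i-1} \;=\; -\,(\mH_{i-1}\mA\vs_j)^\top\vr_{i-1} \;=\; -\,\vs_j^\top\vr_{i-1} \;=\; 0,
\]
using symmetry of both \(\mA\) and \(\mH_{i-1}\), the secant condition (b) with \(j \le i-1\), and \(P(i-1)\); this is the conjugacy clause of \(P(i)\). For the orthogonality clause, expand \(\vr_i = \vr_{i-1} + \alpha_i\mA\vs_i\): the case \(j = i\) is the step-size identity, and for \(j < i\), \(\vs_j^\top\vr_i = \vs_j^\top\vr_{i-1} + \alpha_i\,\vs_j^\top\mA\vs_i = 0\) by \(P(i-1)\) and the conjugacy just established. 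Ranging over \(i\) yields \(\vs_i^\top\mA\vs_j = 0\) for all \(i \neq j\).

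I expect the main obstacle to be the bookkeeping around degeneracy rather than any deep step: one must check that the posterior-mean updates stay well defined --- that \(\mS_i\) never loses rank and that no \(\vs_i\) collapses to zero before convergence --- and this is exactly where the positive-definiteness assumptions on \(\mH_0\) and \(\mW_0^{\rmH}\) enter, via hereditary positive definiteness of \(\mH_i\) and the resulting linear independence of the actions, so the well-definedness of the updates and the conjugacy conclusion have to be threaded through the induction simultaneously. Everything else is the standard conjugate-directions computation.
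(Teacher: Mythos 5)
Your proof is correct and follows essentially the same route as the paper's: both rest on symmetry of the posterior mean \(\mH_i\), the subspace property \(\mH_i\mY_i=\mS_i\) (the paper's ``Subspace Equivalency'' lemma, which you rederive as the secant condition), the step-size identity \(\vs_i^\top\vr_i=0\), and an induction on conjugacy. The only difference is bookkeeping: you carry residual orthogonality \(\vs_j^\top\vr_i=0\) as an explicit clause of the induction hypothesis, whereas the paper rederives it inline each time by expanding \(\vr_k=\vr_0+\sum_l\alpha_l\vy_l\); your handling of the non-degeneracy issues is, if anything, slightly more explicit than the paper's.
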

We can obtain a better convergence rate by placing stronger conditions on the prior covariance class as outlined in \Cref{sec:prior_covariance_class}. Given these assumptions, \Cref{alg:problinsolve} recovers the iterates of (preconditioned) CG and thus inherits its favorable convergence behavior (overviews in \cite{Luenberger1973, Nocedal2006}).

\begin{theorem}[Connection to the Conjugate Gradient Method]
\label{thm:connection_cg}
Given a scalar prior mean \(\mA_0 = \mH_0^{-1}= \alpha \mI\) with \(\alpha > 0\), assume \eqref{eqn:hered_pos_def} and \eqref{eqn:post_mean_equiv} hold, then the iterates \(\bm{x}_i\) of \Cref{alg:problinsolve} are identical to the ones produced by the conjugate gradient method.
\end{theorem}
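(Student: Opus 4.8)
The plan is to characterize both schemes as subspace-minimization methods and then show that the relevant subspaces coincide. First I would invoke the \emph{expanding subspace theorem} for conjugate directions methods: by \Cref{thm:conj_directions_method} the actions $\vs_1,\dots,\vs_k$ of \Cref{alg:problinsolve} are $\mA$-conjugate, hence linearly independent whenever nonzero, and the step sizes $\alpha_i$ are by construction the exact one-dimensional minimizers of the quadratic $q$ along $\vs_i$; consequently $\vx_i = \vx_0 + \sum_{j \le i}\alpha_j\vs_j$ is the \emph{unique} minimizer of $q$ over the affine space $\vx_0 + \mathcal{V}_i$, where $\mathcal{V}_i = \operatorname{span}\{\vs_1,\dots,\vs_i\}$, and moreover $\vr_i \perp \mathcal{V}_i$. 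On the other hand, the $i$-th iterate of CG started at the same point $\vx_0 = \mathbb{E}[\rmH]\vb = \alpha^{-1}\vb$ is the unique minimizer of $q$ over $\vx_0 + \mathcal{K}_i$, where $\mathcal{K}_i = \operatorname{span}\{\vr_0, \mA\vr_0, \dots, \mA^{i-1}\vr_0\}$ is the Krylov space generated by $\vr_0 = \mA\vx_0 - \vb$. Since a strictly convex quadratic has a unique minimizer over any affine subspace and $\vx_0$ agrees, it suffices to prove $\mathcal{V}_i = \mathcal{K}_i$ for all $i$ up to the termination index (beyond which $\vr_i = 0$ and both methods have already returned $\vx_*$).

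Next I would establish $\mathcal{V}_i = \mathcal{K}_i$ by induction. The base case $i=1$ is immediate: the first action uses the prior mean, $\vs_1 = -\mathbb{E}[\rmH]\vr_0 = -\mH_0\vr_0 = -\alpha^{-1}\vr_0$, so $\mathcal{V}_1 = \operatorname{span}\{\vr_0\} = \mathcal{K}_1$. For the inductive step, assume $\mathcal{V}_i = \mathcal{K}_i$. Then $\vx_i \in \vx_0 + \mathcal{K}_i$ gives $\vr_i = \mA\vx_i - \vb = \vr_0 + \mA(\vx_i - \vx_0) \in \mathcal{K}_{i+1}$, while $\vr_i \perp \mathcal{V}_i = \mathcal{K}_i$; hence, unless $\vr_i = 0$, we have $\mathcal{K}_{i+1} = \mathcal{K}_i \oplus \operatorname{span}\{\vr_i\}$. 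It remains to show that the next action $\vs_{i+1} = -\mathbb{E}[\rmH]\vr_i = -\mH_i\vr_i$ lies in $\mathcal{K}_{i+1}$ but not in $\mathcal{K}_i$. The fact that $\vs_{i+1} \notin \mathcal{K}_i$ follows from \Cref{thm:conj_directions_method}, since $\vs_{i+1}$ is $\mA$-conjugate to $\vs_1,\dots,\vs_i$ and therefore cannot lie in their span $\mathcal{V}_i = \mathcal{K}_i$. To see $\vs_{i+1} \in \mathcal{K}_{i+1}$, I would substitute the closed form of the posterior mean $\mH_i$ from \Cref{sec:inference_framework}: the leading term contributes $\mH_0\vr_i = \alpha^{-1}\vr_i \in \mathcal{K}_{i+1}$, and each low-rank correction term is assembled from the columns $\vs_j$ of $\mS$ (in $\mathcal{V}_i = \mathcal{K}_i$), the columns $\vy_j = \mA\vs_j$ of $\mY$ (in $\mA\mathcal{K}_i \subseteq \mathcal{K}_{i+1}$), and the covariance factor $\mW_0^{\rmH}$; the conditions \eqref{eqn:hered_pos_def} and especially \eqref{eqn:post_mean_equiv} imposed on the prior covariance class in \Cref{sec:prior_covariance_class} are precisely what ensures that $\mW_0^{\rmH}$ acts on these vectors so that the whole expression remains in $\mathcal{K}_{i+1}$. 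Combining, $\mathcal{V}_{i+1} = \mathcal{V}_i \oplus \operatorname{span}\{\vs_{i+1}\} = \mathcal{K}_i \oplus \operatorname{span}\{\vs_{i+1}\} = \mathcal{K}_{i+1}$, which closes the induction.

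With $\mathcal{V}_i = \mathcal{K}_i$ established, uniqueness of the quadratic minimizer over $\vx_0 + \mathcal{K}_i$ immediately yields $\vx_i = \vx_i^{\mathrm{CG}}$ for every $i$; the preconditioned statement then follows by running the same argument on the transformed system $(\mP^{-\top}\mA\mP^{-1})\mP\vx_* = \mP^{-\top}\vb$. I expect the main obstacle to be the verification that $\vs_{i+1} = -\mH_i\vr_i \in \mathcal{K}_{i+1}$: this is the one place where the particular prior covariance class genuinely enters, and it seems to require explicitly tracking how the low-rank up- and downdates defining $\mH_i$ interact with the Krylov structure rather than any soft argument -- if \eqref{eqn:post_mean_equiv} can be read as asserting $\mH_i\vr_i \in \mathcal{V}_i + \operatorname{span}\{\vr_i\}$ directly, this step becomes routine, so pinning down that reading is the crux. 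A secondary point needing care is excluding premature breakdown (some $\vs_i = 0$ or $\vs_i^\top\vy_i = 0$ before convergence), which should follow from positive definiteness of $\mA$ together with \eqref{eqn:hered_pos_def} keeping $\mathbb{E}[\rmH]$ positive definite.
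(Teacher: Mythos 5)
Your proposal is correct and follows essentially the same route as the paper's proof: both reduce the claim to showing that the actions $\vs_i$ lie in (and hence, by $\mA$-conjugacy, span) the Krylov space $\mathcal{K}_i(\mA,\vr_0)$, establish the base case via $\vs_1 = -\alpha^{-1}\vr_0$, and close the induction by observing that the posterior mean $\mH_{i-1}$ maps $\vr_{i-1}$ into $\operatorname{span}(\vr_{i-1},\mS_{1:i-1},\mY_{1:i-1})$ because \eqref{eqn:hered_pos_def} and \eqref{eqn:post_mean_equiv} force $\operatorname{span}(\mW_0^{\rmH}\mY_{1:i-1})=\operatorname{span}(\mY_{1:i-1})$ — exactly the reading of those conditions you flag as the crux. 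The only cosmetic difference is that you phrase the final identification of iterates via the expanding-subspace/unique-minimizer characterization, whereas the paper argues that $\mA$-conjugacy within the nested Krylov spaces determines the directions up to scale and that the step sizes coincide with CG's; these are equivalent.
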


A common phenomenon observed when implementing conjugate gradient methods is that due to cancellation in the computation of the residuals, the search directions \(\vs_i\) lose \(\mA\)-conjugacy \citep{Paige1972, Simon1984, Golub2013}. In fact, they can become independent up to working precision for \(i\) large enough \citep{Simon1984}. One way to combat this is to perform complete reorthogonalization of the search directions in each iteration as originally suggested by \citet{Lanczos1950}. \Cref{alg:problinsolve} does this \emph{implicitly} via its choice of policy which depends on all previous search directions as opposed to just \(\vs_{i-1}\) for (naive) CG.

\paragraph{Computational Complexity}
The solver has time complexity \(\mathcal{O}(kn^2)\) for \(k\) iterations without uncertainty calibration. Compared to CG, inferring the posteriors in \Cref{sec:inference_framework} adds an overhead of four outer products and four matrix-vector products per iteration, given \eqref{eqn:hered_pos_def} and \eqref{eqn:post_mean_equiv}. Uncertainty calibration outlined in \Cref{sec:prior_covariance_class} adds between \(\mathcal{O}(1)\) and \(\mathcal{O}(k^3)\) per iteration depending on the sophistication of the scheme. Already for moderate \(n\) this is dominated by the iteration cost. In practice, means and covariances do not need to be formed in memory. Instead they can be evaluated lazily as linear operators \(\vv \mapsto \mL \vv\), if \(\mS\) and \(\mY\) are stored. This results in space complexity \(\mathcal{O}(kn)\).

\subsection{Related Work}
Numerical methods for the solution of linear systems have been studied in great detail since the last century. Standard texts \cite{Saad1992, Trefethen1997, Nocedal2006, Golub2013} give an in-depth overview. The conjugate gradient method  recovered by our algorithm for a specific choice of prior was introduced by \citet{Hestenes1952}.
Recently, randomization has been exploited to develop improved algorithms for large-scale problems arising from machine learning \citep{Drineas2016, Gittens2016}. The key difference to our approach is that we do not rely on sampling to approximate large-scale matrices, but instead perform probabilistic inference.
Our approach is based on the framework of probabilistic numerics \cite{Hennig2015a, Oates2019} and
is a natural continuation of previous work on probabilistic linear solvers. In historical order, \citet{Hennig2013} provided a probabilistic interpretation of Quasi-Newton methods, which was expanded upon in \cite{Hennig2015}. This work also relied on the symmetric matrix-variate Gaussian as used in our paper. \citet{Bartels2016} estimate numerical error in approximate least-squares solutions by using a probabilistic model. More recently, \citet{Cockayne2019} proposed a Bayesian conjugate gradient method performing inference on the solution of the system. This was connected to the matrix-based view by \citet{Bartels2019}.

\section{Prior Covariance Class}
\label{sec:prior_covariance_class}

Having outlined the proposed algorithm, this section derives a prior covariance class which satisfies nearly all desiderata, connects the two modes of prior information and allows for calibration of uncertainty by appropriately choosing remaining degrees of freedom in the covariance. The third desideratum posited that \(\rmA\) and \(\rmH\) should be almost surely positive definite. This evidently does not hold for the matrix-variate Gaussian. However, we can restrict the choice of admissable \(\mW_0^\rmA\) to act like \(\mA\) on \(\operatorname{span}(\mS)\). This in turn induces a positive definite posterior mean.

\begin{proposition}[Hereditary Positive Definiteness \cite{Dennis1977, Hennig2013}]
\label{thm:hereditary_posdef}
Let \(\mA_0 \in \mathbb{R}^{n \times n}_{\textup{sym}}\) be positive definite. Assume the actions \(\mS\) are \(\mA\)-conjugate and \(\mW_0^\rmA \mS = \mY\), then for \(i \in \{0, \dots, k - 1\}\) it holds that \(\mA_{i+1}\) is symmetric positive definite.
\end{proposition}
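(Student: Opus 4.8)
The plan is to substitute the three hypotheses into the closed-form posterior mean \(\mA_k\) from \Cref{sec:inference_framework} and rewrite it as a congruence transform of \(\mA_0\) plus a positive semidefinite correction, from which symmetry and definiteness are immediate. Throughout I write \(\mD := \mS^\top \mA \mS\) and \(\mQ := \mS \mD^{-1} \mS^\top\).

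First I would record what the assumptions buy us. Because the observations are noise-free, \(\mY = \mA\mS\); because the actions are \(\mA\)-conjugate (and nonzero), \(\mD\) is diagonal with strictly positive entries, hence invertible. Using \(\mW_0^{\rmA}\mS = \mY\) gives \(\mS^\top \mW_0^{\rmA} \mS = \mS^\top \mY = \mS^\top \mA \mS = \mD\), so that \(\mU = \mW_0^{\rmA}\mS(\mS^\top \mW_0^{\rmA}\mS)^{-1} = \mY\mD^{-1} = \mA\mS\mD^{-1}\) is well defined, and \(\mDelta_0^{\rmA} = \mY - \mA_0\mS = (\mA-\mA_0)\mS\). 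Plugging these into \(\mA_k = \mA_0 + \mDelta_0^{\rmA}\mU^\top + \mU(\mDelta_0^{\rmA})^\top - \mU\mS^\top\mDelta_0^{\rmA}\mU^\top\), every term becomes a product of \(\mA\), \(\mA_0\) and \(\mQ\); using the idempotence-like identity \(\mQ\mA\mQ = \mQ\) (immediate from \(\mS^\top \mA \mS = \mD\)) the quartic term collapses and the expression telescopes to
\[
\mA_k = (\mI_n - \mA\mQ)\,\mA_0\,(\mI_n - \mQ\mA) + \mA\mQ\mA .
\]

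From here the conclusion can be read off. Since \((\mI_n - \mQ\mA)^\top = \mI_n - \mA\mQ\), the first summand is a congruence \(\mM^\top \mA_0 \mM\) with \(\mM = \mI_n - \mQ\mA\), hence symmetric and positive semidefinite because \(\mA_0\) is positive definite; and \(\mA\mQ\mA = (\mD^{-1/2}\mS^\top\mA)^\top(\mD^{-1/2}\mS^\top\mA)\succeq 0\) because \(\mD \succ 0\). So \(\mA_k\) is symmetric positive semidefinite. For strict definiteness, suppose \(\vv^\top \mA_k \vv = 0\); then both summands vanish at \(\vv\), forcing \(\mM\vv = 0\) (as \(\mA_0 \succ 0\)) and \(\mS^\top\mA\vv = 0\). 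The first gives \(\vv = \mQ\mA\vv \in \operatorname{span}(\mS)\), say \(\vv = \mS\vw\); then \(0 = \mS^\top\mA\vv = \mD\vw\) forces \(\vw = 0\), so \(\vv = 0\). Finally, the claim for each \(\mA_{i+1}\) with \(i \in \{0,\dots,k-1\}\) follows verbatim after replacing \(\mS,\mY\) by their first \(i+1\) columns, which still satisfy all three hypotheses.

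The main obstacle is the bookkeeping in the second step: expanding the rank-structured update, applying \(\mQ\mA\mQ=\mQ\), and recognizing the telescoping into the clean ``congruence plus positive semidefinite'' form; everything after that is routine linear algebra. A secondary subtlety worth flagging is the implicit nonzero-action/linear-independence condition needed to invert \(\mD\) and \(\mS^\top\mW_0^{\rmA}\mS\), which is exactly what \(\mA\)-conjugacy of the actions supplies.
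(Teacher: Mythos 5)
Your proof is correct, but it takes a genuinely different route from the paper. You substitute the hypotheses into the batch posterior mean and collapse it to the identity
\(\mA_k = (\mI_n - \mA\mQ)\,\mA_0\,(\mI_n - \mQ\mA) + \mA\mQ\mA\) with \(\mQ = \mS(\mS^\top\mA\mS)^{-1}\mS^\top\), which checks out: with \(\mU = \mA\mS\mD^{-1}\) and \(\mDelta_0^{\rmA} = (\mA-\mA_0)\mS\) the four terms of the update expand exactly to \(\mA_0 - \mA_0\mQ\mA - \mA\mQ\mA_0 + \mA\mQ\mA_0\mQ\mA + \mA\mQ\mA\), and the congruence-plus-Gram structure then gives symmetry, positive semidefiniteness, and (via your kernel argument) strict definiteness in one stroke. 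The paper instead argues by induction on the iteration, invoking Theorem~7.5 of \citet{Dennis1977} (which reduces positive definiteness of \(\mA_{i+1}\) to \(\det(\mA_{i+1})>0\) given \(\mA_i \succ 0\)), applying the matrix determinant lemma to the recursive rank-2 update, and simplifying until the condition becomes \(0 < \vs_{i+1}^\top\mA\vs_{i+1}\). Your version is more self-contained -- it needs no external determinant criterion and no induction over updates, and it exposes the structural fact that the posterior mean is a congruence of the prior mean on \(\operatorname{span}(\mS)^{\perp_{\mA}}\) plus the \(\mA\)-orthogonal projection part of \(\mA\) itself; the paper's version stays closer to the classical quasi-Newton hereditary-positive-definiteness literature it cites and works step by step, which is the form in which the result is usually quoted. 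Two small points you already flag correctly: the invertibility of \(\mD\) rests on the actions being nonzero (supplied by \(\mA\)-conjugacy plus \(\mA\succ 0\)), and the statement for intermediate \(i\) follows by applying the same batch formula to the leading \(i+1\) columns, which inherit all three hypotheses. One cosmetic caveat: the symbol \(\mQ\) is already used in the supplement for the \(\operatorname{svec}\)--\(\vec\) translation matrix, so you should rename your projector if this were to be merged into the paper.
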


Prior information about the linear system usually concerns the matrix \(\mA\) itself and not its inverse, but the inverse is needed to infer the solution $\vx_*$ of the linear problem. So a way to translate between a Gaussian distribution on \(\rmA\) and \(\rmH\) is crucial. Previous works generally committed to either one view or the other, potentially discarding available information. Below, we show that the two correspond, if we allow ourselves to constrain the space of possible models. We impose the following condition.

\begin{definition}
Let \(\mA_i\) and \(\mH_i\) be the means of \(\rmA\) and \(\rmH\) at step \(i\). We say a prior induces \emph{posterior correspondence} if
\(\mA_i^{-1} = \mH_i\) for all \(0 \leq i \leq k\). If only \(\mA_i^{-1}\mY = \mH_i \mY,\) \emph{weak posterior correspondence} holds.
\end{definition}
The following theorem establishes a sufficient condition for weak posterior correspondence. For an asymmetric prior model one can establish the stronger notion of posterior correspondence. A proof is included in the supplements.

\begin{theorem}[Weak Posterior Correspondence]
\label{thm:weak_post_correspondence}
Let \(\mW_0^\rmH \in \mathbb{R}^{n \times n}_{\textup{sym}}\) be positive definite. Assume \(\mH_0 = \mA_0^{-1}\), and that $\mW_0^\rmA, \mA_0, \mW_0^\rmH$ satisfy
\begin{align}
\label{eqn:hered_pos_def}
\mW_0^\rmA \mS &=\mY,\\
\label{eqn:post_mean_equiv}
\mS^\top (\mW_0^\rmA \mA_0^{-1} - \mA\mW_0^\rmH) &= \bm{0},
\end{align}
then weak posterior correspondence holds for the symmetric Kronecker covariance.
\end{theorem}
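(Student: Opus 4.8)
The cleanest route I see is to reduce weak posterior correspondence to two ``secant identities'' for the posterior means and then read off the claim; an equivalent induction on the step index $i$ (with base case $\mH_0=\mA_0^{-1}$, which is a hypothesis) tracking the rank-two mean updates also works, and is where \eqref{eqn:post_mean_equiv} does its job. Before the main computation I would fix two structural facts. First, since $\mH_0,\mW_0^\rmH$ are symmetric positive definite, \Cref{thm:conj_directions_method} guarantees that the actions $\vs_1,\dots,\vs_k$ generated by \Cref{alg:problinsolve} are $\mA$-conjugate; hence $\mS$ has full column rank, $\mS^\top\mA\mS$ is positive diagonal, and $\mY=\mA\mS$ also has full column rank, so the Gram matrices $\mS^\top\mW_0^\rmA\mS$ and $\mY^\top\mW_0^\rmH\mY$ occurring in the posterior formulas of \Cref{sec:inference_framework} are invertible. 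Second, $\mA$-conjugacy of the actions together with \eqref{eqn:hered_pos_def} lets me apply \Cref{thm:hereditary_posdef} at every sub-step, so each posterior mean $\mA_i$ is symmetric positive definite and in particular invertible; this is what makes the target identity $\mA_i^{-1}\mY=\mH_i\mY$ well posed.

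For the computation (writing $\mS,\mY$ for the actions and observations gathered through step $i$), I would substitute \eqref{eqn:hered_pos_def} into the closed-form posterior mean of the $\rmA$-model: it collapses $\mU=\mW_0^\rmA\mS(\mS^\top\mW_0^\rmA\mS)^{-1}$ to $\mY(\mS^\top\mA\mS)^{-1}$ and $\mDelta_0^\rmA$ to $\mY-\mA_0\mS$. Using $\mU^\top\mS=\mI$ and the fact that $(\mDelta_0^\rmA)^\top\mS=\mS^\top\mDelta_0^\rmA$ (because $\mS^\top\mY=\mS^\top\mA\mS$ is symmetric), right-multiplying the update by $\mS$ reduces to the secant identity $\mA_i\mS=\mY$, i.e.\ $\mA_i^{-1}\mY=\mS$. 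The mirror-image argument on the $\rmH$-model --- interchanging the roles of $\mS$ and $\mY$ and of $\mA_0$ and $\mH_0$ --- gives $\mH_i\mY=\mS$ in the same way. Combining, $\mA_i^{-1}\mY=\mS=\mH_i\mY$ for $0\le i\le k$, which is weak posterior correspondence. In the inductive variant one instead expresses $\mA_i$ (resp.\ $\mH_i$) as a symmetric rank-two update of $\mA_{i-1}$ (resp.\ of $\mH_{i-1}=\mA_{i-1}^{-1}$, by the inductive hypothesis) and inverts the $\rmA$-update by Sherman--Morrison--Woodbury; then \eqref{eqn:post_mean_equiv} together with $\mH_0=\mA_0^{-1}$ is exactly the condition aligning the update ranges $\operatorname{range}(\mW_0^\rmA\mS)$ and $\operatorname{range}(\mW_0^\rmH\mY)$ under the identification $\rmH\leftrightarrow\rmA^{-1}$, so that the Woodbury inverse of $\mA_i$ and the $\rmH$-model update agree when applied to $\mY$.

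The step I expect to be the real obstacle is the symmetric-Kronecker bookkeeping in the covariance updates combined with making the Woodbury cancellation precise --- i.e.\ checking that the inverse of the rank-updated $\mA_i$ coincides with the rank-updated $\mH_i$ on all of $\operatorname{span}(\mY)$ and not merely on the trivially-handled $\operatorname{span}(\mS)$, which is precisely where \eqref{eqn:post_mean_equiv} must be used in the right place. A smaller but necessary point is verifying that the hypotheses of \Cref{thm:hereditary_posdef} (conjugacy and $\mW_0^\rmA\mS=\mY$) are inherited at every intermediate step, not just at $i=k$, so that $\mA_i^{-1}$ exists throughout; everything else is routine manipulation of the Gaussian conditioning formulas.
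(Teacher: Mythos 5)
Your first (``clean'') route is correct and is genuinely different from---and considerably shorter than---the paper's proof. The paper establishes \(\mH_k\mY=\mS\) from the subspace identity \eqref{eqn:subspace_equivalency} exactly as you do, but for the other half it explicitly inverts the rank-\(2k\) posterior mean update \(\mA_k=\mA_0+\mU_\rmA\mV_\rmA^\top+\mV_\rmA\mU_\rmA^\top\) via the block matrix inversion lemma, which costs four technical lemmata (\Cref{lem:symm_posterior_inverse,lem:symm_posterior_inverse_blockterms,lem:symm_posterior_inverse_blockterms_inverse,lem:post_mean_inverse_blockinversion}) before \(\mA_k^{-1}\mY=\mS\) drops out. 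You instead observe that the same subspace identity gives \(\mA_k\mS=\mY\), so that invertibility of \(\mA_k\) alone yields \(\mA_k^{-1}\mY=\mS=\mH_k\mY\); and you secure invertibility correctly by chaining \Cref{thm:conj_directions_method} (conjugate actions, from \(\mH_0,\mW_0^\rmH\) s.p.d.) into \Cref{thm:hereditary_posdef} (which needs exactly \eqref{eqn:hered_pos_def}). This is more elementary, and it exposes something the paper's computation obscures: condition \eqref{eqn:post_mean_equiv} is not actually needed for the weak-correspondence conclusion---its role in the paper's intermediate lemmata is to make the Woodbury bookkeeping close, and the closed form for \(\mA_k^{-1}\) obtained there is what the longer route buys. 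Three small points to tighten: (i) you should state explicitly that \(\mA_0\) is symmetric positive definite (a standing assumption of \Cref{sec:prior_covariance_class}, needed both for \(\mH_0=\mA_0^{-1}\) to be s.p.d.\ in \Cref{thm:conj_directions_method} and as a hypothesis of \Cref{thm:hereditary_posdef}); (ii) the identity \(\mA_i\mS=\mY\) does not require \eqref{eqn:hered_pos_def} at all, only symmetry of \(\mA_0\), since \(\mU^\top\mS=\mI\) and \(\mDelta_0^\top\mS=\mS^\top\mDelta_0\) hold for any admissible \(\mW_0^\rmA\); and (iii) the ``real obstacle'' you anticipate---symmetric-Kronecker bookkeeping in the covariance updates---never arises on your route, because only the posterior \emph{means} enter the statement; that difficulty belongs to your second, inductive Woodbury variant, which is essentially the paper's proof.
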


Given the above, let \(\mA_0\) be a symmetric positive definite prior mean and \(\mH_0 = \mA_0^{-1}\). Define the orthogonal projection matrices \(\mP_{\mS^\perp} = \mI-\mS( \mS^\top \mS)^{-1}\mS^\top \in \mathbb{R}^{n \times n}_{\text{sym}}\) and \(\mP_{\mY^\perp} = \mI - \mY(\mY^\top \mY)^{-1}\mY^\top  \in \mathbb{R}^{n \times n}_{\text{sym}}\) mapping to the spaces \(\operatorname{span}(\mS)^\perp\) and \(\operatorname{span}(\mY)^\perp\). We propose the following prior covariance class given by the prior covariance factors of the \(\rmA\) and \(\rmH\) view
\begin{equation}
\label{eqn:prior_cov_class}
\begin{aligned}
\mW_0^\rmA &= \mA \mS(\mS^\top \mA \mS)^{-1} \mS^\top \mA +\mP_{\mS^\perp}\mPhi \mP_{\mS^\perp},\\
\mW_0^\rmH &= \mA_0^{-1}\mY(\mY^\top \mA_0^{-1} \mY)^{-1} \mY^\top \mA_0^{-1} + \mP_{\mY^\perp} \mPsi \mP_{\mY^\perp},
\end{aligned}
\end{equation}
where \(\mPhi \in \mathbb{R}^{n \times n}\) and \(\mPsi \in \mathbb{R}^{n \times n}\) are degrees of freedom. This choice of covariance class satisfies \Cref{thm:conj_directions_method}, \Cref{thm:hereditary_posdef}, \Cref{thm:weak_post_correspondence} and for a scalar mean also \Cref{thm:connection_cg}. Therefore, it produces symmetric realizations, has symmetric positive semi-definite means, it links the matrix and the inverse view and at any given time only needs access to \(\vv \mapsto \mA \vv\) not \(\mA\) itself. It is also compatible with a preconditioner by simply transforming the given linear problem. 

This class can be interpreted as follows. The derived covariance factor \(\mW_0^\rmA\) acts like \(\mA\) on the space \(\operatorname{span}(\mS)\) explored by the algorithm. On the remaining space its uncertainty is determined by the degrees of freedom in \(\mPhi\). Likewise, our best guess for \(\mA^{-1}\) is \(\mA_0^{-1}\) on the space spanned by \(\mY\). On the orthogonal space \(\operatorname{span}(\mY)^\perp\) the uncertainty is determined by \(\mPsi\). Note that the prior depends on actions and observations collected during a run of \Cref{alg:problinsolve}, hence one might call this an empirical Bayesian approach. This begs the question how the algorithm is realizable for the proposed prior \eqref{eqn:prior_cov_class} given its dependence on future data. Notice that the posterior mean in \Cref{sec:inference_framework} only depends on \(\mW_0^\rmA \mS = \mY\) \emph{not} on \(\mW_0^\rmA\) alone. Using \cref{eqn:prior_cov_class}, at iteration \(i\) we have \(\mW_0^\rmA \mS_{1:i} = \mY_{1:i}\), i.e. the observations made up to this point. Similar reasoning applies for the inverse. Now, the posterior covariances do depend on \(\mW_0^\rmA\), respectively \(\mW_0^\rmH\) alone, but prior to convergence we only require \(\tr(\operatorname{Cov}[\rvx])\) for the stopping criterion. We show in \Cref{sec:stopping_criteria} under the assumptions of \Cref{thm:connection_cg} how to compute this at any iteration \(i\) independent of future actions and observations.

\paragraph{Uncertainty Calibration}

Generally the actions of \Cref{alg:problinsolve} identify eigenpairs \((\lambda_i, \vv_i)\) in descending order of \(\lambda_i \vv_i^\top \vr_0\) which is a well-known behavior of CG (see eqn. 5.29 in \citep{Nocedal2006}). In part, since this dynamic of the underlying Krylov subspace method is not encoded in the prior, the solver in its current form is typically miscalibrated (see also \cite{Cockayne2019}). While this non-linear information is challenging to include in the Gaussian framework, we can choose \(\mPhi\) and \(\mPsi\) in \eqref{eqn:prior_cov_class} to empirically calibrate uncertainty. This can be interpreted as a form of hyperparameter optimization similar to optimization of kernel parameters in GP regression.

We would like to encode prior knowledge about the way \(\mA\) and \(\mH\) act in the respective orthogonal spaces \(\operatorname{span}(\mS)^\perp\) and \(\operatorname{span}(\mY)^\perp\). For the Rayleigh quotient \(R(\mA, \vv) = (\vv^\top \mA \vv)(\vv^\top \vv)^{-1}\) it holds that \(\lambda_{\min}(\mA) \leq R(\mA, \vv) \leq \lambda_{\max}(\mA)\). Hence for vectors \(\vv\) lying in the respective null spaces of \(\mS\) and \(\mY\) our uncertainty should be determined by the not yet explored eigenvalues \(\lambda_{k+1}, \dots, \lambda_{n}\) of \(\mA\) and \(\mH\). Without prior information about the eigenspaces, we choose \(\mPhi = \phi \mI\) and \(\mPsi = \psi \mI\). If a priori we know the respective spectra, a straightforward choice is
\begin{equation*}
\phi = \psi^{-1} = \frac{1}{n-k}\sum_{i=k+1}^n \lambda_i(\mA).
\end{equation*}
In the absence of prior spectral information we can make use of already collected quantities during a run of \Cref{alg:problinsolve}. We build a one-dimensional regression model \(p(\ln R_i \mid \mY, \mS)\) for the \(\ln\)-Rayleigh quotient \(\ln R(\mA, \vs_i)\) given actions \(\vs_i\). Such a model can then encode the well studied behaviour of CG, whose Rayleigh coefficients rapidly decay at first, followed by a slower continuous decay \citep{Nocedal2006}. \Cref{fig:rayleigh_regression} illustrates this approach using a GP regression model. At convergence, we use the prediction of the Rayleigh quotient for the remaining \(n-k\) dimensions by choosing
\begin{equation*}
\phi = \psi^{-1}=\exp(\frac{1}{n-k} \sum_{i=k+1}^n \mathbb{E}[\ln R_i \mid \rmA, \mS]),
\end{equation*}
i.e. uncertainty about actions in \(\operatorname{span}(\mS)^\perp\) is calibrated to be the average Rayleigh quotient as an approximation to the spectrum. Depending on the application a simple or more complex model may be useful. For large problems, where generally \(k \ll n\), more sophisticated schemes become computationally feasible. However, these do not necessarily need to be computationally demanding due to the simple nature of this one-dimensional regression problem with few data. For example, approximate \cite{Karvonen2016} or even exact GP regression \cite{Solin2018} is possible in \(\mathcal{O}(k)\) using a Kalman filter.

\begin{figure}
  \begin{minipage}{\textwidth}
  \begin{minipage}{0.42\textwidth}
  \begin{figure}[H]
	\centering
	\includegraphics[width=\textwidth]{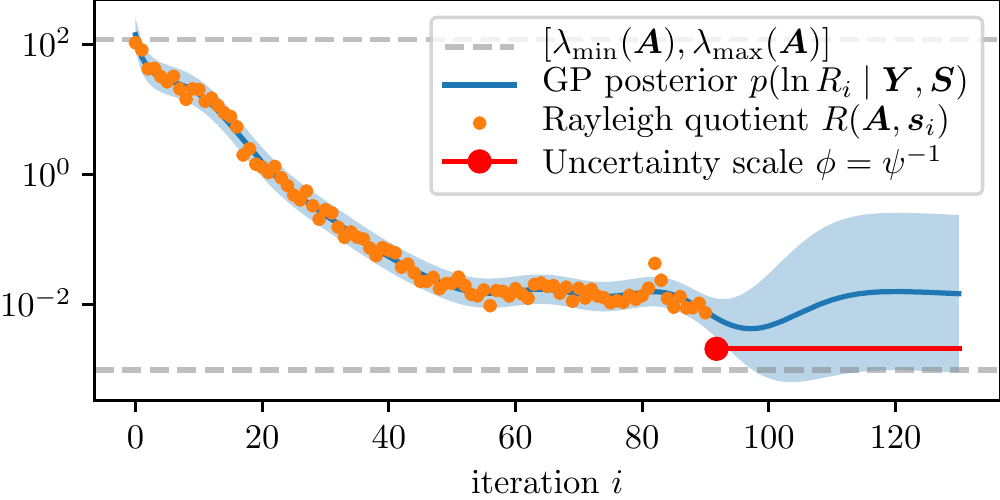}
	\caption{\textit{Rayleigh regression.} Uncertainty calibration via GP regression on \(\{\ln R(\mA, \vs_i)\}_{i=1}^k\) after \(k=91\) iterations of \Cref{alg:problinsolve} on an $n=1000$ dimensional M\'atern32 kernel matrix inversion problem. The degrees of freedom \(\phi = \psi^{-1} > 0\) are set based on the average predicted Rayleigh quotient for the remaining \(n-k = 909\) dimensions.
	\label{fig:rayleigh_regression}}
  \end{figure}
  \end{minipage}
  \hfill
  \begin{minipage}{0.56\textwidth}
  \vspace{-0.9em}
	\begin{table}[H]
	\caption{\textit{Uncertainty calibration for kernel matrices.} Monte Carlo estimate \(\bar{w}\approx \mathbb{E}_{\vx_*}[w(\vx_*)]\) measuring calibration given $10^5/n$ sampled linear problems of the form $(\mK + \varepsilon^2 \mI)\vx_* = \vb$ for each kernel and calibration method. For \(\bar{w} \approx 0\) the solver is well calibrated, for \(\bar{w} \gg 0\) underconfident and for \(\bar{w} \ll 0\) overconfident.\label{tab:unc_calib_test}}
		{\small
		\centering
		\begin{tabular}{lc*{4}{S[table-format=2.2, table-number-alignment=center]}}
			\toprule
			Kernel & $n$ & {none} & {Rayleigh} & {$\varepsilon^2$} & {$\overline{\lambda}_{k+1:n}$}\\
			\midrule
			Mat\' ern32 	& $10^2$		& -5.99	&  -0.24 &  0.32 &  0.09\\
			Mat\' ern32 	& $10^3$		& -1.93	&	7.53 &  4.26 &  4.19\\
			Mat\' ern32 	& $10^4$		&  3.87	&  17.16 &  8.48 &  8.47\\
			Mat\' ern52 	& $10^2$		& -7.84	&  -1.01 & -0.76 & -0.80\\
			Mat\' ern52 	& $10^3$		& -4.63	&	1.43 & -0.80 & -0.81\\
			Mat\' ern52 	& $10^4$		& -4.34	&  10.81 &  0.80 &  0.80\\
			RBF 	  		& $10^2$		& -7.53	&  -0.70 & -0.84 & -0.87\\
			RBF 			& $10^3$		& -4.94	&   6.60 &  0.77 &  0.77\\
			RBF 			& $10^4$		&  0.14	&  21.32 &  2.92 &  2.92\\
			\bottomrule
		\end{tabular}}
	\end{table}
  \end{minipage}
  \end{minipage}
\end{figure}

\section{Experiments}
\label{sec:experiments}
This section demonstrates the functionality of \Cref{alg:problinsolve}. We choose some -- deliberately simple -- example problems from machine learning and scientific computation, where the solver can be used to quantify uncertainty induced by finite computation, solve multiple consecutive linear systems, and propagate information between problems.

\paragraph{Gaussian Process Regression}
GP regression \citep{Rasmussen2006} infers a latent function \(f : \mathbb{R}^N \rightarrow \mathbb{R}\) from data \(\mD=(\mX, \vy)\), where \(\mX \in \mathbb{R}^{n \times N}\) and \(\vy \in \mathbb{R}^n\). Given a prior \(p(f) = \mathcal{GP}(f; 0, k)\) with kernel \(k\) for the unknown function \(f\), the posterior mean and marginal variance at \(m\) new inputs \(\tilde{\vx} \in \mathbb{R}^{N \times m}\) are \(\mathbb{E}[\tilde{\vf}] = \tilde{\vk}^\top (\mK + \varepsilon^2 \mI)^{-1} \vy\) and \(\mathbb{V}[\tilde{\vf}] = k(\tilde{\vx}, \tilde{\vx}) - \tilde{\vk}^\top (\mK + \varepsilon^2 \mI)^{-1} \tilde{\vk},\) where \(\mK = k(\mX, \mX) \in \mathbb{R}^{n \times n}\) is the Gram matrix of the kernel and \(\tilde{\vk} = k(\mX, \tilde{\vx}) \in \mathbb{R}^{n \times m}\). The bulk of computation during prediction arises from solving the linear system \((\mK + \varepsilon^2 \mI)\vz = \vb\) for some right-hand side \(\vb \in \mathbb{R}^n\) repeatedly. When using a probabilistic linear solver for this task, we can quantify the uncertainty arising from finite computation as well as the belief of the solver about the shape of the GP at a set of not yet computed inputs. \Cref{fig:GP_regression} illustrates this. In fact, we can estimate the marginal variance of the GP without solving the linear system again by multiplying \(\tilde{\vk}\) with the estimated inverse of \(\mK + \varepsilon^2 \mI\). In large-scale applications, we can trade off computational expense for increased uncertainty arising from the numerical approximation and quantified by the probabilistic linear solver. By assessing the numerical uncertainty arising from not exploring the full space, we can judge the quality of the estimated GP mean and marginal variance.

\begin{figure}
	\centering
	\includegraphics[width=\columnwidth]{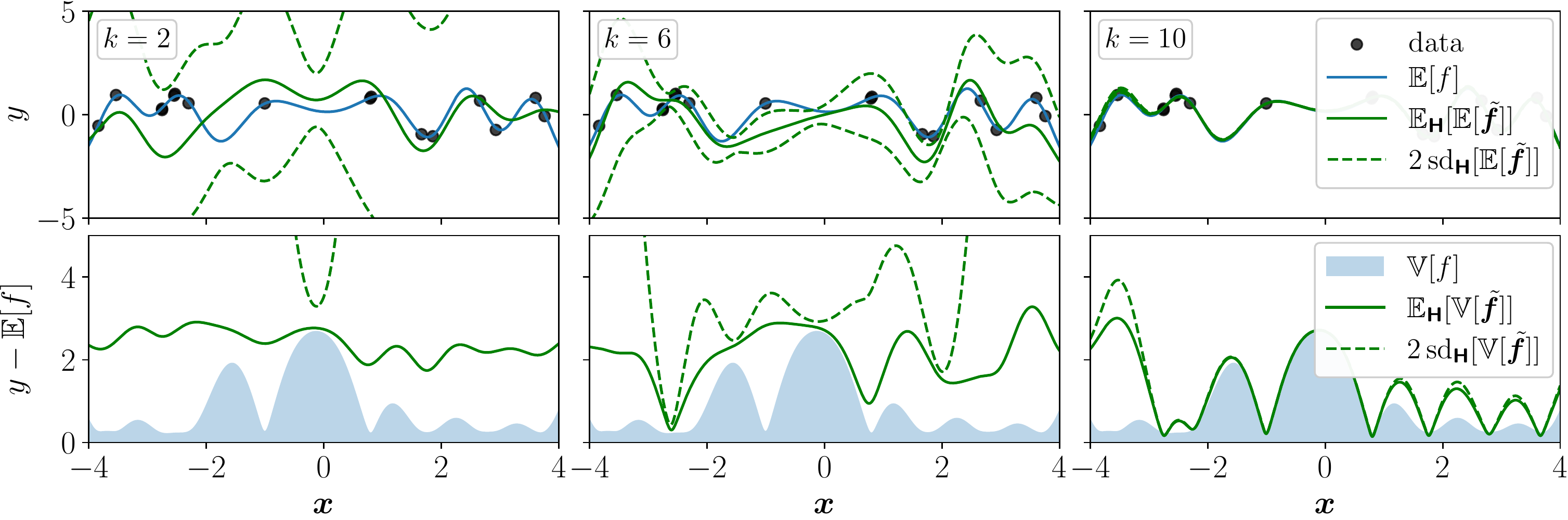}
	\caption{\textit{Numerical uncertainty in GP inference.} Computing posterior mean and covariance of a GP regression using a PLS. \emph{Top:} GP mean for a toy data set ($n=16$) computed with increasing number of iterations \(k\) of \Cref{alg:problinsolve}. The numerical estimate of the GP mean approaches the true mean. Note that the numerical variance is different from the marginal variance of the GP. \emph{Bottom:} GP variance and estimate of GP variance with numerical uncertainty. The GP variance estimate is computed using the estimated inverse from computing \(\mathbb{E}[\tilde{\vf}]\) \emph{without any additional solver iterations}.\label{fig:GP_regression}}
\end{figure}

\paragraph{Kernel Gram Matrix Inversion}
Consider a linear problem \(\mK \vx_* = \vb\), where \(\mK\) is generated by a Mercer kernel. For a \(\nu\)-times continuously differentiable kernel the eigenvalues \(\lambda_n(\mK)\) decay approximately as \(\abs{\lambda_n} \in \mathcal{O}(n^{-\nu - \frac{1}{2}})\) \cite{Weyl1912}. We can make use of this generative prior information by specifying a parametrized prior mean \(\mu(n) = \ln(\evtheta_0' n^{-\evtheta_1}) = \evtheta_0 - \evtheta_1 \ln(n)\) for the \(\ln\)-Rayleigh quotient model. Typically, such Gram matrices are ill-conditioned and therefore \(\mK' = \mK + \varepsilon^2 \mI\) is used instead, implying \(\lambda(\mK')_i \geq \varepsilon^2\). In order to assess calibration we apply various differentiable kernels to the airline delay dataset from January 2020 \cite{Airline2020}. We compute the \(\ln\)-ratio statistic \(w(\vx_*) = \frac{1}{2} \ln(\tr(\operatorname{Cov}[\rvx])) - \ln(\norm{\vx_* - \mathbb{E}[\rvx]}_2)\) for no calibration, calibration via Rayleigh quotient GP regression using \(\mu(n)\) as a prior mean, calibration by setting \(\phi=\varepsilon^2\) and calibration using the average spectrum \(\phi= \overline{\lambda}_{k+1:n}\). The average \(\bar{w}\) for \(10^5 / n\) randomly sampled test problems is shown in \Cref{tab:unc_calib_test}.\footnote{We decrease the number of samples with the dimension because forming \emph{dense} kernel matrices in memory and computing their eigenvalues becomes computationally prohibitive -- \emph{not} because of the cost of our solver.} Without any calibration the solver is generally overconfident. All tested calibration procedures reverse this, resulting in more cautious uncertainty estimates. We observe that Rayleigh quotient regression overcorrects for larger problems. This is due to the fact that its model correctly predicts \(\mK\) to be numerically singular from the dominant Rayleigh quotients, however it misses the information that the spectrum of \(\mK'\) is bounded from below by \(\varepsilon^2\). If we know the (average) of the remaining spectrum, significantly better calibration can be achieved, but often this information is not available. Nonetheless, since in this setting the majority of eigenvalues satisfy \(\lambda(\mK')_i \approx \varepsilon^2\) by choosing \(\phi=\psi^{-1} = \varepsilon^2\), we can get to the same degree of calibration. Therefore, we can improve the solver's uncertainty calibration at constant cost \(\mathcal{O}(1)\) per iteration. For more general problems involving Gram matrices without damping we may want to rely on Rayleigh regression instead.

\paragraph{Galerkin's Method for PDEs}
In the spirit of applying machine learning approaches to problems in the physical sciences and vice versa \citep{Carleo2019}, we use \Cref{alg:problinsolve} for the approximate solution of a PDE via Galerkin's method \cite{Fletcher1984}. Consider the Dirichlet problem for the Poisson equation given by
\begin{equation*}
\begin{cases}
-\Delta u(x,y) = f(x,y) &(x,y) \in \operatorname{int}\Omega\\
u(x,y) = u_{\partial \Omega}(x,y) &(x,y) \in \partial \Omega
\end{cases}
\end{equation*}
where \(\Omega\) is a connected open region with sufficiently regular boundary and \(u_{\partial \Omega} : \partial \Omega \rightarrow \mathbb{R}\) defines the boundary conditions. One obtains an approximate solution by projecting the weak formulation of the PDE to a finite dimensional subspace. This results in the \emph{Galerkin equation} \(\mA \vu=\vf\), i.e. a linear system where \(\mA\) is the Gram matrix of the associated bilinear form. \Cref{fig:PDE_discretization} shows the induced uncertainty on the solution of the Dirichlet problem for \(f(x,y) = 15\) and \(u_{\partial \Omega}(x,y) = (x^2 - 2y)^2 (1 + \sin(2 \pi x))\). The mesh and corresponding Gram matrix were computed using \textsc{FEniCS} \citep{Alnes2015}. We can exploit two properties of \Cref{alg:problinsolve} in this setting. First, if we need to solve multiple related problems \((\mA_j, \vf_j)_j\), by solving a single problem we obtain an estimate of the solution to all other problems. We can successively use the posterior over the inverse as a prior for the next problem. This approach is closely related to subspace recycling in numerical linear algebra \cite{Parks2006,Roos2017}. Second, suppose we first compute a solution in a low-dimensional subspace corresponding to a coarse discretization for computational efficiency. We can then leverage the estimated solution to extrapolate to an (adaptively) refined discretization based on the posterior uncertainty. In machine learning lingo these two approaches can be viewed as forms of \emph{transfer learning}.

\begin{figure*}
	\centering
    \begin{subfigure}[b]{0.19\textwidth}
        \centering
        \includegraphics[width=\textwidth]{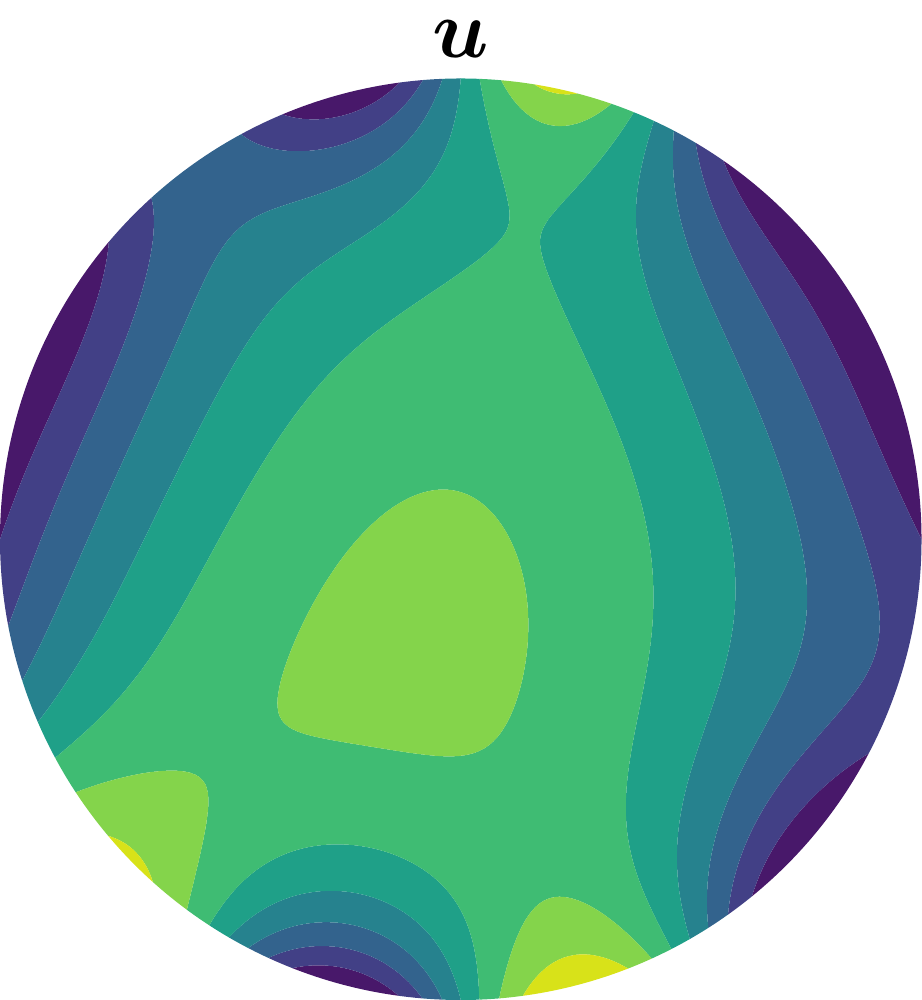}
        \caption{Ground truth\label{fig:PDE_fine_mesh}}
    \end{subfigure}%
    ~
    \begin{subfigure}[b]{0.285\textwidth}
        \centering
        \includegraphics[width=\textwidth]{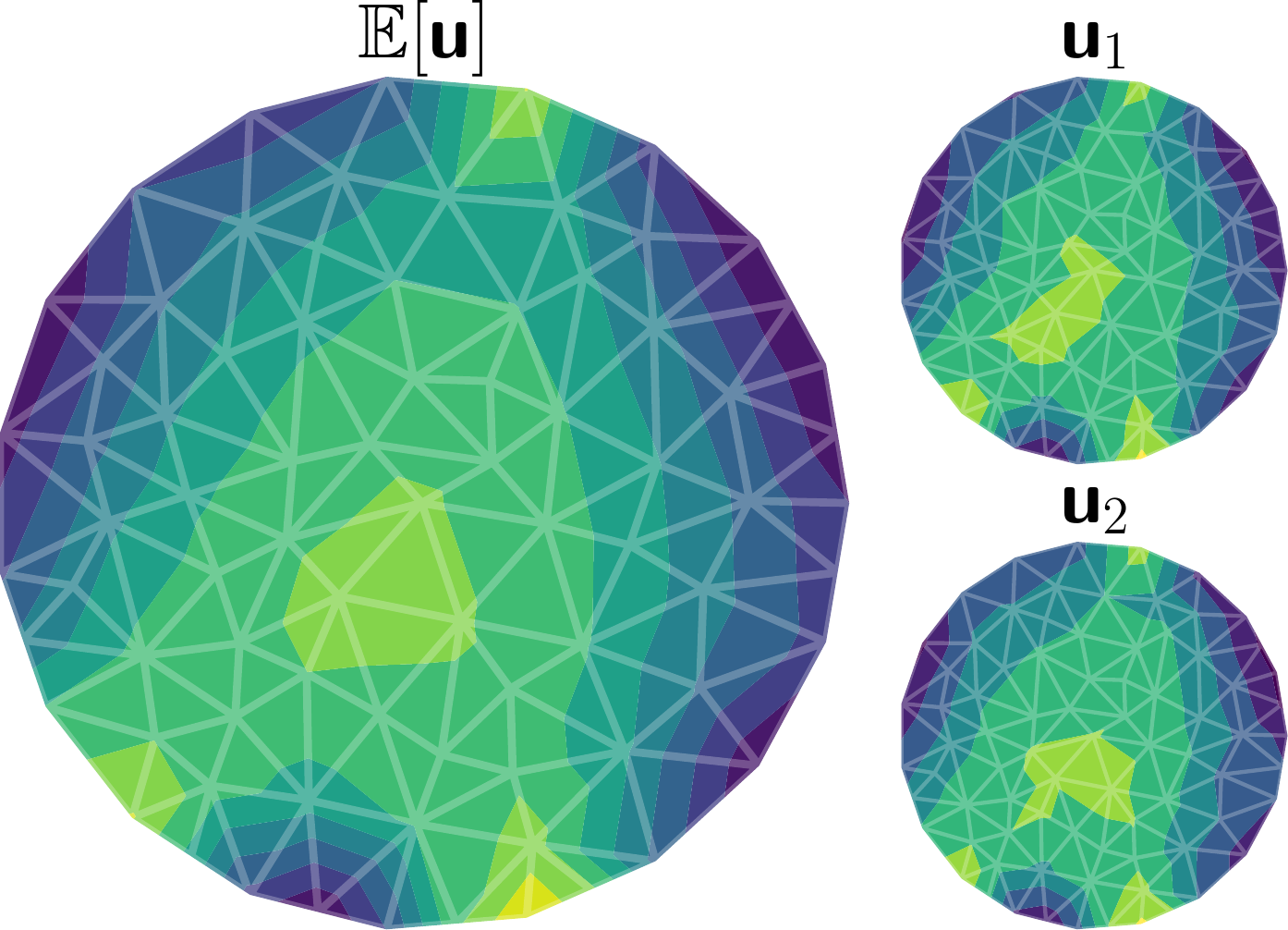}
        \caption{Solution mean \& samples\label{fig:PDE_mean_samples}}
    \end{subfigure}%
    ~
    \begin{subfigure}[b]{0.235\textwidth}
        \centering
        \includegraphics[width=\textwidth]{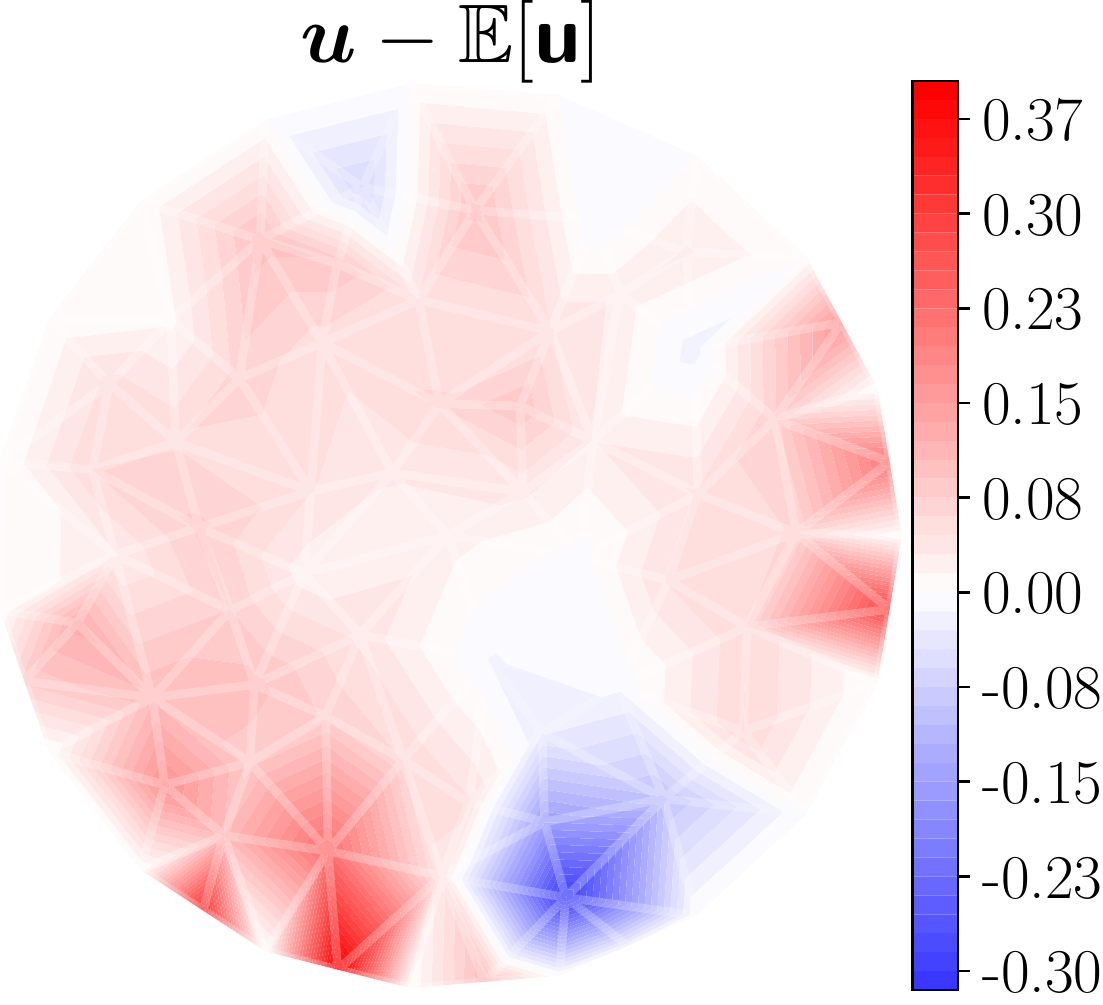}
        \caption{Signed error\label{fig:PDE_error}}
    \end{subfigure}
        ~
    \begin{subfigure}[b]{0.235\textwidth}
        \centering
        \includegraphics[width=\textwidth]{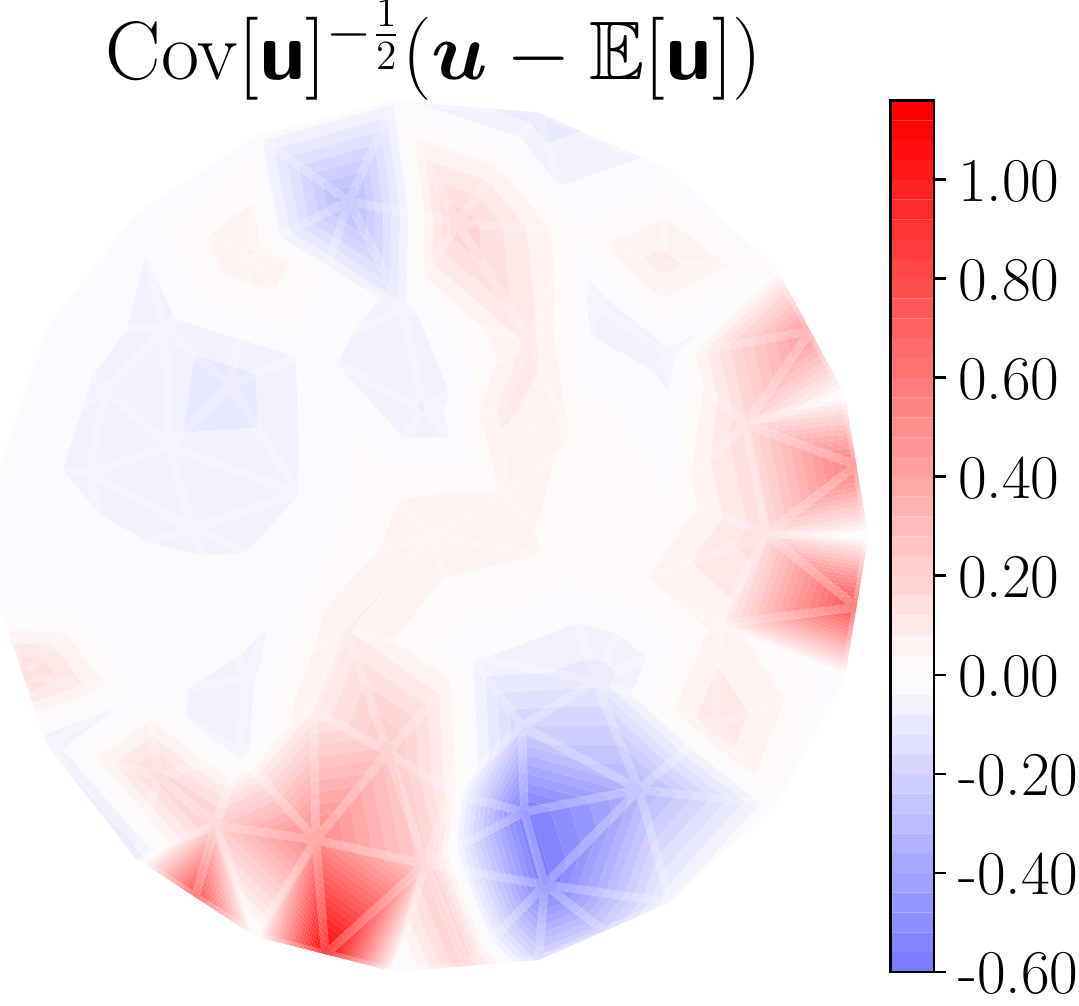}
        \caption{Uncertainty calibration\label{fig:PDE_uncertainty}}
    \end{subfigure}
	\caption{\textit{Solving the Dirichlet problem with a probabilistic linear solver.} \Cref{fig:PDE_fine_mesh,fig:PDE_mean_samples} show the ground truth and mean of the solution computed with \Cref{alg:problinsolve} after \(k=23\) iterations along with samples from the posterior. The posterior on the coarse mesh can be used to assess uncertainty about the solution on a finer mesh. The signed error computed on the coarse mesh in \Cref{fig:PDE_error} shows that the approximation is better near the top boundary of \(\Omega\). Given perfect uncertainty calibration, \Cref{fig:PDE_uncertainty} represents a sample from \(\mathcal{N}(\bm{0}, \mI)\). The apparent structure in the plot and smaller than expected deviations in the upper part of \(\Omega\) indicate the conservative confidence estimate of the solver.\label{fig:PDE_discretization}}
\end{figure*}

\section{Conclusion}
In this work, we condensed a line of previous research on probabilistic linear algebra into a self-contained algorithm for the solution of linear problems in machine learning. We proposed first principles to constrain the space of possible generative models and derived a suitable covariance class. In particular, our proposed framework incorporates prior knowledge on the system matrix or its inverse and performs inference for both in a \emph{consistent} fashion. Within our framework we identified parameter choices that recover the iterates of conjugate gradients in the mean, but add calibrated uncertainty around them in a computationally lightweight manner. To our knowledge our solver, available as part of the \href{https://github.com/probabilistic-numerics/probnum}{\textsc{ProbNum}} package, is the first practical implementation of this kind. In the final parts of this paper we showcased applications like kernel matrix inversion, where prior spectral information can be used for uncertainty calibration and outlined example use-cases for propagation of numerical uncertainty through computations. Naturally, there are also limitations remaining. While our theoretical framework can incorporate noisy matrix-vector product evaluations into its inference procedure via a Gaussian likelihood, practically \emph{tractable} inference in the inverse model is more challenging. Our solver also opens up new research directions. In particular, our outlined regression model on the Rayleigh quotient may lead to a probabilistic model of the eigenspectrum. Finally, the matrix-based view of probabilistic linear solvers could inform probabilistic approaches to matrix decompositions, analogous to the way Lanczos methods are used in the classical setting.

\section*{Broader Impact}

Our research on probabilistic linear solvers is primarily aimed at members of the machine learning field working on uncertainty estimation which use linear solvers as part of their toolkit. We are convinced that numerical uncertainty induced by finite computational resources is a key missing component to be quantified in machine learning settings. By making numerical uncertainty explicit like our solver does, holistic probabilistic models incorporating all sources of uncertainty become possible. In fact, we hope that this line of work stimulates further research into numerical linear algebra for machine learning, a topic that has been largely considered solved by the community.

This is first and foremost a methods paper aiming to improve the quantification of numerical uncertainty in linear problems. While methodological papers may seem far removed from application and questions of ethical and societal impact, this is not the case. Precisely due to the general nature of the problem setting, the linear solver presented in this work is applicable to a broad range of applications, from regression on flight data, to optimization in robotics, to the solution of PDEs in meteorology. The flip-side of this potential impact is that arguably, down the line, methodological research suffers from dual use more than any specialized field. While we cannot control the use of a probabilistic linear solver due to its general applicability, we have tried, to the best of our ability, to ensure it performs as intended.

We are hopeful that no specific population group is put at a disadvantage through this research. We are providing an open-source implementation of our method and of all experiments contained in this work. Therefore anybody with access to the internet is able to retrieve and reproduce our findings. In this manner we hope to adress the important issues of accessibility and reproducibility.

\begin{ack}
The authors gratefully acknowledge financial support by the European Research Council through ERC StG Action 757275 / PANAMA; the DFG Cluster of Excellence ``Machine Learning - New Perspectives for Science'', EXC 2064/1, project number 390727645; the German Federal Ministry of Education and Research (BMBF) through the T\" ubingen AI Center (FKZ: 01IS18039A); and funds from the Ministry of Science, Research and Arts of the State of Baden-W\" urttemberg.

JW is grateful to the International Max Planck Research School for Intelligent Systems (IMPRS-IS) for support.

We thank the reviewers for helpful comments and suggestions. JW would also like to thank Alexandra Gessner and Felix Dangel for a careful reading of an earlier version of this manuscript.
\end{ack}

\bibliographystyle{unsrtnat}
\bibliography{references}

\clearpage

\renewcommand{\thesection}{S\arabic{section}}
\renewcommand{\thesubsection}{\thesection.\arabic{subsection}}
\renewcommand{\theHsection}{S\arabic{section}} 

\newcommand{\beginsupplementary}{%
	\setcounter{table}{0}
	\renewcommand{\thetable}{S\arabic{table}}
	\setcounter{figure}{0}
	\renewcommand{\thefigure}{S\arabic{figure}}
	\setcounter{section}{0}
	\renewcommand{\theequation}{S\arabic{equation}} 
	\renewcommand{\thetheorem}{S\arabic{theorem}} 
	\renewcommand{\thedefinition}{S\arabic{definition}} 
	\renewcommand{\thecorollary}{S\arabic{corollary}} 
	\renewcommand{\theproposition}{S\arabic{proposition}} 
	\renewcommand{\theremark}{S\arabic{remark}} 
	\renewcommand{\thelemma}{S\arabic{lemma}} 
	\renewcommand{\theexample}{S\arabic{example}} 
}
\beginsupplementary

This supplement complements the paper \emph{Probabilistic Linear Solvers for Machine Learning} and is structured as follows. \Cref{sec:probabilistic_modelling} explains the approach of probabilistic numerics to model (deterministic) numerical problems probabilistically in more depth. \Cref{sec:kronecker_products} introduces different variants of Kronecker products used to define matrix-variate normal distributions in \Cref{sec:matrixvariate_normal}. \Cref{sec:problinsolvers} details the matrix-based inference procedure of probabilistic linear solvers based on matrix-vector product observations. It also contains some more explanation regarding prior construction and stopping criteria. \Cref{sec:theoretical_properties_proofs} and  \Cref{sec:proofs_prior_covariance_class} outline theoretical results from the paper and properties of the proposed covariance class, in particular detailed proofs. Finally, \Cref{sec:pde_discretization} provides some background for the application of probabilistic linear solvers to the solution of discretized partial differential equations. To provide a clear exposition to the reader in some sections we restate results from the literature. References referring to sections, equations or theorem-type environments within this document are tagged with `S', while references to, or results from the main paper are stated as is.

\paragraph{Preliminaries and Notation} We consider the linear system \(\mA \vx_* = \vb\), where \(\mA \in \mathbb{R}_{\textup{sym}}^{n \times n}\) is symmetric positive definite. The random variables \(\rmA, \rmH\) and \(\rvx\) model the linear operator \(\mA\), its inverse \(\mH=\mA^{-1}\) and the solution \(\vx_*\). \Cref{alg:problinsolve} chooses actions \(\mS = [\vs_1, \dots, \vs_k]  \in \mathbb{R}^{n \times k}\) given by its policy \(\pi(\vs \mid \rmA, \rmH, \rvx, \mA, \vb)\) and computes observations \(\mY = [\vy_1, \dots, \vy_k]  \in \mathbb{R}^{n \times k}\) given by a linear projection \(\vy_i = \mA \vs_i\) in each iteration \(0 < i \leq k\).

\section{Probabilistic Modelling of Deterministic Problems}
\label{sec:probabilistic_modelling}

At first glance it might seem counterintuitive to frame a numerical problem in the language of probability theory. After all, when considering the exact problem \(\mA \vx_* = \vb\) all quantities involved \(\mA, \vx_*,\) and \(\vb\) are deterministic. However, the distribution of the random variables \(\rmA, \rmH\) and \(\rvx\) represents \emph{epistemic uncertainty} arising from finite computational resources. With a finite budget only a limited amount of information can be obtained about \(\mA\) (e.g. via matrix-vector products). In particular, for a sufficiently large problem a priori the inverse \(\mH=\mA^{-1}\) and the solution \(\vx_*\), while deterministic and computable in finite time, are not known. This uncertainty about the inverse is captured by the prior distribution of \(\rmH\). In the Bayesian framework the belief about the inverse \(\rmH\) is then iteratively updated  given new observations \(\vy_i = \mA \vs_i\).

The motivation for also estimating \(\mA\) becomes clear if one considers the following. Usually in large-scale applications, the matrix \(\mA\) is never actually formed in memory due to computational constraints. Instead only the matrix-vector product \(\vv \mapsto \mA \vv\) is available. Therefore without further computation, the value of any given matrix entry \(\mA_{ij}\) is in fact uncertain. Further, generally other properties of the matrix \(\mA\) such as its eigenspectrum are also not readily available. The probabilistic framework provides a principled way of incorporating prior knowledge about \(\mA\) and makes assumptions about the problem explicit. Relating the prior model \(\rmA\) and \(\rmH\) is important here to allow \Cref{alg:problinsolve} to take such prior information into account in its policy. Finally, the strongest argument for a model \(\rmA\) may yet be the incorporation of noise. Suppose we only have access to \(\vy_i= (\mA +\mE_i) \vs_i\) with additive noise \(\mE_i \). This is a common occurrence in application, where the linear system to be solved arises from an approximation itself or if \(\mA\) is constructed from data. Concrete examples are batched empirical risk minimization problems or stochastic quadratic optimization. In this setting the probabilistic linear solver must estimate the true \(\mA\) via its observations.

The application of probabilistic inference to numerical problems goes back well into the last century \cite{Larkin1969, Diaconis1988, OHagan1992} and has recently seen a resurgence in research interest in the form of \emph{probabilistic numerics}. Overviews discussing motivations and historical perspectives can be found in \citet{Hennig2015a} and \citet{Oates2019}. \citet{Hennig2015} gives additional insight into the statistical interpretation of linear systems.

\section{The Kronecker Product and its Variants}
\label{sec:kronecker_products}

We will now introduce different types of Kronecker products needed for constructing covariances for matrix-variate distributions. In order to transfer results from probabilistic modelling of vector-variate random variables to the matrix-variate case, we need two types of vectorization operations, i.e. bijections between spaces of matrices and vector spaces.

Let \(\vec : \mathbb{R}^{m \times n} \rightarrow \mathbb{R}^{mn}\), denote the \emph{column-wise stacking operator} \cite{Henderson1981}, defined as
\begin{equation*}
\vec(\mX) = (\emX_{11}, \emX_{21}, \dots, \emX_{m1}, \emX_{12}, \dots, \emX_{mn})^\top \in \R^{mn}.
\end{equation*}
Further, define \(\operatorname{svec} : \mathbb{R}_{\textup{sym}}^{n \times n} \rightarrow \mathbb{R}^{\frac{1}{2}n(n+1)}\), the \emph{column-wise symmetric stacking operator} \cite{Alizadeh1998} given by
\begin{equation*}
\operatorname{svec}(\mX) = (\emX_{11}, \sqrt{2}X_{21}, \dots, \sqrt{2}\emX_{n1}, X_{22}, \sqrt{2}\emX_{32}, \dots, \sqrt{2}X_{n2}, \dots, X_{nn})^\top \in \R^{\frac{1}{2}n (n+1)}.
\end{equation*}
To translate between the two representations following \citet{Schaecke2013} we also define the matrix \(\mQ \in \mathbb{R}^{\frac{1}{2}n(n+1) \times n^2}\) such that for all symmetric matrices \(\mX \in \mathbb{R}_{\textup{sym}}^{n \times n}\), we have \(\mQ \vec(\mX) = \operatorname{svec}(\mX)\) and \(\vec(\mX) = \mQ^\top \operatorname{svec}(\mX)\). Note, that \(\mQ\) has orthonormal rows, i.e. \(\mQ \mQ^\top = \mI\). For convenience we also name the inverse operations \(\mat \coloneqq \vec^{-1}\) and \(\smat \coloneqq \svec^{-1}\).

\subsection{Kronecker Product}
We make extensive use of Kronecker-type structures for covariance matrices of matrix-variate distributions in this paper. The \emph{Kronecker product} \(\mA \otimes \mB\) \cite{Loan2000} of two matrices \(\mA \in \R^{m_1 \times n_1}\) and \(\mB \in \R^{m_2 \times n_2}\) is given by
\begin{equation*}
\mA \otimes \mB =
\begin{pmatrix}
\mA_{11} \mB & \dots & \mA_{1n_1} \mB\\
\vdots & \ddots & \vdots\\
\mA_{m_11} \mB & \dots & \mA_{m_1n_1} \mB\\
\end{pmatrix} \in \R^{(m_1m_2) \times (n_1 n_2)}
\end{equation*}
The Kronecker product satisfies the characteristic property
\begin{equation}
\label{eqn:kronecker_charprop}
(\mA \otimes \mB) \vec(\mX) = \vec(\mB \mX \mA^\top),
\end{equation}
for \(\mX \in \R^{n_2 \times n_1}\). Characteristic properties of Kronecker-type products are useful to turn matrix equations into vector equations. We state a set of properties of the Kronecker product next without proof. More detail on Kronecker products can be found in \citet{Loan2000}.

\begin{proposition}[Properties of the Kronecker Product \cite{Loan2000}]
\label{prop:kron_properties}
The Kronecker product satisfies the following identities:
\begin{align}
\exists \mA, \mB : \mA \otimes \mB &\neq \mB \otimes \mA\\
(\mA \otimes \mB)^\top  &= \mA^\top \otimes \mB^\top\\
(\mA \otimes \mB)^{-1}  &= \mA^{-1} \otimes \mB^{-1}\\
(\mA + \mB) \otimes \mC &= \mA \otimes \mC + \mB \otimes \mC\\
(\mA \otimes \mB)(\mC \otimes \mD) &= (\mA \mC) \otimes (\mB \mD)\label{eqn:kron_products}\\
\trace (\mA \otimes \mB) &= \trace (\mA) \trace (\mB)\\
\mA \in \mathbb{R}_{\textup{sym}}^{m \times m}, \mB \in \mathbb{R}_{\textup{sym}}^{n \times n} &\implies \mA \otimes \mB \in \mathbb{R}_{\textup{sym}}^{mn \times mn}\\
\mA \otimes \mB = (\mL_\mA \mL_\mA^\top) \otimes (\mL_\mB \mL_\mB^\top) &= (\mL_\mA \otimes \mL_\mB)(\mL_\mA^\top \otimes \mL_\mB^\top)\\
\mA \otimes \mB = (\mU_\mA \mLambda_\mA \mU_\mA^\top) \otimes (\mU_\mB \mLambda_\mB \mU_\mB^\top) &= (\mU_\mA \otimes \mU_\mB)(\mLambda_\mA \otimes \mLambda_\mB)(\mU_\mA^\top \otimes \mU_\mB^\top)
\end{align}
\end{proposition}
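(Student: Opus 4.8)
The plan is to prove the mixed-product rule \eqref{eqn:kron_products}, $(\mA \otimes \mB)(\mC \otimes \mD) = (\mA\mC) \otimes (\mB\mD)$, first, and then obtain essentially every other identity on the list as a short corollary. The cleanest route to \eqref{eqn:kron_products} is via the characteristic property \eqref{eqn:kronecker_charprop}: for every conformable $\mX$,
\begin{align*}
(\mA \otimes \mB)(\mC \otimes \mD)\vec(\mX) &= (\mA \otimes \mB)\vec(\mD \mX \mC^\top) = \vec(\mB \mD \mX \mC^\top \mA^\top)\\
&= \vec\bigl((\mB\mD)\mX(\mA\mC)^\top\bigr) = \bigl((\mA\mC) \otimes (\mB\mD)\bigr)\vec(\mX).
\end{align*}
Since $\vec$ is a bijection onto $\R^{n_1 n_2}$, the two matrices act identically on a spanning set and are therefore equal. (Alternatively one can verify \eqref{eqn:kron_products} by direct block multiplication of the defining block forms; the characteristic-property argument just avoids the index bookkeeping.)

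From \eqref{eqn:kron_products} the rest follows quickly. The inverse identity is immediate: $(\mA\otimes\mB)(\mA^{-1}\otimes\mB^{-1}) = (\mA\mA^{-1})\otimes(\mB\mB^{-1}) = \mI\otimes\mI = \mI$, and symmetrically on the other side, so $(\mA\otimes\mB)^{-1} = \mA^{-1}\otimes\mB^{-1}$. The transpose identity $(\mA\otimes\mB)^\top = \mA^\top\otimes\mB^\top$ I would read straight off the block form: transposing the block matrix whose $(i,j)$ block is $\mA_{ij}\mB$ yields the block matrix whose $(i,j)$ block is $\mA_{ji}\mB^\top$, which is exactly $\mA^\top\otimes\mB^\top$; symmetry preservation is then the special case $\mA=\mA^\top$, $\mB=\mB^\top$. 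Distributivity $(\mA+\mB)\otimes\mC = \mA\otimes\mC + \mB\otimes\mC$ is likewise immediate from the block form, since the $(i,j)$ block of the left-hand side is $(\mA_{ij}+\mB_{ij})\mC$. For the trace identity, the diagonal blocks of $\mA\otimes\mB$ are $\mA_{ii}\mB$, hence $\trace(\mA\otimes\mB) = \sum_i \mA_{ii}\trace(\mB) = \trace(\mA)\trace(\mB)$. The Cholesky-type factorization is the instance of \eqref{eqn:kron_products} with $\mC = \mL_\mA^\top$, $\mD = \mL_\mB^\top$, and the eigendecomposition factorization is two successive applications of \eqref{eqn:kron_products} (first peeling off the $\mU$-factors, then the $\mLambda$-factors). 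Non-commutativity only requires exhibiting a single counterexample, which is readily done with a pair of $2\times2$ matrices.

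There is no genuinely hard step here; it is all routine. If anything, the one point deserving a line of care is the passage from ``$(\mA\otimes\mB)(\mC\otimes\mD)$ and $(\mA\mC)\otimes(\mB\mD)$ agree on every $\vec(\mX)$'' to ``they are equal as matrices'', which relies on $\vec$ being a bijection, together with keeping all four matrices conformable throughout (so that $\mA\mC$, $\mB\mD$, and every product appearing via \eqref{eqn:kronecker_charprop} are defined). Beyond that bookkeeping, the proof is a short cascade out of \eqref{eqn:kron_products}.
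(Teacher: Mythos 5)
Your proof is correct. Note that the paper states \Cref{prop:kron_properties} explicitly \emph{without proof}, deferring to \citet{Loan2000}, so there is no in-paper argument to compare against; your route --- establishing the mixed-product rule \eqref{eqn:kron_products} from the characteristic property \eqref{eqn:kronecker_charprop} (itself taken as given in the paper) and then cascading the inverse, Cholesky and eigendecomposition identities from it, with the transpose, distributivity, trace and symmetry claims read off the block form --- is the standard textbook derivation, and each step, including the passage from agreement on all $\vec(\mX)$ to equality of matrices via bijectivity of $\vec$, checks out.
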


\subsection{Box Product}
The \emph{box product} \(\mA \boxtimes \mB \in \R^{(m_1m_2) \times (n_1n_2)}\) can be defined via its characteristic property
\begin{equation}
\label{eqn:box_charprop}
(\mA \boxtimes \mB) \vec(\mY) = \vec(\mB \mY^\top \mA^\top)
\end{equation}
for \(\mY \in \R^{n_1 \times n_2}\). See also \citet{Olsen2012} for details.

\begin{proposition}[Properties of the Box Product \cite{Olsen2012}]
\label{prop:box_properties}
The box product satisfies the following identities:
\begin{align}
\exists \mA, \mB : \mA \boxtimes \mB &\neq \mB \boxtimes \mA\\
(\mA \boxtimes \mB)^\top  &= \mB^\top \boxtimes \mA^\top\\
(\mA \boxtimes \mB)^{-1}  &= \mB^{-1} \boxtimes \mA^{-1}\\
(\mA + \mB) \boxtimes \mC &= \mA \boxtimes \mC + \mB \boxtimes \mC\\
(\mA \boxtimes \mB)(\mC \boxtimes \mD) &= (\mA \mD) \otimes (\mB \mC)\\
(\mA \boxtimes \mB)(\mC \otimes \mD) &= (\mA \mD) \boxtimes (\mB \mC)\\
(\mA \otimes \mB)(\mC \boxtimes \mD) &= (\mA \mC) \boxtimes (\mB \mD)\\
\trace (\mA \boxtimes \mB) &= \trace (\mA \mB)
\end{align}
\end{proposition}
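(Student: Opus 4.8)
The plan is to derive all eight identities from the single defining relation \eqref{eqn:box_charprop}. The organizing principle is that the vectors $\vec(\mY)$ span the whole domain, so a matrix of the appropriate shape \emph{is} $\mA\boxtimes\mB$ as soon as it satisfies \eqref{eqn:box_charprop}; the analogous statement holds for $\otimes$ via \eqref{eqn:kronecker_charprop}. Hence for each identity I would apply both sides to an arbitrary $\vec(\mY)$, rewrite using the relevant characteristic property, simplify the resulting matrix products, and read off the answer — exactly the strategy behind \Cref{prop:kron_properties}. The one thing to watch throughout is conformability: each of $\mA,\mB,\mC,\mD$ lives in its own rectangular space, and $\boxtimes$ interchanges its two arguments under transposition.

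Six of the eight identities fall to this template directly. Distributivity, $(\mA+\mB)\boxtimes\mC=\mA\boxtimes\mC+\mB\boxtimes\mC$, is immediate from linearity of $\vec$ inside $\vec(\mC\mY^\top(\mA+\mB)^\top)$. For the three mixed-product rules I would peel off the inner factor with its characteristic property and then apply the outer one; for instance
\[
(\mA\boxtimes\mB)(\mC\boxtimes\mD)\vec(\mY)=\vec\!\big(\mB(\mD\mY^\top\mC^\top)^\top\mA^\top\big)=\vec\!\big((\mB\mC)\mY(\mA\mD)^\top\big),
\]
which by \eqref{eqn:kronecker_charprop} equals $((\mA\mD)\otimes(\mB\mC))\vec(\mY)$; the cases $(\mA\boxtimes\mB)(\mC\otimes\mD)$ and $(\mA\otimes\mB)(\mC\boxtimes\mD)$ are computed in the same way and land on a box characteristic property instead of a Kronecker one. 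The inverse rule is then a one-line corollary of the mixed-product rule: $(\mA\boxtimes\mB)(\mB^{-1}\boxtimes\mA^{-1})=(\mA\mA^{-1})\otimes(\mB\mB^{-1})=\mI$. Non-commutativity only needs one explicit counterexample, e.g.\ with $\mA$ a single off-diagonal $2\times2$ unit and $\mB=\mI_2$, for which \eqref{eqn:box_charprop} already gives $\mA\boxtimes\mB\neq\mB\boxtimes\mA$.

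The transpose rule and the trace rule need a little more. For the transpose I would pass to adjoints via $\langle\vec(\mU),\vec(\mV)\rangle=\tr(\mU^\top\mV)$: one computes $\langle\vec(\mX),(\mA\boxtimes\mB)\vec(\mY)\rangle=\tr(\mX^\top\mB\mY^\top\mA^\top)$ and $\langle(\mB^\top\boxtimes\mA^\top)\vec(\mX),\vec(\mY)\rangle=\tr(\mB^\top\mX\mA\mY)$, and these agree by cyclic invariance of the trace together with $\tr(\mM)=\tr(\mM^\top)$, so $(\mA\boxtimes\mB)^\top$ and $\mB^\top\boxtimes\mA^\top$ coincide. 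For the trace rule I would expand over the standard basis $\{\vec(\mE_{ab})\}$, with $\mE_{ab}$ the matrix units:
\[
\tr(\mA\boxtimes\mB)=\sum_{a,b}\langle\vec(\mE_{ab}),\vec(\mB\mE_{ab}^\top\mA^\top)\rangle=\sum_{a,b}\tr(\mE_{ab}^\top\mB\mE_{ab}^\top\mA^\top),
\]
and a short index calculation collapses each summand to $\mA_{ba}\mB_{ab}$, whose double sum is exactly $\tr(\mA\mB)$.

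The hardest part will be purely bookkeeping rather than anything conceptual. Keeping the block sizes straight is slightly delicate because transposition swaps the two arguments of $\boxtimes$, and because $\tr(\mA\boxtimes\mB)$ being well defined does not by itself force $\mA\mB$ to be square — one must record that $\mA$ and $\mB$ are mutually transpose-shaped for the trace identity. The only other place that needs genuine care is the matrix-unit index computation behind the trace identity, which has to be carried out precisely enough to witness the collapse to $\mA_{ba}\mB_{ab}$; everything else is mechanical once the ``verify the characteristic property'' principle is in hand.
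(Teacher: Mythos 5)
Your proposal is correct. The paper itself offers no proof of this proposition --- it is stated as a catalogue of known identities with a pointer to \citet{Olsen2012} --- so there is nothing to compare against line by line; but your strategy of verifying each identity by applying both sides to an arbitrary \(\vec(\mY)\) and invoking the characteristic properties \eqref{eqn:box_charprop} and \eqref{eqn:kronecker_charprop} is exactly the technique the paper uses for the analogous mixed identities in \Cref{cor:mixed_kronecker_identities}, and all of your individual computations (the adjoint argument for the transpose rule, the right-inverse deduction from the mixed-product rule, and the matrix-unit expansion giving \(\sum_{a,b}\mA_{ba}\mB_{ab}=\trace(\mA\mB)\)) check out, including the shape bookkeeping you flag for the trace identity.
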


\subsection{Symmetric Kronecker Product}

The \emph{symmetric Kronecker product} \(\mA \ostimes \mB\) of two square matrices \(\mA, \mB \in \R^{n \times n}\) is defined via its characteristic property for \(\mX \in \mathbb{R}_{\textup{sym}}^{n \times n}\) as
\begin{equation}
\label{eqn:symm_kronecker_charprop}
(\mA \ostimes \mB) \operatorname{svec}(\mX) = \frac{1}{2}\operatorname{svec}(\mB \mX \mA^\top + \mA \mX \mB^\top)
\end{equation}
or equivalently
\begin{equation*}
\mA \ostimes \mB= \frac{1}{2}\mQ(\mA \otimes \mB + \mB \otimes \mA) \mQ^\top.
\end{equation*}

\begin{proposition}[Properties of the Symmetric Kronecker Product \cite{Alizadeh1998,Schaecke2013}]
\label{prop:symkron_properties}
The symmetric Kronecker product satisfies the following identities:

\begin{align}
\mA \ostimes \mB &= \mB \ostimes \mA\\
(\mA \ostimes \mB)^\top  &= \mA^\top \ostimes \mB^\top\\
(\mA \ostimes \mA)^{-1}  &= \mA^{-1} \ostimes \mA^{-1}\\
(\mA + \mB) \ostimes \mC &= \mA \ostimes \mC + \mB \ostimes \mC\\
(\mA \ostimes \mB)(\mC \ostimes \mD) &= \frac{1}{2}(\mA \mC \ostimes \mB \mD + \mA \mD \ostimes \mB \mC)\\
\mA \in \mathbb{R}_{\textup{sym}}^{n \times n}, \mB \in \mathbb{R}_{\textup{sym}}^{n \times n} &\implies \mA \ostimes \mB \in \mathbb{R}_{\textup{sym}}^{\frac{1}{2}n (n+1) \times \frac{1}{2}n (n+1)}\\
\mA \ostimes \mA = (\mL_\mA \mL_\mA^\top) \ostimes (\mL_\mA \mL_\mA^\top) &= (\mL_\mA \ostimes \mL_\mA)(\mL_\mA^\top \ostimes \mL_\mA^\top)\\
\mA \ostimes \mA = (\mU_\mA \mLambda_\mA \mU_\mA^\top) \ostimes (\mU_\mA \mLambda_\mA \mU_\mA^\top) &= (\mU_\mA \ostimes \mU_\mA)(\mLambda_\mA \ostimes \mLambda_\mA)(\mU_\mA^\top \ostimes \mU_\mA^\top)
\end{align}
Note, that the symmetric Kronecker product represented as a \(\frac{1}{2}n (n+1) \times \frac{1}{2}n (n+1)\) matrix is in general not symmetric.
\end{proposition}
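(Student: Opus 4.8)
The plan is to reduce every identity in the proposition to the single matrix representation $\mA \ostimes \mB = \tfrac12\mQ(\mA\otimes\mB + \mB\otimes\mA)\mQ^\top$ recorded just above, and then to invoke the ordinary Kronecker identities of \Cref{prop:kron_properties} together with two structural facts about $\mQ$: the orthonormality $\mQ\mQ^\top = \mI$, and the identity $\mQ^\top\mQ = \tfrac12(\mI + \mT)$, where $\mT$ is the perfect-shuffle (commutation) permutation characterised by $\mT\vec(\mX) = \vec(\mX^\top)$. From the latter one extracts the commutation relations $\mT(\mA\otimes\mB) = (\mB\otimes\mA)\mT$, $\mQ\mT = \mQ$, and $\mT\mQ^\top = \mQ^\top$, which are precisely the statements that $\mQ$ is supported on the symmetric subspace; these will carry the entire argument.

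The ``structural'' identities are then immediate. Commutativity $\mA\ostimes\mB = \mB\ostimes\mA$ is read off the symmetrised form; $(\mA\ostimes\mB)^\top = \mA^\top\ostimes\mB^\top$ follows from $(\mA\otimes\mB)^\top = \mA^\top\otimes\mB^\top$ and $(\mQ^\top)^\top = \mQ$; additivity in each argument follows from the distributive law $(\mA+\mB)\otimes\mC = \mA\otimes\mC + \mB\otimes\mC$; and the claim that $\mA\ostimes\mB$ is symmetric when $\mA,\mB\in\R^{n\times n}_{\textup{sym}}$ is just the transpose rule specialised to $\mA^\top = \mA$, $\mB^\top = \mB$. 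These require no real work.

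The crux is the product rule $(\mA\ostimes\mB)(\mC\ostimes\mD) = \tfrac12(\mA\mC\ostimes\mB\mD + \mA\mD\ostimes\mB\mC)$. I would expand the left side as $\tfrac14\mQ(\mA\otimes\mB+\mB\otimes\mA)\,\mQ^\top\mQ\,(\mC\otimes\mD+\mD\otimes\mC)\mQ^\top$, substitute $\mQ^\top\mQ = \tfrac12(\mI+\mT)$, move $\mT$ through $\mC\otimes\mD+\mD\otimes\mC$ via the commutation relation (it simply interchanges the two summands), and then absorb it using $\mT\mQ^\top = \mQ^\top$. What survives is $\tfrac14\mQ(\mA\otimes\mB+\mB\otimes\mA)(\mC\otimes\mD+\mD\otimes\mC)\mQ^\top$; multiplying out the inner four products through $(\mA\otimes\mB)(\mC\otimes\mD) = (\mA\mC)\otimes(\mB\mD)$ and regrouping into symmetric parts yields exactly $\tfrac12(\mA\mC\ostimes\mB\mD + \mA\mD\ostimes\mB\mC)$. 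Keeping track of which factor lands on which side, and deploying $\mQ\mT=\mQ$ at the right moment, is the one place where care is needed — this is the main obstacle, and everything else is a corollary.

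Indeed, the inverse rule $(\mA\ostimes\mA)^{-1} = \mA^{-1}\ostimes\mA^{-1}$ follows by setting $\mB=\mA$, $\mC=\mD=\mA^{-1}$ in the product rule, which gives $\mI\ostimes\mI$, and $\mI\ostimes\mI = \mQ\mQ^\top = \mI$ is the identity on $\R^{\frac12 n(n+1)}$. The Cholesky-type factorisation is $(\mL_\mA\ostimes\mL_\mA)(\mL_\mA^\top\ostimes\mL_\mA^\top) = \mL_\mA\mL_\mA^\top\ostimes\mL_\mA\mL_\mA^\top = \mA\ostimes\mA$, again a single application of the product rule; the eigendecomposition factorisation follows by applying the product rule twice, first collapsing $(\mU_\mA\ostimes\mU_\mA)(\mLambda_\mA\ostimes\mLambda_\mA)$ to $\mU_\mA\mLambda_\mA\ostimes\mU_\mA\mLambda_\mA$ and then right-multiplying by $\mU_\mA^\top\ostimes\mU_\mA^\top$. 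Finally, the closing remark — that the $\tfrac12 n(n+1)$-square matrix representing $\mA\ostimes\mB$ need not itself be a symmetric matrix even though the operation is commutative — is settled by exhibiting a small explicit example, which also explains why (2) is phrased with transposes rather than as plain symmetry.
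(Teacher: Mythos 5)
The paper gives no proof of this proposition at all: it is imported verbatim from \citet{Alizadeh1998} and \citet{Schaecke2013} with a pointer to those references, so there is nothing to match your argument against. That said, your proof is correct and self-contained, and it is the standard derivation from the matrix representation \(\mA\ostimes\mB=\tfrac12\mQ(\mA\otimes\mB+\mB\otimes\mA)\mQ^\top\). The one fact you assert without justification is \(\mQ^\top\mQ=\tfrac12(\mI+\mT)\); since the paper only defines \(\mQ\) through its action on symmetric matrices, you should add the one-line argument: \(\mQ\mQ^\top=\mI\) makes \(\mQ^\top\) an isometry whose range is exactly the space of vectorized symmetric matrices, so \(\mQ^\top\mQ\) is the orthogonal projector onto that subspace, which is the symmetrizer \(\tfrac12(\mI+\mT)\) because \(\mT=\mT^\top\), \(\mT^2=\mI\), and the \(+1\)-eigenspace of \(\mT\) is precisely \(\{\vec(\mX):\mX=\mX^\top\}\); the relations \(\mQ\mT=\mQ\) and \(\mT\mQ^\top=\mQ^\top\) then follow. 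With that in place your product-rule computation is sound: \((\mI+\mT)(\mC\otimes\mD+\mD\otimes\mC)\mQ^\top=2(\mC\otimes\mD+\mD\otimes\mC)\mQ^\top\), and expanding the remaining four Kronecker products and regrouping gives \(\tfrac12(\mA\mC\ostimes\mB\mD+\mA\mD\ostimes\mB\mC)\). The inverse rule (using \(\mI\ostimes\mI=\mQ\mQ^\top=\mI\)), the Cholesky and eigendecomposition factorisations, and the transpose, commutativity, bilinearity and symmetry claims all reduce to this as you describe, and a small explicit nonsymmetric \(2\times 2\) example disposes of the closing remark.
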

Further properties can be found in \citet{Alizadeh1998} and \citet{Schaecke2013}. We prove the following technical results for mixed expressions of Kronecker-type products, which we will make use of later.

\begin{corollary}[Mixed Kronecker Product Identities]
\label{cor:mixed_kronecker_identities}
Let \(\mA \in \mathbb{R}_{\textup{sym}}^{n \times n}\), \(\mB, \mC \in \mathbb{R}^{n \times k}\) and \(\mX \in \R^{k \times k}\) such that \((\mC \mX \mB^\top)^\top = \mC \mX \mB^\top\), then it holds that
\begin{align}
\mQ^\top(\mA \ostimes \mA)\mQ(\mB \otimes \mC) \vec(\mX) &= \frac{1}{2}(\mA\mB \otimes \mA\mC + \mA\mC \boxtimes \mA\mB)\vec(\mX)\label{eqn:symkron_kron}\\
(\mB^\top \otimes \mC^\top)\mQ^\top(\mA \ostimes \mA)\mQ &= \frac{1}{2}(\mB^\top \mA \otimes \mC^\top \mA + \mB^\top\mA \boxtimes \mC^\top \mA).\\
(\mB ^\top \otimes \mC ^\top)\mQ^\top(\mA \ostimes \mA)\mQ(\mB  \otimes \mC )\vec(\mX) &= \frac{1}{2}(\mB ^\top \mA \mB  \otimes \mC ^\top \mA \mC  + \mB ^\top \mA \mC  \boxtimes \mC ^\top \mA \mB )\vec(\mX).\label{eqn:kron_symkron_kron}
\end{align}
Now, assume \(\mA\) to be invertible, \(\rank (\mC) = k\) and \(\mY \in \R^{k \times n}\) such that \((\mY \mC)^\top = \mY\mC\), then for
\begin{align*}
\mG &=(\mI_n \otimes \mC^\top)\mQ^\top(\mA \ostimes \mA)\mQ(\mI_n  \otimes \mC )\\
\mG_{\textup{right}}^{-1} &= (2\mA^{-1} - \mC(\mC^\top \mA \mC)^{-1}\mC^\top) \otimes (\mC^\top \mA \mC)^{-1}
\end{align*}
we have \(\mG \mG_{\textup{right}}^{-1}\vec(\mY) = \vec(\mY)\), i.e. \(\mG_{\textup{right}}^{-1}\) is the right inverse of \(\mG\). Finally, for  \(\mD, \mE \in \mathbb{R}^{n \times n}\) and \(\mZ \in \mathbb{R}_{\textup{sym}}^{n \times n}\) such that \((\mE \mA \mZ \mA \mD^\top)^\top = \mE \mA \mZ \mA \mD^\top\), we have
\begin{equation}
\label{eqn:symkron_kron_symkron}
(\mA^\top \ostimes \mA^\top)\mQ(\mD \otimes \mE)\mQ^\top(\mA \ostimes \mA) \svec(\mZ)= (\mA^\top \mD \mA) \ostimes (\mA^\top \mE \mA) \svec(\mZ).
\end{equation}
\end{corollary}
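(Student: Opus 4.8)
The plan is to reduce every line of the corollary to two structural facts about the translation matrix $\mQ$, together with repeated bookkeeping use of the characteristic properties \eqref{eqn:kronecker_charprop}, \eqref{eqn:box_charprop}, \eqref{eqn:symm_kronecker_charprop} and the composition rules of \Cref{prop:kron_properties,prop:box_properties,prop:symkron_properties}. The first fact is that $\mQ^\top \mQ$ is the orthogonal projector onto $\{\vec(\mX) : \mX = \mX^\top\}$: it is symmetric and idempotent because $\mQ \mQ^\top = \mI$, its range is the column space of $\mQ^\top$, which by the defining relation $\mQ^\top \svec(\mX) = \vec(\mX)$ and a dimension count is exactly the symmetric subspace; hence $\mQ^\top \mQ \vec(\mM) = \vec(\tfrac12(\mM + \mM^\top))$, and therefore $\mQ \vec(\mM) = \svec(\tfrac12(\mM + \mM^\top))$ for every $\mM$. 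The second fact is the specialization $\mA \ostimes \mA = \mQ (\mA \otimes \mA) \mQ^\top$, obtained by setting $\mB = \mA$ in $\mA \ostimes \mB = \tfrac12 \mQ(\mA \otimes \mB + \mB \otimes \mA)\mQ^\top$.

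With these, \eqref{eqn:symkron_kron} is a direct push-through: apply $(\mB \otimes \mC)$ to $\vec(\mX)$ to get $\vec(\mC \mX \mB^\top)$, then $\mQ$, then $\mA \ostimes \mA$ (using \eqref{eqn:symm_kronecker_charprop} with both slots equal to $\mA$), then $\mQ^\top$, arriving at $\vec(\mA \cdot \tfrac12(\mC \mX \mB^\top + \mB \mX^\top \mC^\top) \cdot \mA)$; expanding the claimed right-hand side with \eqref{eqn:kronecker_charprop} and \eqref{eqn:box_charprop} produces exactly this, and the computation goes through for every $\mX$, so \eqref{eqn:symkron_kron} is equivalent to the matrix identity $\mQ^\top(\mA \ostimes \mA)\mQ(\mB \otimes \mC) = \tfrac12(\mA\mB \otimes \mA\mC + \mA\mC \boxtimes \mA\mB)$. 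Transposing this, using $(\mX \otimes \mY)^\top = \mX^\top \otimes \mY^\top$, $(\mX \boxtimes \mY)^\top = \mY^\top \boxtimes \mX^\top$ and $(\mA \ostimes \mA)^\top = \mA \ostimes \mA$, yields the second displayed identity; post-multiplying it by $(\mB \otimes \mC)$ and simplifying with \eqref{eqn:kron_products} and the mixed box rule $(\mA \boxtimes \mB)(\mC \otimes \mD) = (\mA\mD) \boxtimes (\mB\mC)$ gives \eqref{eqn:kron_symkron_kron}. The last identity \eqref{eqn:symkron_kron_symkron} is the same calculation in $\svec$-coordinates: $\mQ^\top(\mA \ostimes \mA)\svec(\mZ) = \vec(\mA \mZ \mA^\top)$, then $(\mD \otimes \mE)$ produces $\vec(\mE \mA \mZ \mA^\top \mD^\top)$, then $\mQ$ (which only symmetrizes, harmlessly, by the stated hypothesis) and $\mA^\top \ostimes \mA^\top$ produce $\svec((\mA^\top \mE \mA)\mZ(\mA^\top \mD \mA)^\top)$; since $\mA^\top(\mE \mA \mZ \mA^\top \mD^\top)\mA$ is symmetric and $\mZ$ is symmetric, the two summands of $(\mA^\top \mD \mA) \ostimes (\mA^\top \mE \mA)\svec(\mZ)$ coincide and the factor $\tfrac12$ disappears.

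The one place that calls for a genuine algebraic simplification rather than bookkeeping is the right inverse of $\mG$, and that is where I expect the main effort. Specializing \eqref{eqn:kron_symkron_kron} to $\mB = \mI_n$ gives $\mG \vec(\mX) = \tfrac12 \vec(\mC^\top \mA \mC\, \mX\, \mA + \mC^\top \mA\, \mX^\top\, \mC^\top \mA)$, while $\mG_{\textup{right}}^{-1}\vec(\mY) = \vec(\mX_0)$ with $\mX_0 = (\mC^\top \mA \mC)^{-1}\mY(2\mA^{-1} - \mC(\mC^\top \mA \mC)^{-1}\mC^\top)$ by \eqref{eqn:kronecker_charprop} (the middle factor is symmetric, so equals its transpose). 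Substituting, the first summand telescopes via $\mC^\top \mA \mC\, \mX_0 = \mY(2\mA^{-1} - \mC(\mC^\top \mA \mC)^{-1}\mC^\top)$ to $2\mY - \mY\mC(\mC^\top \mA \mC)^{-1}\mC^\top \mA$, and the second uses the cancellation $\mC^\top \mA(2\mA^{-1} - \mC(\mC^\top \mA \mC)^{-1}\mC^\top) = \mC^\top$ to become $(\mY\mC)^\top (\mC^\top \mA \mC)^{-1}\mC^\top \mA$; invoking the hypothesis $(\mY\mC)^\top = \mY\mC$ the two summands add to $2\mY$, so $\mG \mG_{\textup{right}}^{-1}\vec(\mY) = \vec(\mY)$. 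The only recurring subtlety is to keep track that the symmetrization hidden inside $\mQ$ (equivalently inside $\ostimes$) is neutralized precisely by the stated transpose-invariance hypotheses, so that nothing is lost in the successive pushes-through.
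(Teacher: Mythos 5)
Your proof is correct and follows essentially the same route as the paper's: repeated push-through of the characteristic properties of the Kronecker, box and symmetric Kronecker products together with the \(\mQ\)/\(\mQ^\top\) translation between \(\vec\) and \(\svec\). The only differences are organizational --- you make the symmetrizer property \(\mQ\vec(\mM)=\svec(\tfrac12(\mM+\mM^\top))\) explicit up front, obtain the second identity by transposing the first rather than by a fresh computation, and verify the right inverse by substituting into the action of \(\mG\) on \(\vec(\mY)\) rather than composing the operators as the paper does --- none of which changes the substance.
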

\begin{proof}
Let \(\mX \in \mathbb{R}^{k \times k}\) such that \((\mC \mX \mB^\top)^\top = \mC \mX \mB^\top\), then
\begin{align*}
\mQ^\top(\mA \ostimes \mA)\mQ(\mB \otimes \mC) \vec(\mX) &= \mQ^\top(\mA \ostimes \mA)\mQ \vec(\mC \mX \mB^\top)\\
	&= \mQ^\top(\mA \ostimes \mA)\svec(\mC \mX \mB^\top)\\
	&= \mQ^\top \svec(\mA \mC \mX \mB^\top \mA)\\
	&= \frac{1}{2} \vec(\mA\mC \mX \mB^\top \mA + \mA \mB \mX^\top \mC^\top \mA)\\
	&= \frac{1}{2}(\mA \mB \otimes \mA \mC + \mA \mC \boxtimes \mA \mB),\\
\intertext{further it holds for \(\mW \in \mathbb{R}_{\textup{sym}}^{n \times n}\)}
(\mB^\top \otimes \mC^\top)\mQ^\top(\mA \ostimes \mA)\mQ \vec(\mW)&= (\mB^\top \otimes \mC^\top)\mQ^\top \svec(\mA \mW \mA)\\
	&= \vec(\mC^\top \mA \mW \mA \mB)\\
	&= \frac{1}{2}(\mC^\top \mA \mW \mA \mB + \mC^\top \mA^\top \mW^\top \mA^\top \mB)\\
	&= \frac{1}{2}(\mB^\top \mA \otimes \mC^\top \mA + \mB^\top\mA \boxtimes \mC^\top \mA),\\
\intertext{and using the properties of the Kronecker and the Box product we obtain}
(\mB ^\top \otimes \mC ^\top)\mQ^\top(\mA \ostimes \mA)\mQ(\mB  \otimes \mC )\vec(\mX) &= (\mB^\top \otimes \mC^\top) \frac{1}{2}(\mB^\top \mA \otimes \mC^\top \mA + \mB^\top\mA \boxtimes \mC^\top \mA)\vec(\mX)\\
	&= \frac{1}{2}(\mB^\top \mA \otimes \mC^\top \mA + \mB^\top\mA \boxtimes \mC^\top \mA)\vec(\mX).
\end{align*}
Now let \(\mA\) be invertible, let \(\mC\) have full rank and choose \(\mY \in \mathbb{R}^{k \times n}\) arbitrarily such that \((\mY \mC)^\top = \mY\mC\). Then using \Cref{prop:kron_properties} and \Cref{prop:box_properties} we obtain
\begin{align*}
(\mI_n &\otimes \mC^\top)\mQ^\top(\mA \ostimes \mA)\mQ(\mI_n  \otimes \mC )(2\mA^{-1} - \mC(\mC^\top \mA \mC)^{-1}\mC^\top) \otimes (\mC^\top \mA \mC)^{-1} \vec(\mY)\\
	&= \frac{1}{2}(\mA  \otimes \mC ^\top \mA \mC  + \mA \mC  \boxtimes \mC ^\top \mA )(2\mA^{-1} - \mC(\mC^\top \mA \mC)^{-1}\mC^\top) \otimes (\mC^\top \mA \mC)^{-1}) \vec(\mY)\\
	&= (\mI_n \otimes \mI_k - \frac{1}{2} \mA \mC(\mC^\top \mA \mC)^{-1} \mC^\top \otimes \mI_k + \mA \mC(\mC^\top \mA \mC)^{-1} \boxtimes \mC^\top - \frac{1}{2}\mA \mC(\mC^\top \mA \mC)^{-1}\boxtimes \mC^\top) \vec(\mY)\\
	&= (\mI_n \otimes \mI_k - \frac{1}{2} \mA \mC(\mC^\top \mA \mC)^{-1} \mC^\top \otimes \mI_k + \frac{1}{2}\mA \mC(\mC^\top \mA \mC)^{-1} \boxtimes \mC^\top) \vec(\mY)\\
	&= \vec(\mY) - \frac{1}{2}(\mY \mC (\mC^\top \mA \mC)^{-1} \mC^\top \mA - \mC^\top \mY^\top (\mC^\top \mA \mC)^{-1} \mC^\top \mA)\\
	&= \vec(\mY)
\end{align*}
Lastly, by assumption it holds that
\begin{align*}
(\mA^\top \ostimes \mA^\top)\mQ(\mD \otimes \mE)\mQ^\top(\mA \ostimes \mA) \svec(\mZ) &= (\mA \ostimes \mA) \mQ \vec(\mE \mA \mZ \mA \mD^\top)\\
	&= \svec(\mA \mE \mA \mZ \mA \mD^\top \mA)\\
	&= \frac{1}{2} (\mA \mE \mA \mZ \mA \mD^\top \mA + \mA \mD \mA \mZ \mA \mE^\top \mA)\\
	&= (\mA \mD \mA \ostimes \mA \mE \mA)\svec(\mZ).
\end{align*}
This concludes the proof.
\end{proof}

\section{The Matrix-variate Normal Distribution}
\label{sec:matrixvariate_normal}

In order for our probabilistic linear solvers to infer the true latent \(\mA\) or its inverse \(\mH=\mA^{-1}\), we need a distribution expressing the belief of the solver over those latent quantities at any given point. A Gaussian distribution over matrices will play this role, motivated by the linear nature of the observations. This section closely follows \citet{Gupta2000}.

\begin{definition}[Matrix-variate Normal Distribution \cite{Gupta2000}]
\label{def:matrixnormal}
Let \(\mX_0 \in \mathbb{R}^{m \times n}\) and let \(\mV \in \mathbb{R}_{\textup{sym}}^{m}\) and \(\mW \in \mathbb{R}_{\textup{sym}}^{n \times n}\) be positive-definite. We say a random matrix \(\rmX\) has a \emph{matrix-variate normal distribution} with mean \(\mX_0\) and covariance \(\mV \otimes \mW\), iff
\begin{equation*}
\operatorname{vec}(\rmX^\top) \sim \mathcal{N}_{mn}(\operatorname{vec}(\mX_0^\top), \mV \otimes \mW).
\end{equation*}
We write as a shorthand \(\rmX \sim \mathcal{N}(\mX_0, \mV \otimes \mW)\).
\end{definition}
Note, that the matrices \(\mV\) and \(\mW\) represent the covariance between rows and columns of \(\rmX\), respectively.  Since we model symmetric matrices in this work, we also introduce a Gaussian distribution over \(\mathbb{R}_{\textup{sym}}^{n \times n}\).

\begin{definition}[Symmetric Matrix-variate Normal Distribution \cite{Gupta2000}]
\label{def:symmatrixnormal}
Let \(\mX_0, \mW \in \mathbb{R}_{\textup{sym}}^{n \times n}\) such that \(\mW\) is positive-definite, then the random matrix \(\rmX\) has a \emph{symmetric matrix-variate normal distribution}, iff
\begin{equation*}
\operatorname{svec}(\rmX) \sim \mathcal{N}_{\frac{1}{2}n (n+1)}(\operatorname{svec}(\mX_0), \mW \ostimes \mW).
\end{equation*}
We write \(\rmX \sim \mathcal{N}(\mX_0, \mW \ostimes \mW)\).
\end{definition}
It follows immediately from the definition that realizations of a symmetric matrix-variate normal distribution are symmetric matrices. This distribution also emerges naturally by conditioning a matrix-variate normal distribution on the linear constraint \(\rmX = \rmX^\top\).

\section{Probabilistic Linear Solvers}
\label{sec:problinsolvers}

Probabilistic linear solvers (PLS) \cite{Hennig2015, Cockayne2019, Bartels2019} infer posterior beliefs over the matrix \(\mA\), its inverse \(\mH\) or the solution \(\vx_* = \mH \vb\) of a linear system via linear observations \(\mY=\mA \mS\). We consider matrix-based inference \cite{Bartels2019} in this work. Assuming a prior \(p(\rmA)\) or \(p(\rmH)\), actions \(\mS\) and linear observations \(\mY\) such methods return posterior distributions \(p(\rmA \mid \mS, \mY)\) or \(p(\rmH \mid \mS, \mY)\).

\subsection{Matrix-based Inference}
\label{sec:matrixbased_inference}

The generic matrix-based inference procedure of probabilistic linear solvers is a consequence of the matrix-variate version of the following standard result for Gaussian inference under linear observations.

\begin{theorem}[Linear Gaussian Inference \cite{Bishop2006}]
\label{thm:gaussian_inference}
Let \(\vv \sim \mathcal{N}(\vmu, \mSigma)\), where \(\vmu \in \mathbb{R}^n\) and \(\mSigma \in \mathbb{R}_{\textup{sym}}^{n \times n}\) positive-definite, and assume we are given observations of the form
\[\mB \vv + \vb = \vy \in \mathbb{R}^m,\] where \(\mB \in \R^{m \times n}\) and \(\vb \in \R^{m}\). Assuming a Gaussian likelihood
\begin{equation*}
p(\vy \mid \mB, \vv, \vb) = \mathcal{N}(\vy; \mB \vv + \vb, \mLambda),
\end{equation*}
for \( \mLambda \in \mathbb{R}_{\textup{sym}}^{m}\) positive definite, results in the posterior distribution
\begin{align*}
p(\vv \mid \vy, \mB, \vb) = \mathcal{N} \big(\vv; \, &\vmu + \mSigma \mB^\top(\mB \mSigma \mB^\top + \mLambda)^{-1} (\vy - \mB \vmu - \vb),\\
	&\mSigma - \mSigma \mB^\top (\mB \mSigma \mB^\top + \mLambda)^{-1}\mB \mSigma \big).
\end{align*}
Further, the marginal distribution of \(\vy\) is given by
\begin{equation*}
p(\vy) = \mathcal{N}(\vy; \mB \vmu + \vb,  \mB \mSigma \mB^\top + \mLambda).
\end{equation*}
\end{theorem}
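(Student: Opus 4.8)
The plan is to realize the joint distribution of $(\vv,\vy)$ as a Gaussian and then read off the conditional by the standard Gaussian conditioning formula. First I would rewrite the observation model as $\vy = \mB\vv + \vb + \bm{\epsilon}$ with $\bm{\epsilon}\sim\mathcal{N}(\vzero,\mLambda)$ independent of $\vv$; the stacked vector $(\vv^\top,\vy^\top)^\top$ is then an affine image of the independent Gaussian pair $(\vv,\bm{\epsilon})$, hence itself Gaussian. A direct moment computation gives $\mathbb{E}[\vy]=\mB\vmu+\vb$ (which is already the claimed marginal mean), $\Cov[\vy]=\mB\mSigma\mB^\top+\mLambda$ (the claimed marginal covariance, so the final assertion of the theorem requires no further work), and cross-covariance $\Cov[\vv,\vy]=\mSigma\mB^\top$. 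Hence the joint covariance is the block matrix with diagonal blocks $\mSigma$ and $\mB\mSigma\mB^\top+\mLambda$ and off-diagonal blocks $\mSigma\mB^\top$, $\mB\mSigma$.

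Next I would invoke the classical fact that conditioning a jointly Gaussian vector on one block yields a Gaussian whose mean is the prior mean corrected by the cross-covariance times the inverse marginal covariance times the residual $\vy-\mB\vmu-\vb$, and whose covariance is the corresponding Schur complement. Substituting the blocks above immediately produces the stated posterior mean $\vmu+\mSigma\mB^\top(\mB\mSigma\mB^\top+\mLambda)^{-1}(\vy-\mB\vmu-\vb)$ and posterior covariance $\mSigma-\mSigma\mB^\top(\mB\mSigma\mB^\top+\mLambda)^{-1}\mB\mSigma$. For a self-contained alternative I would instead complete the square in the exponent $-\tfrac12(\vv-\vmu)^\top\mSigma^{-1}(\vv-\vmu)-\tfrac12(\vy-\mB\vv-\vb)^\top\mLambda^{-1}(\vy-\mB\vv-\vb)$, collecting the quadratic and linear terms in $\vv$ to identify posterior precision $\mSigma^{-1}+\mB^\top\mLambda^{-1}\mB$ and the matching mean.

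The main obstacle is purely algebraic: reconciling the two equivalent forms of the answer. The completing-the-square route delivers $(\mSigma^{-1}+\mB^\top\mLambda^{-1}\mB)^{-1}$ for the covariance, whereas the theorem writes $\mSigma-\mSigma\mB^\top(\mB\mSigma\mB^\top+\mLambda)^{-1}\mB\mSigma$; these agree by the Woodbury matrix identity (equivalently, the two Schur complements of the joint precision and the joint covariance coincide). Similarly the mean forms match via the push-through identity $(\mSigma^{-1}+\mB^\top\mLambda^{-1}\mB)^{-1}\mB^\top\mLambda^{-1}=\mSigma\mB^\top(\mB\mSigma\mB^\top+\mLambda)^{-1}$. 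I would state Woodbury, verify it by direct multiplication, and use it together with the push-through identity to convert whichever form arises into the one displayed in the statement. Since this is a textbook result quoted from \cite{Bishop2006}, the write-up can stay terse, spelling out only the block-covariance computation and the Woodbury step.
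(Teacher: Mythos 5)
Your proposal is correct and is precisely the standard derivation that the paper relies on by citation: the paper states this result without proof, deferring to \cite{Bishop2006}, and your argument (joint Gaussian via the generative form $\vy = \mB\vv + \vb + \bm{\epsilon}$, block-covariance computation, Gaussian conditioning via the Schur complement, with Woodbury and the push-through identity reconciling the covariance and precision forms) is exactly the textbook route. No gaps; the marginal of $\vy$ indeed falls out of the same moment computation with no extra work.
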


\subsubsection{Asymmetric Model}

\begin{corollary}[Asymmetric matrix-based Gaussian Inference \cite{Hennig2013, Hennig2015, Bartels2019}]
\label{cor:asymmetric_gaussian_inference}
Assume a prior \(p(\rmA) = \mathcal{N}(\rmA; \mA_0, \mV_0 \otimes \mW_0)\) and exact observations of the form \(\mY= \mA \mS\), corresponding to a Dirac likelihood \(p(\mY \mid \rmA, \mS) = \delta(\mY - \mA \mS)\), then the posterior \(p(\rmA \mid \mS, \mY)  = \mathcal{N}(\rmA; \mA_k, \mSigma_k)\) is given by
\begin{align*}
\mA_k &= \mA_0 + \mDelta_0 \mU^{\top}\\
\mSigma_k &= \mV_0 \otimes \mW_0(\mI_n - \mS \mU^\top)
\end{align*}
where \(\mDelta_0 = \mY - \mA_0\mS\) and \(\mU = \mW_0 \mS(\mS^\top \mW_0 \mS)^{-1}\).
\end{corollary}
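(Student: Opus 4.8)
The plan is to reduce the matrix-variate statement to the vector-variate Linear Gaussian Inference theorem (\Cref{thm:gaussian_inference}) by vectorization, carry out the resulting Kronecker-algebra, and translate back. First I would unpack the prior via \Cref{def:matrixnormal}: $p(\rmA)=\mathcal{N}(\rmA;\mA_0,\mV_0\otimes\mW_0)$ means $\vec(\rmA^\top)\sim\mathcal{N}(\vec(\mA_0^\top),\mV_0\otimes\mW_0)$. The observation $\mY=\mA\mS$ has to be rewritten as a linear map of $\vec(\rmA^\top)$: transposing gives $\mY^\top=\mS^\top\mA^\top$, and the characteristic property \eqref{eqn:kronecker_charprop} yields $\vec(\mY^\top)=(\mI_n\otimes\mS^\top)\vec(\rmA^\top)$. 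So in the notation of \Cref{thm:gaussian_inference} I set $\vv=\vec(\rmA^\top)$, $\vmu=\vec(\mA_0^\top)$, $\mSigma=\mV_0\otimes\mW_0$, $\mB=\mI_n\otimes\mS^\top$, $\vb=\vzero$, $\vy=\vec(\mY^\top)$, and handle the Dirac likelihood as the limit $\mLambda=\varepsilon^2\mI\downarrow 0$.

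Taking that limit is legitimate because $\mB\mSigma\mB^\top=\mV_0\otimes\mS^\top\mW_0\mS$ is invertible ($\mV_0\succ0$ and $\mS$ of full column rank, so $\mS^\top\mW_0\mS\succ0$ — an assumption implicit in the very definition of $\mU$). Using the inverse and mixed-product rules of \Cref{prop:kron_properties} I would then compute $(\mB\mSigma\mB^\top)^{-1}=\mV_0^{-1}\otimes(\mS^\top\mW_0\mS)^{-1}$, $\mSigma\mB^\top=\mV_0\otimes\mW_0\mS$, and hence $\mSigma\mB^\top(\mB\mSigma\mB^\top)^{-1}=\mI_n\otimes\mU$ with $\mU=\mW_0\mS(\mS^\top\mW_0\mS)^{-1}$.

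The remaining work is bookkeeping. The residual is $\vy-\mB\vmu=\vec(\mY^\top)-\vec((\mA_0\mS)^\top)=\vec(\mDelta_0^\top)$; applying $\mI_n\otimes\mU$ and \eqref{eqn:kronecker_charprop} gives $\vec(\mU\mDelta_0^\top)$, so the posterior mean is $\vec(\mA_0^\top)+\vec(\mU\mDelta_0^\top)=\vec((\mA_0+\mDelta_0\mU^\top)^\top)$, i.e.\ $\mA_k=\mA_0+\mDelta_0\mU^\top$. For the covariance, $\mSigma-\mSigma\mB^\top(\mB\mSigma\mB^\top)^{-1}\mB\mSigma=\mV_0\otimes\mW_0-\mV_0\otimes(\mU\mS^\top\mW_0)$; here one uses that $\mW_0=\mW_0^\top$ forces $\mU\mS^\top\mW_0=\mW_0\mS\mU^\top$, so the expression equals $\mV_0\otimes\mW_0(\mI_n-\mS\mU^\top)=\mSigma_k$, which is exactly $p(\rmA\mid\mS,\mY)$ read through \Cref{def:matrixnormal}.

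The argument is not deep, but the one point that needs genuine care is the transpose convention baked into \Cref{def:matrixnormal}: every vectorization must carry the transpose correctly (e.g.\ recognizing $\vec(\mY^\top)-\vec((\mA_0\mS)^\top)=\vec(\mDelta_0^\top)$, not $\vec(\mDelta_0)$), together with the symmetry identity $\mU\mS^\top\mW_0=\mW_0\mS\mU^\top$ needed to recast the vectorized covariance into the advertised right-multiplied form $\mW_0(\mI_n-\mS\mU^\top)$. The $\mLambda\downarrow0$ limit itself is routine once invertibility of $\mB\mSigma\mB^\top$ is noted.
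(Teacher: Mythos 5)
Your proposal is correct and follows essentially the same route as the paper's proof: vectorize via the transpose convention of \Cref{def:matrixnormal}, write the observation as \((\mI_n \otimes \mS^\top)\vec(\rmA^\top)\), apply \Cref{thm:gaussian_inference}, and simplify with the Kronecker mixed-product and inverse rules to obtain \(\mI_n \otimes \mU\) as the gain. Your added remarks on the \(\mLambda \downarrow 0\) limit, the invertibility of \(\mV_0 \otimes \mS^\top\mW_0\mS\), and the symmetry identity \(\mU\mS^\top\mW_0 = \mW_0\mS\mU^\top\) make explicit steps the paper leaves implicit, but do not change the argument.
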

\begin{proof}
In vectorized form the likelihood is given by
\begin{equation*}
p(\vec(\mY^\top) \mid \vec(\rmA^\top), \vec(\mS^\top)) = \delta(\vec(\mY^\top) - \vec(\mS^\top \mA^\top)) = \delta(\vec(\mY^\top) - (\mI \otimes \mS^\top) \vec(\mA^\top))
\end{equation*}
Using the \Cref{def:matrixnormal} of the matrix-variate normal distribution, applying \Cref{thm:gaussian_inference} and using property \eqref{eqn:kron_products} of the Kronecker product in \Cref{prop:kron_properties} leads to
\begin{align*}
\vec (\mA_k^\top) &= \vec(\mA_0^\top) + (\mV_0 \otimes \mW_0)(\mI \otimes \mS)((\mI \otimes \mS^\top)(\mV_0 \otimes \mW_0)(\mI \otimes \mS))^{-1} (\vec(\mY^\top) -(\mI \otimes \mS^\top)\vec(\mA_0^\top))\\
&= \vec(\mA_0^\top) + (\mV_0 \otimes \mW_0 \mS)(\mV_0 \otimes \mS^\top \mW_0 \mS )^{-1} \vec(\mDelta_0^\top)\\
&= \vec(\mA_0^\top) + (\mI_n \otimes \mW_0 \mS(\mS^\top \mW_0 \mS )^{-1}) \vec(\mDelta_0^\top)\\
&= \vec(\mA_0^\top +\mU \mDelta_0^\top )
\end{align*}
and further analogously, additionally using bilinearity of the Kronecker product, we obtain
\begin{align*}
\mSigma_k &= \mV_0 \otimes \mW_0 - (\mV_0 \otimes \mW_0)(\mI \otimes \mS)((\mI \otimes \mS^\top)(\mV_0 \otimes \mW_0)(\mI \otimes \mS))^{-1}(\mI \otimes \mS^\top)(\mV_0 \otimes \mW_0) \\
&= \mV_0 \otimes \mW_0 - (\mV_0 \otimes \mW_0 \mS) (\mV_0 \otimes \mS^\top \mW_0 \mS)^{-1} (\mV_0 \otimes \mS^\top \mW_0)\\
&= \mV_0 \otimes \mW_0 - \mV_0 \otimes (\mW_0 \mS( \mS^\top \mW_0 \mS)^{-1} \mS^\top \mW_0)\\
&= \mV_0 \otimes \mW_0 ( \mI - \mS \mU^\top).
\end{align*}
This concludes the proof.
\end{proof}

\subsubsection{Symmetric Model}

\begin{corollary}[Symmetric Matrix-based Gaussian Inference \cite{Hennig2013, Hennig2015, Bartels2019}]
\label{thm:sym_gaussian_inference}
Assume a symmetric prior \(p(\rmA) = \mathcal{N}(\rmA; \mA_0, \mW_0 \ostimes \mW_0)\) and exact observations of the form \(\mY= \mA \mS\), corresponding to a Dirac likelihood \(p(\mY \mid \rmA, \mS) = \delta(\mY - \mA \mS)\), then the posterior \(p(\rmA \mid \mS, \mY)  = \mathcal{N}(\rmA; \mA_k, \mSigma_k)\) is given by
\begin{align*}
\mA_k &= \mA_0 + \mDelta_0 \mU^\top + \mU \mDelta_0^\top - \mU \mS^\top \mDelta_0 \mU^\top
= \mA_0 + \mU \mV^\top + \mV \mU^\top\\
\mSigma_k &=\mW_0 (\mI_n-\mS\mU^\top) \ostimes \mW_0 (\mI_n-\mS\mU^\top)
\end{align*}
where \(\mDelta_0 = \mY - \mA_0\mS\), \(\mU = \mW_0 \mS (\mS^\top \mW_0 \mS)^{-1}\) and \(\mV = (\mI_n - \frac{1}{2}\mU \mS^\top)\mDelta_0\).
\end{corollary}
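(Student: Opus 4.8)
The plan is to replicate the asymmetric derivation (\Cref{cor:asymmetric_gaussian_inference}) but in $\svec$-coordinates: reduce everything to the scalar Linear Gaussian Inference result \Cref{thm:gaussian_inference} and then push the resulting symmetric-Kronecker expressions through the mixed identities of \Cref{cor:mixed_kronecker_identities}. Using symmetry of $\rmA$ together with $\vec(\rmA)=\mQ^\top\svec(\rmA)$ and $\vec(\mS^\top\rmA)=(\mI_n\otimes\mS^\top)\vec(\rmA)$, the Dirac likelihood $\delta(\mY-\rmA\mS)$ takes the form $\delta\big(\vec(\mY^\top)-(\mI_n\otimes\mS^\top)\mQ^\top\svec(\rmA)\big)$, which I treat as the $\varepsilon\downarrow 0$ limit of the Gaussian likelihood $\mathcal{N}(\vec(\mY^\top);(\mI_n\otimes\mS^\top)\mQ^\top\svec(\rmA),\varepsilon^2\mI)$. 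Applying \Cref{thm:gaussian_inference} with $\vv=\svec(\rmA)$, $\vmu=\svec(\mA_0)$, $\mSigma=\mW_0\ostimes\mW_0$, $\mB=(\mI_n\otimes\mS^\top)\mQ^\top$, $\vb=\vzero$ then yields the posterior in $\svec$-coordinates, and $\smat$ should recover $\mA_k$ and $\mSigma_k$.

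The one genuine subtlety is that $\mB\mSigma\mB^\top=(\mI_n\otimes\mS^\top)\mQ^\top(\mW_0\ostimes\mW_0)\mQ(\mI_n\otimes\mS)$ is rank-deficient: an observation $\mY=\rmA\mS$ with $\rmA$ symmetric necessarily obeys the redundancy $\mS^\top\mY=\mY^\top\mS$. This is exactly the object treated by the $\mG$/$\mG_{\textup{right}}^{-1}$ statement of \Cref{cor:mixed_kronecker_identities} under the substitution $\mA\leftarrow\mW_0$, $\mC\leftarrow\mS$, which provides the explicit right inverse $\mG_{\textup{right}}^{-1}=\big(2\mW_0^{-1}-\mS(\mS^\top\mW_0\mS)^{-1}\mS^\top\big)\otimes(\mS^\top\mW_0\mS)^{-1}$. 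I would first check that the residual $\vec(\mY^\top)-\mB\vmu=\vec(\mDelta_0^\top)$ with $\mDelta_0=\mY-\mA_0\mS$ meets the symmetry hypothesis of that corollary (since $\mY=\mA\mS$ with $\mA$ symmetric forces $\mS^\top\mDelta_0$ to be symmetric), hence lies in $\operatorname{range}(\mG)$; and, since $\mSigma$ is positive definite, the prefactor $\mSigma\mB^\top$ annihilates $\ker(\mB\mSigma\mB^\top)$, and likewise every column of $\mB\mSigma$ lies in $\operatorname{range}(\mB\mSigma\mB^\top)$. It follows that in the limit $(\mB\mSigma\mB^\top+\varepsilon^2\mI)^{-1}$ may be replaced by $\mG_{\textup{right}}^{-1}$ in both the mean and the covariance update without changing their values.

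After this replacement everything is a mechanical application of the characteristic properties \eqref{eqn:kronecker_charprop}, \eqref{eqn:box_charprop}, \eqref{eqn:symm_kronecker_charprop} and identities \eqref{eqn:symkron_kron}--\eqref{eqn:symkron_kron_symkron}, keeping in mind that the $\mQ,\mQ^\top$ sandwich discards antisymmetric parts. For the mean, $\mSigma\mB^\top\mG_{\textup{right}}^{-1}\vec(\mDelta_0^\top)$ de-vectorizes (with $\mU=\mW_0\mS(\mS^\top\mW_0\mS)^{-1}$) to $\operatorname{sym}\big(2\mU\mDelta_0^\top-\mU\mDelta_0^\top\mS\mU^\top\big)$, which collapses to $\mDelta_0\mU^\top+\mU\mDelta_0^\top-\mU\mS^\top\mDelta_0\mU^\top$ by symmetry of $\mS^\top\mDelta_0$; the equivalent form $\mU\mV^\top+\mV\mU^\top$ with $\mV=(\mI_n-\tfrac12\mU\mS^\top)\mDelta_0$ follows by expanding and again using $\mDelta_0^\top\mS=\mS^\top\mDelta_0$. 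For the covariance, one computes $(\mI_n\otimes\mS)\mG_{\textup{right}}^{-1}(\mI_n\otimes\mS^\top)=(2\mW_0^{-1}-\mP)\otimes\mP$ with $\mP=\mS(\mS^\top\mW_0\mS)^{-1}\mS^\top$, sandwiches it between the $\mW_0\ostimes\mW_0$ factors via \eqref{eqn:symkron_kron_symkron}, and obtains $\mSigma\mB^\top\mG_{\textup{right}}^{-1}\mB\mSigma=2\,\mW_0\ostimes(\mW_0\mP\mW_0)-(\mW_0\mP\mW_0)\ostimes(\mW_0\mP\mW_0)$; subtracting from $\mSigma=\mW_0\ostimes\mW_0$ and using bilinearity and symmetry of $\ostimes$ refactors this as $(\mW_0-\mW_0\mP\mW_0)\ostimes(\mW_0-\mW_0\mP\mW_0)=\mW_0(\mI_n-\mS\mU^\top)\ostimes\mW_0(\mI_n-\mS\mU^\top)$.

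The main obstacle I expect is bookkeeping rather than conceptual: keeping the three Kronecker-type products ($\otimes$, $\boxtimes$, $\ostimes$) straight and tracking exactly where only the symmetric part of a matrix survives, so that the cross terms in the covariance update collapse precisely to the projector-sandwiched symmetric Kronecker form stated. The secondary point needing care is the legitimacy of substituting the right inverse $\mG_{\textup{right}}^{-1}$ for the non-existent inverse of $\mB\mSigma\mB^\top$ in the Dirac limit, which however reduces entirely to the positive definiteness of $\mSigma$ as indicated above.
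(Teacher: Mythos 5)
Your proposal follows essentially the same route as the paper's own proof: vectorize the Dirac likelihood via \(\svec\) and \(\mQ\), apply \Cref{thm:gaussian_inference}, substitute the explicit right inverse \(\mG_{\textup{right}}^{-1}\) from \Cref{cor:mixed_kronecker_identities} for the degenerate Gram matrix, and de-vectorize through the mixed Kronecker/box identities to obtain the stated mean and covariance. The only difference is that you make explicit the justification for replacing \((\mB\mSigma\mB^\top+\varepsilon^2\mI)^{-1}\) by the right inverse in the Dirac limit (residual in the range, prefactor annihilating the kernel), a point the paper's proof uses implicitly.
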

\begin{proof}
A proof can be found in the appendix of \citet{Hennig2015}. We rederive it here in our notation. By assumption the likelihood takes the vectorized form
\begin{equation*}
p(\vec(\mY^\top) \mid \svec(\rmA), \vec(\mS^\top)) = \delta(\vec(\mY^\top)-\vec(\mS^\top \mA^\top)) = \delta(\vec(\mY^\top) - (\mI \otimes \mS^\top)\mQ^\top \svec(\mA))
\end{equation*}
Applying \Cref{thm:gaussian_inference} gives
\begin{align*}
\svec(\mA_k) &= \svec(\mA_0) + (\mW_0 \ostimes \mW_0) \mQ(\mI_n \otimes \mS)\mG^{-1}(\vec(\mY^\top) - (\mI \otimes \mS^\top)\mQ^\top \svec(\mA_0))\\
&= \svec(\mA_0) + (\mW_0 \ostimes \mW_0) \mQ(\mI_n \otimes \mS)\mG^{-1}\vec(\mDelta_0^\top)\\
\mSigma_k &= \mW_0 \ostimes \mW_0 - (\mW_0 \ostimes \mW_0) \mQ(\mI_n \otimes \mS)\mG^{-1}(\mI_n \otimes \mS^\top)\mQ^\top (\mW_0 \ostimes \mW_0),
\end{align*}
where \(\mDelta_0 = \mY - \mA_0\mS\) and the Gram matrix is given by
\begin{equation*}
\mG = (\mI_n \otimes \mS^\top)\mQ^\top (\mW_0 \ostimes \mW_0)\mQ(\mI_n \otimes \mS) \in \R^{nk \times nk}.
\end{equation*}
Now since \((\mDelta_0^\top \mS)^\top = \mDelta_0^\top \mS\), we have by \Cref{cor:mixed_kronecker_identities} that the right inverse of \(\mG\) is given by
\begin{equation*}
\mG_{\textup{right}}^{-1} = (2\mW_0^{-1} - \mS(\mS^\top \mW_0 \mS)^{-1}\mS^\top) \otimes (\mS^\top \mW_0 \mS)^{-1}
\end{equation*}
and therefore using \eqref{eqn:kron_products} and \eqref{eqn:symkron_kron} we obtain
\begin{align*}
\svec(\mA_k) &= \svec(\mA_0) + (\mW_0 \ostimes \mW_0) \mQ(\mI_n \otimes \mS)\mG_{\textup{right}}^{-1} \vec(\mDelta_0^\top)\\
&= \svec(\mA_0) + \mQ\mQ^\top(\mW_0 \ostimes \mW_0) \mQ (2\mW_0^{-1} - \mS(\mS^\top \mW_0 \mS)^{-1}\mS^\top) \otimes \mS(\mS^\top \mW_0 \mS)^{-1} \vec(\mDelta_0^\top)\\
&= \svec(\mA_0) + \mQ \frac{1}{2}\big((2\mI - \mU \mS^\top) \otimes \mU + \mU \boxtimes (2\mI - \mU \mS^\top)\big) \vec(\mDelta_0^\top)\\
&= \svec(\mA_0) + \svec(\mU \mDelta_0^\top (\mI - \frac{1}{2}\mU\mS^\top)^\top + (\mI - \frac{1}{2}\mU\mS^\top)\mDelta_0 \mU^\top)\\
&=\svec(\mA_0 + \mDelta_0 \mU^\top + \mU \mDelta_0^\top - \mU \mS^\top \mDelta_0 \mU^\top).
\end{align*}
Further by definition it holds that
\begin{equation*}
\mU \mV^\top + \mV \mU^\top = \mU \mDelta_0^\top(\mI_n - \frac{1}{2}\mS \mU^\top) + (\mI_n - \frac{1}{2}\mU \mS^\top)\mDelta_0 \mU^\top = \mDelta_0 \mU^\top + \mU \mDelta_0^\top - \mU \mS^\top \mDelta_0 \mU^\top.
\end{equation*}
For the covariance we obtain using the right inverse of the Gram matrix and \eqref{eqn:symkron_kron_symkron} that
\begin{align*}
\mSigma_k &= \mW_0 \ostimes \mW_0 - (\mW_0 \ostimes \mW_0) \mQ(\mI_n \otimes \mS)\mG^{-1}(\mI_n \otimes \mS^\top)\mQ^\top (\mW_0 \ostimes \mW_0)\\
&= \mW_0 \ostimes \mW_0 - (2 \mW_0 - \mW_0 \mS (\mS^\top \mW_0 \mS)^{-1} \mS^\top \mW_0) \ostimes (\mW_0 \mS (\mS^\top \mW_0 \mS)^{-1} \mS^\top \mW_0)\\
&= (\mW_0 - \mW_0 \mS (\mS^\top \mW_0 \mS)^{-1} \mS^\top \mW_0) \ostimes (\mW_0 - \mW_0 \mS (\mS^\top \mW_0 \mS)^{-1} \mS^\top \mW_0)\\
&=\mW_0 (\mI_n-\mS\mU^\top) \ostimes \mW_0 (\mI_n-\mS\mU^\top).
\end{align*}
\end{proof}

\subsection{Matrix-variate Prior Construction}
From a practical point of view it is important to be able to construct a prior for \(\rmA\) and \(\rmH\) from an initial guess \(\vx_0\) for the solution. This reduces down to finding \(\mA_0\) and \(\mH_0\) symmetric positive definite, such that \(\mA_0 = \mH_0^{-1}\) and \(\vx_0 = \mH_0 \vb\) for the covariance class derived in \Cref{sec:prior_covariance_class}. We provide a computationally efficient construction of such a prior here.

\begin{proposition}
Let \(\vx_0\in \mathbb{R}^n\) and \(\vb \in \mathbb{R}^n \setminus \{0\}\). Assume \(\vx_0^\top \vb > 0\), then for \(\alpha < \frac{\vb^\top \vx_0}{\vb^\top \vb}\),
\begin{equation*}
\mH_0 = \alpha \mI + \frac{1}{(\vx_0 - \alpha \vb)^\top \vb} (\vx_0 - \alpha \vb)(\vx_0 - \alpha \vb)^\top
\end{equation*}
is symmetric positive definite and \(\mH_0 \vb = \vx_0\). Further it holds that
\begin{equation*}
\mA_0 = \mH_0^{-1} = \alpha^{-1} \mI - \frac{\alpha^{-1}}{(\vx_0 - \alpha \vb)^\top \vx_0}(\vx_0 - \alpha \vb)(\vx_0 - \alpha \vb)^\top.
\end{equation*}
If \(\vx_0^\top \vb < 0\) or \(\vx_0^\top \vb = 0\), then for \(\vx_1 = -\vx_0\) or \(\vx_1 = \frac{\vb^\top \vb}{\vb^\top \mA \vb}\vb\) respectively,  it holds that \(\norm{\vx_1 - \vx_*}_{\mA}^2 <  \norm{\vx_0 - \vx_*}_{\mA}^2\), i.e. \(\vx_1\) is a strictly better initialization than \(\vx_0\).
\end{proposition}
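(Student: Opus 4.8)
The plan is to verify the three assertions in turn, each by a short direct computation; set $\vu = \vx_0 - \alpha\vb$ throughout, and note that $\alpha$ is implicitly assumed positive, as is forced by the requirements that $\mH_0$ be positive definite and that $\alpha^{-1}$ appear in $\mA_0$. First I would observe that the hypothesis $\alpha < \vb^\top\vx_0 / \vb^\top\vb$ is precisely the statement $\vu^\top\vb = \vx_0^\top\vb - \alpha\,\vb^\top\vb > 0$; in particular $\vu\neq\vzero$, so $\mH_0 = \alpha\mI + (\vu^\top\vb)^{-1}\vu\vu^\top$ is a well-defined symmetric matrix, and the identity $\mH_0\vb = \alpha\vb + (\vu^\top\vb)^{-1}\vu(\vu^\top\vb) = \alpha\vb + \vu = \vx_0$ is immediate. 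Positive definiteness then follows from
\[
\vv^\top\mH_0\vv \;=\; \alpha\norm{\vv}_2^2 + \frac{(\vu^\top\vv)^2}{\vu^\top\vb} \;\geq\; \alpha\norm{\vv}_2^2 \;>\; 0 \qquad(\vv\neq\vzero),
\]
using $\alpha>0$ and $\vu^\top\vb>0$.

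Second, for the inverse I would apply the Sherman--Morrison formula to $\mH_0 = \alpha\mI + (\vu^\top\vb)^{-1}\vu\vu^\top$. The one step that needs attention is simplifying the scalar denominator it produces, which equals $\alpha\,\vu^\top\vb + \vu^\top\vu = \vu^\top(\alpha\vb + \vu) = \vu^\top\vx_0 = (\vx_0-\alpha\vb)^\top\vx_0$; this is strictly positive since it equals $\alpha\,\vu^\top\vb + \norm{\vu}_2^2$ with both summands positive, so $\mA_0$ is well defined, and rearranging yields exactly the displayed formula. Then $\mA_0\mH_0 = \mI$ holds automatically by Sherman--Morrison (or can be confirmed by one further rank-one expansion).

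Finally, for the ``strictly better initialisation'' claims I would expand the $\mA$-norm using $\mA\vx_* = \vb$: for any $\vz$, $\norm{\vz-\vx_*}_{\mA}^2 = \vz^\top\mA\vz - 2\,\vz^\top\vb + \vx_*^\top\vb$. If $\vx_0^\top\vb<0$ and $\vx_1 = -\vx_0$, subtracting the two instances gives $\norm{\vx_1-\vx_*}_{\mA}^2 - \norm{\vx_0-\vx_*}_{\mA}^2 = 4\,\vx_0^\top\vb < 0$. If $\vx_0^\top\vb=0$ and $\vx_1 = \bigl(\vb^\top\vb/\vb^\top\mA\vb\bigr)\vb$, I would note that this scalar is exactly the minimiser over $t\in\R$ of $t^2\vb^\top\mA\vb - 2t\,\vb^\top\vb + \vx_*^\top\vb = \norm{t\vb-\vx_*}_{\mA}^2$; substituting it and using $\vx_0^\top\vb=0$ gives
\[
\norm{\vx_1-\vx_*}_{\mA}^2 - \norm{\vx_0-\vx_*}_{\mA}^2 \;=\; -\frac{(\vb^\top\vb)^2}{\vb^\top\mA\vb} - \vx_0^\top\mA\vx_0 \;<\; 0,
\]
the first term being strictly negative (since $\vb\neq\vzero$ and $\mA$ is positive definite) and the second nonpositive. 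I would also record in passing that in both repaired cases $\vx_1^\top\vb>0$, so the construction of the first part applies to the new initial guess.

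I do not expect a genuine obstacle: the argument is elementary linear algebra, and the only point needing care is to track that every denominator appearing along the way --- $\vu^\top\vb$, $(\vx_0-\alpha\vb)^\top\vx_0$, $\vb^\top\mA\vb$, and $\alpha$ itself --- is nonzero, each being guaranteed by the hypotheses $\vx_0^\top\vb>0$, $0<\alpha<\vb^\top\vx_0/\vb^\top\vb$, $\vb\neq\vzero$, and $\mA\succ 0$.
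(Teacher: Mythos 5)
Your proof is correct and follows essentially the same route as the paper's: direct verification of $\mH_0\vb=\vx_0$, Sherman--Morrison with the same denominator simplification $(\vx_0-\alpha\vb)^\top\vx_0$, and expansion of the $\mA$-norm for the final claims (where you are in fact more explicit than the paper, which only states the conclusion after expanding $\norm{\vx_0-\vx_*}_\mA^2$). The only cosmetic difference is that you establish positive definiteness via the quadratic form while the paper invokes Weyl's eigenvalue inequality; both rely on the same implicit assumption $\alpha>0$, which you rightly flag.
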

\begin{proof}
Let \(\mH_0\) as above. Then \(\mH_0 \vb = \alpha \vb + \vx_0 - \alpha \vb = \vx_0\). The second term of the sum in the form of \(\mH_0\) is of rank 1. Its non-zero eigenvalue is given by
\begin{equation*}
\lambda = \frac{1}{(\vx_0 - \alpha \vb)^\top \vb} (\vx_0 - \alpha \vb)^\top(\vx_0 - \alpha \vb) = \frac{1}{\vx_0^\top \vb - \alpha \vb^\top \vb} \norm{\vx_0 - \alpha \vb}_2^2 \geq 0
\end{equation*}
since by assumption \(\vx_0^\top \vb > 0\) and \(\alpha < \frac{\vb^\top \vx_0}{\vb^\top \vb}\). Now by Weyl's theorem it holds that \(\lambda_{\min}(\mA) + \lambda_{\min}(\mE) \leq \lambda_{\min}(\mA + \mE)\) and therefore \(\mH_0\) is positive definite. By the matrix inversion lemma we have for \(\gamma = \frac{\alpha^{-1}}{(\vx_0 - \alpha \vb)^\top \vb}\) that
\begin{align*}
\mA_0 &= \mH_0^{-1} = \alpha^{-1}(\mI - \frac{\gamma}{1 + \gamma \norm{\vx_0 - \alpha \vb}_2^2 } (\vx_0 - \alpha \vb)(\vx_0 - \alpha \vb)^\top)\\
&= \alpha^{-1} \mI - \frac{\alpha^{-2}}{(\vx_0 - \alpha b)^\top \vb + \alpha^{-1} \lVert \vx_0 - \alpha \vb \rVert_2^2}(\vx_0 - \alpha \vb)(\vx_0 - \alpha \vb)^\top\\
&= \alpha^{-1} \mI - \frac{\alpha^{-1}}{(\vx_0 - \alpha \vb)^\top \vx_0}(\vx_0 - \alpha \vb)(\vx_0 - \alpha \vb)^\top.
\end{align*}
Finally, we obtain
\begin{equation*}
\norm{\vx_0 - \vx_*}_\mA^2 = (\vx_0 - \mA^{-1}\vb)^\top \mA(\vx_0 - \mA^{-1}\vb) = \vx_0^\top \mA \vx_0 + \vb^\top \mA^{-1} \vb - 2 \vb^\top \vx_0.
\end{equation*}
Therefore if either \(\vx_0^\top \vb < 0\) or \(\vx_0^\top \vb = 0\), then \(\vx_1 = -\vx_0\) or \(\vx_1 = \frac{\vb^\top \vb}{\vb^\top \mA \vb}\vb\), respectively are closer to \(\vx_*\) in \(\mA\) norm by positive definiteness of \(\mA\). This concludes the proof.
\end{proof}

\subsection{Stopping Criteria}
\label{sec:stopping_criteria}

In addition to the classic stopping criteria \(\norm{\mA \vx_k - \vb}_2 \leq \max(\delta_{\textup{rtol}} \norm{\vb}_2, \delta_{\textup{atol}})\) it is natural from a probabilistic viewpoint to use the induced posterior covariance of \(\rvx\). Let \(\mM \in \mathbb{R}_{\textup{sym}}^{n \times n}\) be a positive-definite matrix, then by linearity and the cyclic property of the trace it holds that
\begin{align*}
\mathbb{E}_{\vx_*}[\norm{\vx_* - \mathbb{E}[\rvx]}_\mM^2] &= \mathbb{E}_{\vx_*}[(\vx_* - \mathbb{E}[\rvx])^\top \mM(\vx_* - \mathbb{E}[\rvx])]\\
&= \tr(\mathbb{E}_{\vx_*}[(\vx_* - \mathbb{E}[\rvx])^\top \mM(\vx_* - \mathbb{E}[\rvx])])\\
&= \mathbb{E}_{\vx_*}[\tr ((\vx_* - \mathbb{E}[\rvx])^\top \mM(\vx_* - \mathbb{E}[\rvx]))]\\
&= \mathbb{E}_{\vx_*}[\mM \tr ((\vx_* - \mathbb{E}[\rvx]) (\vx_* - \mathbb{E}[\rvx])^\top)]\\
&= \tr( \mM \mathbb{E}_{\vx_*}[ (\vx_* - \mathbb{E}[\rvx]) (\vx_* - \mathbb{E}[\rvx])^\top])\\
&= \tr(\mM(\operatorname{Cov}[\vx_* - \mathbb{E}[\rvx]] + (\mathbb{E}_{\vx_*}[\vx_*] - \mathbb{E}[\rvx])^\top(\mathbb{E}_{\vx_*}[\vx_*] - \mathbb{E}[\rvx])))\\
&= \tr(\mM \operatorname{Cov}[\vx_*]) +  \lVert \mathbb{E}_{\vx_*}[\vx_*] - \mathbb{E}[\rvx]\rVert_\mM^2.
\end{align*}
Assuming calibration holds, i.e. \(\vx_* \sim \mathcal{N}(\mathbb{E}[\rvx], \operatorname{Cov}[\rvx])\), we can bound the (relative) error by terminating when \(\tr(\mM\operatorname{Cov}[\rvx]) \leq \max(\delta_{\textup{rtol}} \norm{\vb}, \delta_{\textup{atol}})\) either in \(l_2\)-norm for \(\mM = \mI\) or in \(\mA\)-norm for \(\mM=\mA\).

We can efficiently evaluate the required \(\tr(\mM\operatorname{Cov}[\rvx])\) without ever forming \(\operatorname{Cov}[\rvx]\) in memory from already computed quantities. At iteration \(k\) we have \(\operatorname{Cov}[\rvx] = \operatorname{Cov}[\rmH \vb]= \frac{1}{2}(\mW_k^{\rmH} (\vb^{\top} \mW_k^{\rmH} \vb) + (\mW_k^{\rmH} \vb)(\mW_k^{\rmH} \vb)^\top)\) and therefore
\begin{equation*}
	\tr(\mM \operatorname{Cov}[\rvx]) = \frac{1}{2}\big((\vb^{\top} \mW_k^{\rmH} \vb) \tr(\mM \mW_k^{\rmH}) + (\mW_k^{\rmH} \vb)^\top \mM (\mW_k^{\rmH} \vb) \big).
\end{equation*}
Given the update for the covariance of the inverse view, we obtain the following recursion for its trace
\begin{equation*}
\tr(\mM \mW_k^{\rmH})= \tr(\mM \mW_{k-1}^{\rmH}) - \frac{1}{\vy_k^\top \mW_{k-1}^{\rmH} \vy_k} \tr((\mW_{k-1}^{\rmH} \vy_k)^\top \mM (\mW_{k-1}^{\rmH} \vy_k)) .
\end{equation*}
Computing the trace in this iterative fashion adds at most three matrix-vector products and three inner products for arbitrary \(\mM\) all other quantities are computed for the covariance update anyhow.

For our proposed covariance class \eqref{eqn:prior_cov_class} we obtain for \(\mM=\mI\) and \(\mPsi=\psi \mI\) that
\begin{align*}
\tr(\mW^{\rmH}_0) &= \tr(\mA_0^{-1}\mY(\mY^\top \mA_0^{-1} \mY)^{-1} \mY^\top \mA_0^{-1} + (\mI - \mY(\mY^\top \mY)^{-1}\mY^\top) \mPsi (\mI - \mY(\mY^\top \mY)^{-1}\mY^\top))\\
&= \tr((\mY^\top \mA_0^{-1} \mY)^{-1} \mY^\top \mA_0^{-1}\mA_0^{-1}\mY) + \psi \tr((\mI - \mY(\mY^\top \mY)^{-1}\mY^\top)(\mI - \mY(\mY^\top \mY)^{-1}\mY^\top))\\
&= \tr((\mY^\top \mA_0^{-1} \mY)^{-1} \mY^\top \mA_0^{-1}\mA_0^{-1}\mY) + \psi \tr(\mI - \mY(\mY^\top \mY)^{-1}\mY^\top)\\
&= \tr((\mY^\top \mA_0^{-1} \mY)^{-1} \mY^\top \mA_0^{-1}\mA_0^{-1}\mY) + \psi (n-k),
\end{align*}
which for a scalar prior mean \(\mA_0 = \alpha \mI\) reduces to \(\tr(\mW^{\rmH}_0) = \alpha^{-1}k + \psi (n-k)\).

\subsection{Implementation}
In order to maintain numerical stability when performing low rank updates to symmetric positive definite matrices, as is the case in \Cref{alg:problinsolve} for the mean and covariance estimates, it is advantageous use a representation based on the Cholesky decomposition. One can perform the rank-2 update for the mean estimate and the rank-1 downdate for the covariance in \Cref{thm:sym_gaussian_inference} in each iteration of the algorithm for their respective Cholesky factors instead (see also \citet{Seeger2008}). The rank-2 update can be seen as a combination of a rank-1 up- and downdate by recognizing that
\begin{equation*}
\vu \vv^\top + \vv\vu^\top = \frac{1}{2}((\vu+\vv)(\vu+\vv)^\top - (\vu-\vv)(\vu-\vv)^\top).
\end{equation*}
Similar updates arise in Quasi-Newton methods for the approximate (inverse) Hessian \cite{Nocedal2006}. Having Cholesky factors of the mean and covariance available has the additional advantage that downstream sampling or the evaluation of the probability density function is computationally cheap.

\section{Theoretical Properties: Proofs for \Cref{sec:theoretical_properties}}
\label{sec:theoretical_properties_proofs}

In this section we provide detailed proofs for the theoretical results on convergence and the connection of \Cref{alg:problinsolve} to the method of conjugate gradients. We restate each theorem here as a reference to the reader. We begin by proving an intermediate result giving an interpretation to the posterior mean of \(\rmA\) and \(\rmH\) at each step of the method.

\begin{proposition}[Subspace Equivalency]
Let \(\mA_k\) and \(\mH_k\) be the posterior means defined as in \Cref{sec:inference_framework} and assume \(\mA_0\) and \(\mH_0\) are symmetric. Then for \(1 \leq k \leq n\) it holds that
\begin{equation}
\label{eqn:subspace_equivalency}
\mA_k \mS = \mY \quad \textup{and}\quad \mH_k \mY = \mS,
\end{equation}
i.e. \(\mA_k\) and \(\mH_k\) act like \(\mA\) and \(\mA^{-1}\) on the spaces spanned by the actions \(\mS\), respectively the observations \(\mY\).
\end{proposition}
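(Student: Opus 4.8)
The plan is to verify both identities by direct substitution of the closed-form posterior means from \Cref{sec:inference_framework}, relying on the single algebraic fact that \(\mU^\top\mS = \mI_k\) together with its analogue \((\mU^{\rmH})^\top\mY = \mI_k\). Observe first that, since \(\mU = \mW_0^{\rmA}\mS(\mS^\top\mW_0^{\rmA}\mS)^{-1}\), we have \(\mU^\top\mS = (\mS^\top\mW_0^{\rmA}\mS)^{-1}\mS^\top\mW_0^{\rmA}\mS = \mI_k\); the inverse here is well-defined precisely under the standing assumptions that make the posterior mean meaningful (\(\mW_0^{\rmA}\) positive definite and \(\mS\) of full column rank, which holds for \(1 \leq k \leq n\)).

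Next I would substitute into
\[
\mA_k = \mA_0 + \mDelta_0^{\rmA}\mU^\top + \mU(\mDelta_0^{\rmA})^\top - \mU\mS^\top\mDelta_0^{\rmA}\mU^\top,
\]
and multiply on the right by \(\mS\). Using \(\mU^\top\mS = \mI_k\), the first and last pairs of terms simplify and one obtains
\[
\mA_k\mS = \mA_0\mS + \mDelta_0^{\rmA} + \mU\big((\mDelta_0^{\rmA})^\top\mS - \mS^\top\mDelta_0^{\rmA}\big).
\]
Because \(\mDelta_0^{\rmA} = \mY - \mA_0\mS\), the first two summands already collapse to \(\mY\), so it remains to argue that the bracketed term vanishes.

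This is the only step that uses the hypotheses: I claim \(\mS^\top\mDelta_0^{\rmA}\) is a symmetric \(k\times k\) matrix. Indeed \(\mS^\top\mDelta_0^{\rmA} = \mS^\top\mY - \mS^\top\mA_0\mS\); here \(\mS^\top\mA_0\mS\) is symmetric since \(\mA_0\) is symmetric, and \(\mS^\top\mY = \mS^\top\mA\mS\) is symmetric since the observations satisfy \(\mY = \mA\mS\) with \(\mA\) symmetric. Hence \((\mDelta_0^{\rmA})^\top\mS - \mS^\top\mDelta_0^{\rmA} = (\mS^\top\mDelta_0^{\rmA})^\top - \mS^\top\mDelta_0^{\rmA} = \bm{0}\), giving \(\mA_k\mS = \mY\).

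Finally, \(\mH_k\mY = \mS\) follows by the same computation under the substitutions \(\mA_0 \mapsto \mH_0\), \(\mS \leftrightarrow \mY\), \(\mU \mapsto \mU^{\rmH}\), and \(\mDelta_0^{\rmA} \mapsto \mDelta_0^{\rmH} = \mS - \mH_0\mY\): one gets \(\mH_k\mY = \mS + \mU^{\rmH}\big((\mDelta_0^{\rmH})^\top\mY - \mY^\top\mDelta_0^{\rmH}\big)\), and \(\mY^\top\mDelta_0^{\rmH} = \mY^\top\mS - \mY^\top\mH_0\mY\) is symmetric because \(\mH_0\) is symmetric and \(\mY^\top\mS = \mS^\top\mA\mS\) is symmetric. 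I do not expect a genuine obstacle; the computation is routine once one spots the \(\mU^\top\mS = \mI_k\) cancellation, and the only thing to handle carefully is the symmetry of \(\mS^\top\mDelta_0^{\rmA}\) (respectively \(\mY^\top\mDelta_0^{\rmH}\)), which is exactly where symmetry of \(\mA_0\), \(\mH_0\), and \(\mA\) enters.
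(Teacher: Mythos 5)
Your proof is correct and follows essentially the same route as the paper's: substitute the closed-form posterior mean, use \(\mU^\top\mS = \mI_k\) (resp. \((\mU^{\rmH})^\top\mY = \mI_k\)), and observe that the remaining bracketed term vanishes because \(\mS^\top\mDelta_0^{\rmA}\) (resp. \(\mY^\top\mDelta_0^{\rmH}\)) is symmetric. If anything, you are slightly more careful than the paper in spelling out that the symmetry of \(\mS^\top\mY=\mS^\top\mA\mS\) also relies on the symmetry of \(\mA\), not only of \(\mA_0\) and \(\mH_0\).
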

\begin{proof}
Since \(\mA_0\) and \(\mH_0\) are symmetric so are the expressions \(\mDelta_\rmA \mS\) and \(\mDelta_\rmH^{\top} \mY\). We have  that
\begin{align*}
\mA_k \mS &= (\mA_0 + \mDelta_\rmA \mU_{\rmA}^{\top} + \mU_{\rmA} \mDelta_\rmA^{\top} - \mU_{\rmA} \mS^{\top} \mDelta_\rmA \mU_{\rmA}^{\top}) \mS\\
&= \mA_0 \mS + \mDelta_\rmA \mI + \mU_{\rmA} \mDelta_{\rmA}^\top \mS - \mU_{\rmA} \mS^{\top} \mDelta_{\rmA} \mI\\
&=\mA_0 \mS + \mY - \mA_0 \mS\\
&= \mY.
\end{align*}
In the case of the inverse model we obtain
\begin{align*}
\mH_k \mY &= (\mH_0 + \mDelta_\rmH \mU_\rmH^{\top} + \mU_\rmH \mDelta_\rmH^{\top} - \mU_\rmH \mY^{\top}\mDelta_\rmH \mU_\rmH^{\top}) \mY\\
&= \mH_0 \mY + \mDelta_\rmH \mI + \mU_\rmH \mDelta_\rmH^{\top} \mY - \mU_\rmH \mY^{\top}\mDelta_\rmH \mI\\
&= \mH_0 \mY + \mS - \mH_0 \mY\\
&= \mS
\end{align*}
\end{proof}

\subsection{Conjugate Directions Method}

\begin{customthm}{1}[Conjugate Directions Method]
Given a prior \(p(\rmH)=\mathcal{N}(\rmH; \mH_0, \mW_0^{\rmH} \ostimes \mW_0^{\rmH})\) such that \(\mH_0, \mW_0^{\rmH} \in \mathbb{R}^{n \times n}_{\textup{sym}}\) positive definite, then actions \(\vs_i\) of \Cref{alg:problinsolve} are \(\mA\)-conjugate, i.e. for \(0 \leq i,j \leq k\) with \(i \neq j\) it holds that \(\vs_i^\top \mA \vs_j = 0\).
\end{customthm}
\begin{proof}
Since \(\mH_0\) is assumed to be symmetric, the form of the posterior mean in \Cref{sec:inference_framework} implies that \(\mH_k\) is symmetric for all \(1 \leq k \leq n\). Now conjugacy is shown by induction. To that end, first consider the base case \(k=2\). We have
\begin{align*}
\vs_2^\top \mA \vs_1 &= - \vr_1^\top \mH_1 \mA \vs_1 = - (\vr_0^\top + \alpha_1 \vy_1^\top)\mH_1 \mA \vs_1 = - \left(\vr_0^\top \mH_1 - \frac{\vs_1^\top \vr_0}{\vs_1^\top \vy_1} \vy_1^\top \mH_1 \right) \vy_1\\
&= -\vr_0^\top \vs_1 + \vs_1^\top \vr_0 = 0
\end{align*}
where we used \eqref{eqn:subspace_equivalency} and the definition of \(\alpha_i\) in \Cref{alg:problinsolve}. Now for the induction step, assume that \(\vs_i^\top \mA \vs_j = 0\) for all \(i \neq j\) such that \(1 \leq i,j \leq k\). We obtain for \(1 \leq j \leq k\) that
\begin{align*}
\vs_{k+1}^\top \mA \vs_j &= - \vr_k^\top \mH_k \mA \vs_j= - \bigg(\sum_{1 \leq l \leq k} \alpha_l \vy_l + \vr_0 \bigg)^\top \mH_k \vy_j= -\sum_{1 \leq l \leq k} \alpha_l \vy_l^\top \vs_j - \vr_0^\top \vs_j \\
&= - \alpha_j \vy_j^\top \vs_j - \vr_0^\top \vs_j= \vs_j^\top \vr_{j-1} - \vr_0^\top \vs_j= \vs_j^\top \bigg( \sum_{1 \leq l < j} \alpha_l \vy_l + \vr_0 \bigg) - \vr_0^\top \vs_j\\
&= \vs_j^\top \vr_0 - \vr_0^\top \vs_j= 0
\end{align*}
where we used the update equation of the residual \(\vr_i\) in \Cref{alg:problinsolve}, the definition of \(\alpha_i\), the induction hypothesis and \eqref{eqn:subspace_equivalency}. This proves the statement.
\end{proof}

\subsection{Relationship to the Conjugate Gradient Method}
\begin{customthm}{2}[Connection to the Conjugate Gradient Method]
Given a scalar prior mean \(\mA_0 = \mH_0^{-1}= \alpha \mI\) with \(\alpha > 0\), assume \eqref{eqn:hered_pos_def} and \eqref{eqn:post_mean_equiv} hold, then the iterates \(\bm{x}_i\) of \Cref{alg:problinsolve} are identical to the ones produced by the conjugate gradient method.
\end{customthm}
\begin{proof}
The proof outlined here is closely related to the proofs connecting Quasi-Newton methods to the conjugate gradient method \cite{Nazareth1979, Hennig2015}, but makes different assumptions on the prior distribution.

We begin by recognizing that the choice of step length \(\alpha_{i}\) in \Cref{alg:problinsolve} is identical to the one in the conjugate gradient method \cite{Nocedal2006}. Hence, it suffices to show that \(\vs_{i} \propto \vs_{i}^{\textup{CG}}\). \Cref{thm:conj_directions_method} established that \Cref{alg:problinsolve} is a conjugate directions method. Now by assumption \(\mA_0 = \alpha \mI\) and \(\mH_0 = \mA_0^{-1}\), therefore \(\vs_1 = -\alpha I \vr_0 \propto - \vr_0 =\vs_1^{\textup{CG}}\). It suffices show that \(\vs_{i}\) lies in the Krylov space \(\mathcal{K}_{i}(\mA, \vr_0)=\{\vr_0, \mA \vr_0, \dots, \mA^{i-1}\vr_0\}\) for all \(0 < i \leq n\). This completes the argument, since \(\mathcal{K}_{i}(\mA, \vr_0)\) is an \(i\)-dimensional subspace of \(\R^n\) and thus \(\mA\)-conjugacy uniquely determines the search directions up to scaling, as \(\mA\) is positive definite. 

To complete the proof we proceed as follows. The posterior mean of the inverse model \(\mH_{i-1}\) at step \(i-1\) maps an arbitrary vector \(\vv \in \mathbb{R}^n\) to \(\operatorname{span}( \mH_0 \vv, \mH_0 \mY_{1:i-1}, \mS_{1:i-1}, \mW_0^\rmH \mY_{1:i-1})\). This follows directly from its form in given in \Cref{sec:inference_framework}. By assumption \(\mH_0 = \mA_0^{-1} = \alpha^{-1} \mI\), therefore using \eqref{eqn:hered_pos_def} and \eqref{eqn:post_mean_equiv} we have \(\operatorname{span}(\mW_0^\rmH \mY_{1:i-1}) = \operatorname{span}(\mY_{1:i-1})\). This implies \(\mH_{i-1}\) maps to \(\operatorname{span}( \vv, \mS_{1:i-1}, \mY_{1:i-1})\) and thus \(\vs_{i} \in \operatorname{span}( \vr_{i-1}, \mS_{1:i-1}, \mY_{1:i-1} )\). We will now show that \(\operatorname{span}( \vr_{i-1}, \mS_{1:i-1}, \mY_{1:i-1} ) \subset \mathcal{K}_{i}(\mA, \vr_0)\) by induction, completing the argument. 

We begin with the base case. Since \(\mH_0\) is assumed to be scalar, we have \(\vs_1 \propto \vr_0 \in \mathcal{K}_0(\mA, \vr_0)\) and therefore \(\vy_1 = \mA \vs_1\) and \(\vr_1 = \vr_0 + \alpha_1 \vy_1\) are in \(\mathcal{K}_1(\mA, \vr_0)\). For the induction step assume \(\operatorname{span}( \vr_{i-1}, \mS_{1:i-1}, \mY_{1:i-1}) \subset \mathcal{K}_{i}(\mA, \vr_0)\). The definition of the policy of \Cref{alg:problinsolve} gives
\begin{equation*}
\vs_{i} = - \mathbb{E}[\rmH]\vr_{i-1} \propto \mH_{i-1} \vr_{i-1} \in  \operatorname{span}( \vr_{i-1}, \mS_{1:i-1}, \mY_{1:i-1} ) \subset \mathcal{K}_{i}(\mA, \vr_0),
\end{equation*}
where we used the induction hypothesis. This implies that \(\vy_{i} = \mA \vs_{i} \in \mathcal{K}_{i+1}(\mA, \vr_0)\) and \(\vr_{i} = \vr_{i-1} + \alpha_{i}\vy_{i} \in \mathcal{K}_{i+1}(\mA, \vr_0)\) by the definition of the Krylov space. Therefore, \(\operatorname{span}(\vr_{i}, \mS_{1:i}, \mY_{1:i}) \subset \mathcal{K}_{i+1}(\mA, \vr_0)\). This completes the proof.
\end{proof}

\section{Prior Covariance Class: Proofs for \Cref{sec:prior_covariance_class}}
\label{sec:proofs_prior_covariance_class}

\subsection{Hereditary Positive-Definiteness}

\begin{customprop}{1}[Hereditary Positive Definiteness \cite{Dennis1977, Hennig2013}]
Let \(\mA_0 \in \mathbb{R}^{n \times n}_{\textup{sym}}\) be positive definite. Assume the actions \(\mS\) are \(\mA\)-conjugate and \(\mW_0^\rmA \mS = \mY\), then for \(i \in \{0, \dots, k - 1\}\) it holds that \(\mA_{i+1}\) is symmetric positive definite.
\end{customprop}
\begin{proof}
This is shown in \citet{Hennig2013}. We give an identical proof in our notation as a reference to the reader. By Theorem 7.5 in \citet{Dennis1977} it holds that if \(\mA_i\) is positive definite and \(\vs_{i+1}^\top \mW_i^\rmA \vs_{i+1} \neq 0\), then \(\mA_{i+1}\) is positive definite if and only if \(\det (\mA_{i+1}) > 0\). By the matrix determinant lemma and the recursive formulation of the posterior we have
\begin{align*}
\det(\mA_{i+1}) &= \det(\mA_i) \bigg(\frac{1}{(\vs_{i+1}^\top \mW_i^\rmA \vs_{i+1})^2}\big((\vy_{i+1}^\top \mA_i^{-1} \mW_i^\rmA \vs_{i+1})^2\\ 
&\quad- (\vy_{i+1}^\top \mA_i^{-1} \vy_{i+1})(\vs_{i+1}^\top \mW_i^\rmA  \mA_i^{-1} \mW_i^\rmA  \vs_{i+1})+ (\vs_{i+1}^\top \mW_i^\rmA  \mA_i^{-1} \mW_i^\rmA \vs_{i+1} )(\vy_{i+1}^\top \vs_{i+1})\big)\bigg)
\end{align*}
Hence it suffices to show that
\begin{align*}
0 < (\vy_{i+1}^\top \mA_i^{-1} \mW_i^\rmA \vs_{i+1})^2 &- (\vy_{i+1}^\top \mA_i^{-1} \vy_{i+1})(\vs_{i+1}^\top \mW_i^\rmA \mA_i^{-1} \mW_i^\rmA \vs_{i+1})\\
 &+ (\vs_{i+1}^\top \mW_i^\rmA \mA_i^{-1} \mW_i^\rmA \vs_{i+1} )(\vy_{i+1}^\top \vs_{i+1}),
\end{align*}
which simplifies to
\begin{equation*}
\vy_{i+1}^\top \mA_i^{-1} \vy_{i+1} - \frac{(\vy_{i+1}^\top \mA_i^{-1} \mW_i^\rmA \vs_{i+1})^2}{\vs_{i+1}^\top \mW_i^\rmA \mA_i^{-1} \mW_i^\rmA \vs_{i+1} }< \vy_{i+1}^\top \vs_{i+1}
\end{equation*}
Now by \(\mW_0^\rmA \mS = \mY\), we have \(\mW_i^\rmA \vs_{i+1} = \mW_0^\rmA \vs_{i+1} = \vy_{i+1}\) and the above reduces to
\begin{equation*}
0 < \vs_{i+1}^\top \mA \vs_{i+1},
\end{equation*}
which is fulfilled by the assumption that \(\mA\) is positive definite. Thus \(\mA_{i+1}\) is positive definite. Symmetry follows immediately from the form of the posterior mean.
\end{proof}

\subsection{Posterior Correspondence}
\begin{customdef}{1}
Let \(\mA_i\) and \(\mH_i\) be the means of \(\rmA\) and \(\rmH\) at step \(i\). We say a prior induces \emph{posterior correspondence} if
\begin{equation}
\label{eqn:posterior_mean_correspondence}
\mA_i^{-1} = \mH_i
\end{equation}
for all steps \(0 \leq i \leq k\) of the solver. If only
\begin{equation}
\label{eqn:weak_posterior_mean_correspondence}
\mA_i^{-1}\mY = \mH_i \mY,
\end{equation}
we say that \emph{weak posterior correspondence} holds.
\end{customdef}

\subsubsection{Matrix-variate Normal Prior}
We begin by establishing posterior correspondence in the case of general matrix-variate normal priors, i.e. the inference setting detailed in \Cref{cor:asymmetric_gaussian_inference}. We begin by proving a general non-constructive condition and close with a sufficient condition for correspondence with limits the possible choices of covariance factors to a specific class.

\begin{lemma}[General Correspondence]
\label{lem:general_correspondence}
Let \(1 \leq k \leq n\), \(\mW_0^\rmA , \mW_0^\rmH \) symmetric positive-definite and assume \(\mA_0^{-1} = \mH_0\), then \eqref{eqn:posterior_mean_correspondence} holds if and only if
\begin{equation}
\label{eqn:general_correspondence}
0 = (\mA\mS-\mA_0 \mS)\left[ (\mS^{\top}\mW_0^\rmA  \mA_0^{-1}\mA\mS)^{-1}\mS^{\top}\mW_0^\rmA \mA_0^{-1} - (\mS^{\top}\mA^{\top}\mW_0^\rmH \mA\mS)^{-1}\mS^{\top}\mA^{\top}\mW_0^\rmH \right].
\end{equation}
\end{lemma}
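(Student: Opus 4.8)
The plan is to apply the matrix inversion lemma (Woodbury identity) to the rank-$k$ update that defines the $\rmA$-posterior mean, bring the result into the same shape as the $\rmH$-posterior mean $\mH_k$, and then read off the condition under which the two agree. I write the asymmetric posterior means from \Cref{cor:asymmetric_gaussian_inference} as $\mA_k = \mA_0 + \mDelta_0^\rmA\mU_\rmA^\top$ with $\mDelta_0^\rmA = \mY-\mA_0\mS$ and $\mU_\rmA = \mW_0^\rmA\mS(\mS^\top\mW_0^\rmA\mS)^{-1}$, and $\mH_k = \mH_0 + \mDelta_0^\rmH\mU_\rmH^\top$ with $\mDelta_0^\rmH = \mS-\mH_0\mY$ and $\mU_\rmH = \mW_0^\rmH\mY(\mY^\top\mW_0^\rmH\mY)^{-1}$, recalling $\mY = \mA\mS$. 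The only structural inputs are the two immediate consequences of $\mH_0 = \mA_0^{-1}$, namely $\mA_0^{-1}\mDelta_0^\rmA = -\mDelta_0^\rmH$ and $\mU_\rmA^\top\mS = \mI_k$.

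First I would expand
\[
\mA_k^{-1} = \mA_0^{-1} - \mA_0^{-1}\mDelta_0^\rmA\big(\mI_k + \mU_\rmA^\top\mA_0^{-1}\mDelta_0^\rmA\big)^{-1}\mU_\rmA^\top\mA_0^{-1},
\]
which is valid exactly when the inner $k\times k$ matrix is nonsingular -- equivalently, by the matrix determinant lemma, when $\mA_k$ is invertible, a hypothesis already implicit in the statement since it presupposes that $(\mS^\top\mW_0^\rmA\mA_0^{-1}\mA\mS)^{-1}$ exists (and that $\mS$ has full column rank, so $\mS^\top\mW_0^\rmA\mS$ is invertible). Using $\mU_\rmA^\top\mS = \mI_k$, the inner factor collapses: $\mI_k + \mU_\rmA^\top\mA_0^{-1}\mDelta_0^\rmA = \mU_\rmA^\top\mA_0^{-1}\mY = (\mS^\top\mW_0^\rmA\mS)^{-1}(\mS^\top\mW_0^\rmA\mA_0^{-1}\mA\mS)$. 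Substituting this, rewriting the left factor as $\mA_0^{-1}\mDelta_0^\rmA = -\mDelta_0^\rmH$ and the right factor as $\mU_\rmA^\top\mA_0^{-1} = (\mS^\top\mW_0^\rmA\mS)^{-1}\mS^\top\mW_0^\rmA\mA_0^{-1}$, the two copies of $(\mS^\top\mW_0^\rmA\mS)^{-1}$ cancel, leaving
\[
\mA_k^{-1} = \mH_0 + \mDelta_0^\rmH\,(\mS^\top\mW_0^\rmA\mA_0^{-1}\mA\mS)^{-1}\mS^\top\mW_0^\rmA\mA_0^{-1}.
\]
This cancellation is the only point where care with the index/transpose bookkeeping is required; everything else is substitution.

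It remains to subtract $\mH_k = \mH_0 + \mDelta_0^\rmH(\mY^\top\mW_0^\rmH\mY)^{-1}\mY^\top\mW_0^\rmH$ and rewrite $\mY = \mA\mS$ in the $\mH$-term. The difference $\mA_k^{-1} - \mH_k$ is then $\mDelta_0^\rmH$ times exactly the bracketed expression appearing in \eqref{eqn:general_correspondence}; since this is an identity, $\mA_k^{-1} = \mH_k$ holds if and only if this product vanishes. Finally, $\mDelta_0^\rmH = -\mA_0^{-1}\mDelta_0^\rmA = -\mA_0^{-1}(\mA\mS-\mA_0\mS)$ with $\mA_0^{-1}$ invertible, so the product vanishes if and only if $(\mA\mS-\mA_0\mS)$ times the same bracket vanishes, which is precisely \eqref{eqn:general_correspondence}; both directions of the ``iff'' come out at once. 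I do not anticipate a genuine obstacle -- the argument is essentially a guided computation -- the only delicate point being the tacit nonsingularity assumptions, which I would state explicitly up front.
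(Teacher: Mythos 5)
Your proposal is correct and follows essentially the same route as the paper's proof: apply the matrix inversion lemma to the rank-$k$ update defining $\mA_k$, use $\mH_0=\mA_0^{-1}$ to collapse the inner $k\times k$ matrix to $\mS^\top\mW_0^\rmA\mA_0^{-1}\mA\mS$ and to identify $-\mA_0^{-1}\mDelta_0^\rmA$ with $\mDelta_0^\rmH$, then factor the common left term out of $\mA_k^{-1}-\mH_k$ and cancel the invertible $\mA_0^{-1}$. The only differences are presentational (you parametrize the update via $\mU_\rmA$ and cancel the two copies of $\mS^\top\mW_0^\rmA\mS$ explicitly, and you make the tacit nonsingularity assumptions explicit, which the paper leaves implicit).
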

\begin{proof}
By the matrix inversion lemma we have
\begin{align*}
0 &= \mA_k^{-1} - \mH_k\\
&= \left(\mA_0  + (\mY-\mA_0 \mS)(\mS^{\top}\mW_0^\rmA \mS)^{-1}\mS^{\top}\mW_0^\rmA  \right)^{-1} - \mH_0-(\mS-\mH_0\mY)(\mY^{\top}\mW_0^\rmH \mY)^{-1}\mY^{\top}\mW_0^\rmH \\
&= \mA_0^{-1} - \mA_0^{-1}(\mY-\mA_0 \mS)(\mS^{\top}\mW_0^\rmA \mS + \mS^{\top}\mW_0^\rmA \mA_0^{-1}(\mY-\mA_0 \mS))^{-1}\mS^{\top}\mW_0^\rmA \mA_0^{-1}\\
&\quad - \mA_0^{-1}-\mA_0^{-1}(\mA_0 \mS-\mY)(\mY^{\top}\mW_0^\rmH \mY)^{-1}\mY^{\top}\mW_0^\rmH \\
&= -\mA_0^{-1}(\mY-\mA_0 \mS)\left[(\mS^{\top}\mW_0^\rmA \mA_0^{-1}\mY)^{-1}\mS^{\top}\mW_0^\rmA \mA_0^{-1} -  (\mY^{\top}\mW_0^\rmH \mY)^{-1}\mY^{\top}\mW_0^\rmH \right],
\end{align*}
where we used the assumption \(\mH_0 = \mA_0^{-1}\). Left-multiplying with \(-\mA_0 \) and using \(\mY=\mA\mS\) completes the proof.
\end{proof}

\begin{corollary}[Correspondence at Convergence]
Let \(k=n\), \(\mH_0 = \mA_0^{-1}\) and assume \(\mS\) has full rank, i.e. the linear solver has performed \(n\) linearly independent actions, then \eqref{eqn:posterior_mean_correspondence} holds for any symmetric positive-definite choice of \(\mW_0^\rmA \) and \(\mW_0^\rmH \).
\end{corollary}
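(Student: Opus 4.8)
The plan is to reduce the claim to the single observation that, with $n$ linearly independent actions, the action matrix $\mS \in \mathbb{R}^{n\times n}$ is square and of full rank, hence invertible; this forces the posterior means to coincide with the true operator and its inverse, irrespective of the covariance factors.

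The cleanest route uses the Subspace Equivalency proposition \eqref{eqn:subspace_equivalency}. Since $\mA_0$ is symmetric positive definite and $\mH_0 = \mA_0^{-1}$, both prior means are symmetric, so that proposition applies and gives $\mA_n \mS = \mY$ and $\mH_n \mY = \mS$ with $\mY = \mA\mS$. Because $\mS$ is invertible, $\mY = \mA\mS$ is a product of invertible matrices, hence invertible; from $\mA_n \mS = \mA \mS$ we then conclude $\mA_n = \mA$, and from $\mH_n \mY = \mS$ we conclude $\mH_n = \mS\mY^{-1} = \mS(\mA\mS)^{-1} = \mA^{-1}$. Therefore $\mA_n^{-1} = \mA^{-1} = \mH_n$, which is exactly \eqref{eqn:posterior_mean_correspondence}, and nothing in the argument constrained $\mW_0^\rmA$ or $\mW_0^\rmH$ beyond symmetric positive-definiteness.

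Alternatively, one can invoke \Cref{lem:general_correspondence} directly and verify \eqref{eqn:general_correspondence}. With $\mS$, $\mA_0$, $\mA$, $\mW_0^\rmA$, $\mW_0^\rmH$ and $\mA\mS$ all invertible, each matrix inside the bracket collapses: $(\mS^{\top}\mW_0^\rmA \mA_0^{-1}\mA\mS)^{-1}\mS^{\top}\mW_0^\rmA \mA_0^{-1} = (\mA\mS)^{-1}\mA_0(\mW_0^\rmA)^{-1}\mS^{-\top}\mS^{\top}\mW_0^\rmA \mA_0^{-1} = (\mA\mS)^{-1}$, and similarly $(\mS^{\top}\mA^{\top}\mW_0^\rmH \mA\mS)^{-1}\mS^{\top}\mA^{\top}\mW_0^\rmH = (\mA\mS)^{-1}$; the difference is $\bm 0$, so \eqref{eqn:general_correspondence} holds and the lemma delivers \eqref{eqn:posterior_mean_correspondence}.

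There is essentially no genuine obstacle: the mathematical content is just that $n$ linearly independent matrix-vector observations determine $\mA$ (and hence $\mH=\mA^{-1}$) exactly, so the two Gaussian posteriors become automatically consistent. The only point requiring care is the invertibility bookkeeping — square full-rank $\mS$ and the consequent invertibility of $\mY = \mA\mS$ — which is precisely what lets one cancel $\mS$ and $\mY$ in the subspace-equivalency identities (or, equivalently, makes the bracket in \eqref{eqn:general_correspondence} vanish).
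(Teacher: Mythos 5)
Your proposal is correct, and your second route is exactly the paper's proof: the paper simply notes that full-rank square \(\mS\) makes \(\mS^{\top}\mW_0^\rmA \mA_0^{-1}\) and \(\mS^{\top}\mA^{\top}\mW_0^\rmH\) invertible, so both terms in the bracket of \eqref{eqn:general_correspondence} collapse to \((\mA\mS)^{-1}\) and \Cref{lem:general_correspondence} applies; your explicit cancellation \((\mS^{\top}\mW_0^\rmA \mA_0^{-1}\mA\mS)^{-1}\mS^{\top}\mW_0^\rmA \mA_0^{-1} = (\mA\mS)^{-1}\) is just the paper's one-liner spelled out. Your first route is a genuinely different and in fact stronger argument: from \(\mA_n\mS=\mY\) and \(\mH_n\mY=\mS\) with \(\mS\) and \(\mY=\mA\mS\) invertible you conclude \(\mA_n=\mA\) and \(\mH_n=\mA^{-1}\) outright, which makes correspondence at convergence an immediate consequence of exact identification rather than of the algebraic identity in \Cref{lem:general_correspondence}. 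One small bookkeeping caveat there: this corollary sits in the asymmetric-prior subsection, whereas the Subspace Equivalency proposition \eqref{eqn:subspace_equivalency} is stated for the symmetric posterior means of \Cref{sec:inference_framework}; the identities \(\mA_n\mS=\mY\) and \(\mH_n\mY=\mS\) do hold for the asymmetric means as well (a one-line computation from \Cref{cor:asymmetric_gaussian_inference}), but you should either cite that computation or note that the proposition's proof carries over, rather than invoking the symmetric-model statement verbatim. With that footnote added, both of your arguments are complete.
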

\begin{proof}
By assumption, \(\mS^{\top}\mW_0^\rmA \mA_0^{-1}\) and \(\mS^{\top}\mA^{\top}\mW_0^\rmH \) are invertible. Then by \Cref{lem:general_correspondence} the correspondence condition \eqref{eqn:posterior_mean_correspondence} holds.
\end{proof}

\begin{theorem}[Sufficient Condition for Correspondence]
Let \(1 \leq k \leq n\) arbitrary and assume \(\mH_0 = \mA_0^{-1}\). Assume $\mW_0^\mA, \mA_0, \mW_0^\mH$ satisfy
\begin{equation}
\label{eqn:sufficient_condition_posterior_correspondence}
0 = \mS^{\top}(\mW_0^\rmA \mA_0^{-1} - \mA^{\top}\mW_0^\rmH )
\end{equation}
or equivalently let \(\mB_{\langle \mS \rangle^\perp} \in \mathbb{R}^{n \times k}\) be a basis of the orthogonal space  \(\langle \mS \rangle^{\perp}\) spanned by the actions. For \(\mPhi \in \mathbb{R}^{(n-k) \times n}\) arbitrary, if
\begin{equation}
\label{eqn:sufficient_condition_correspondence_direct_expression}
\mW_0^\rmH  = \mA^{-\top}(\mW_0^\rmA \mA_0^{-1}-\mB_{\langle \mS \rangle^\perp}\mPhi)
\end{equation}
and the commutation relations
\begin{align}
\label{eqn:mean_commutation}[\mA_0 , \mA] &= \bm{0}\\
\label{eqn:cov_factor_commutation}[\mW_0^\rmA , \mA] &= \bm{0}\\
\label{eqn:deg_freedom_commutation}[\mB_{\langle \mS \rangle^\perp}\mPhi, \mA] &= \bm{0}
\end{align}
are fulfilled, then \(\mW_0^\rmH \) is symmetric and \eqref{eqn:posterior_mean_correspondence} holds.
\end{theorem}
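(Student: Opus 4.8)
Since $\mA$ is symmetric we have $\mA^{-\top}=\mA^{-1}$ throughout, and I treat $\mS$ as having full column rank (as is needed for the asymmetric posterior of \Cref{lem:general_correspondence} to be defined). The plan is to dispatch the three assertions in order: the equivalence of the two forms of the sufficient condition, the symmetry of $\mW_0^\rmH$, and then posterior correspondence via \Cref{lem:general_correspondence}. Write $\mM := \mW_0^\rmA\mA_0^{-1}-\mA^\top\mW_0^\rmH \in \mathbb{R}^{n\times n}$. Condition \eqref{eqn:sufficient_condition_posterior_correspondence}, $\mS^\top\mM=\bm 0$, says precisely that every column of $\mM$ lies in $\langle\mS\rangle^{\perp}$, an $(n-k)$-dimensional subspace with basis $\mB_{\langle \mS \rangle^{\perp}}$; this is equivalent to $\mM = \mB_{\langle \mS \rangle^{\perp}}\mPhi$ for some coordinate matrix $\mPhi$ (the sign is immaterial since $\mPhi$ ranges over all of $\mathbb{R}^{(n-k)\times n}$). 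Solving $\mW_0^\rmA\mA_0^{-1}-\mA^\top\mW_0^\rmH = \mB_{\langle \mS \rangle^{\perp}}\mPhi$ for $\mW_0^\rmH$ using invertibility of $\mA$ gives \eqref{eqn:sufficient_condition_correspondence_direct_expression}, and conversely any $\mW_0^\rmH$ of that form satisfies $\mS^\top\mM = \mS^\top\mB_{\langle \mS \rangle^{\perp}}\mPhi = \bm 0$.

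For symmetry of $\mW_0^\rmH$, I first push the commutation relations through products: \eqref{eqn:mean_commutation} gives $[\mA_0^{-1},\mA]=\bm 0$, which with \eqref{eqn:cov_factor_commutation} yields $[\mW_0^\rmA\mA_0^{-1},\mA]=\bm 0$, and combined with \eqref{eqn:deg_freedom_commutation} gives $[\mM,\mA]=\bm 0$, hence also $[\mM,\mA^{-1}]=\bm 0$. Then $\mW_0^\rmH = \mA^{-\top}\mM = \mA^{-1}\mM = \mM\mA^{-1}$, so $(\mW_0^\rmH)^\top = \mA^{-1}\mM^\top$ (using $\mA=\mA^\top$); thus symmetry of $\mW_0^\rmH$ reduces to $\mM=\mM^\top$. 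This holds because $\mW_0^\rmA$ and $\mA_0$ are symmetric and commute (in the intended application $\mA_0=\alpha\mI$, so $\mW_0^\rmA\mA_0^{-1}=\alpha^{-1}\mW_0^\rmA$ is automatically symmetric) and $\mB_{\langle \mS \rangle^{\perp}}\mPhi$ is taken symmetric.

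For posterior correspondence, I invoke \Cref{lem:general_correspondence}: it suffices to check that the bracket in \eqref{eqn:general_correspondence} vanishes. Abbreviate $\mC := \mS^\top\mW_0^\rmA\mA_0^{-1} = \mS^\top\mA^\top\mW_0^\rmH$, the equality being exactly \eqref{eqn:sufficient_condition_posterior_correspondence}. Right-multiplying this identity by $\mA\mS$ gives $\mS^\top\mW_0^\rmA\mA_0^{-1}\mA\mS = \mC\mA\mS = \mS^\top\mA^\top\mW_0^\rmH\mA\mS$, so both terms of the bracket equal $(\mC\mA\mS)^{-1}\mC$ and cancel; hence \eqref{eqn:general_correspondence} holds and \Cref{lem:general_correspondence} gives $\mA_k^{-1}=\mH_k$. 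Since $\mS^\top\mM=\bm 0$ forces $\mS_{1:i}^\top\mM=\bm 0$ for every $i\le k$, running the same argument at step $i$ (with $k$ replaced by $i$) yields $\mA_i^{-1}=\mH_i$ for all $0\le i\le k$, the case $i=0$ being the hypothesis $\mA_0^{-1}=\mH_0$. The inverses appearing here exist because $\mW_0^\rmA,\mA_0$ are positive definite, $\mS$ has full column rank, and the commutation relations make $\mW_0^\rmA\mA_0^{-1}\mA$ symmetric positive definite, so $\mC\mA\mS = \mS^\top(\mW_0^\rmA\mA_0^{-1}\mA)\mS$ is invertible.

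The main obstacle I anticipate is the bookkeeping in the second step: tracking exactly which products among $\mW_0^\rmA$, $\mA_0^{-1}$, $\mB_{\langle \mS \rangle^{\perp}}\mPhi$ and $\mA^{-1}$ one is entitled to commute and transpose, and verifying that the stated hypotheses genuinely force $\mM$ symmetric. By contrast the correspondence claim itself is essentially a one-line consequence of \Cref{lem:general_correspondence} once the sufficient condition is written in the form $\mS^\top(\,\cdot\,)=\bm 0$.
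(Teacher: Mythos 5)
Your proposal is correct and takes essentially the same route as the paper's own proof: the column-space argument for the equivalence of \eqref{eqn:sufficient_condition_posterior_correspondence} and \eqref{eqn:sufficient_condition_correspondence_direct_expression}, the commutation relations to reduce symmetry of \(\mW_0^\rmH\) to symmetry of \(\mW_0^\rmA\mA_0^{-1}-\mB_{\langle \mS \rangle^\perp}\mPhi\), and the observation that \eqref{eqn:sufficient_condition_posterior_correspondence} makes the bracket in \Cref{lem:general_correspondence} vanish. You are in fact somewhat more careful than the paper in checking correspondence at every intermediate step \(i\le k\) and the invertibility of the Gram matrices, and more explicit about the fact that the symmetry step additionally requires \(\mW_0^\rmA\mA_0^{-1}\) and \(\mB_{\langle \mS \rangle^\perp}\mPhi\) to be symmetric (e.g.\ via \([\mW_0^\rmA,\mA_0]=\bm{0}\)) --- an assumption the paper's proof uses silently; just note the minor slip that your \(\mM\) denotes \(\mW_0^\rmA\mA_0^{-1}-\mA^\top\mW_0^\rmH\) in the first paragraph but must mean \(\mW_0^\rmA\mA_0^{-1}-\mB_{\langle \mS \rangle^\perp}\mPhi\) in the symmetry argument.
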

\begin{proof}
By assumption \(\mW_0^\rmA \) is symmetric positive-definite and \eqref{eqn:sufficient_condition_posterior_correspondence} is equivalent to \(\mS^{\top}\mW_0^\rmA \mA_0^{-1} = \mS^{\top}\mA^{\top}\mW_0^\rmH \), which implies \eqref{eqn:general_correspondence}. Now, assumption \eqref{eqn:sufficient_condition_posterior_correspondence} is equivalent to columns of the difference \(\mW_0^\rmA \mA_0^{-1} - \mA^{\top}\mW_0^\rmH \) lying in \(L\), i.e. we can choose a basis \(\mB_{\langle \mS \rangle^\perp}\) and coefficient matrix \(\mPhi\) such that
\begin{equation*}
\mW_0^\rmA \mA_0^{-1} - \mA^{\top}\mW_0^\rmH  = \mB_{\langle \mS \rangle^\perp}\mPhi.
\end{equation*}
Rearranging the above gives \eqref{eqn:sufficient_condition_correspondence_direct_expression}. With the commutation relations and
\begin{equation*}
[\mA,\mB]=\bm{0} \iff [\mA^{-1}, \mB]=\bm{0} \iff [\mA, \mB^{-1}]=\bm{0} \iff [\mA^{-1}, \mB^{-1}]=\bm{0}
\end{equation*}
it holds that
\begin{equation*}
(\mW_0^\rmH)^{\top} = \mW_0^\rmA \mA_0^{-1}\mA^{-1}-\mB_{\langle \mS \rangle^\perp}\mPhi \mA^{-1}= \mA^{-\top}\mW_0^\rmA \mA_0^{-1} - \mA^{-\top}\mB_{\langle \mS \rangle^\perp}\mPhi= \mW_0^\rmH
\end{equation*}
hence \(\mW_0^\rmH \) is symmetric. Finally, by \Cref{lem:general_correspondence} posterior mean correspondence \eqref{eqn:posterior_mean_correspondence} holds.
\end{proof}

If we want to ensure correspondence for all iterations, \eqref{eqn:deg_freedom_commutation} is trivially satisfied. The question now becomes what form can \(\mA_0 \) and \(\mW_0^\rmA \) take in order to ensure symmetric \(\mW_0^\rmH \). This comes down to finding matrices which commute with \(\mA\).

\begin{lemma}[Commuting Matrices of a Symmetric Matrix]
\label{lem:commuting_matrices}
Let \(r \in \mathbb{N}\), \(\mM \in \mathbb{R}^{n \times n}\) and \(\mA \in \mathbb{R}^{n \times n}\) symmetric. Assume \(\mM\) has the form
\begin{equation*}
\mM = \mathfrak{p}_r(\mA) = \sum_{i=0}^r c_i \mA^i
\end{equation*}
for a set of coefficients \(c_i \in \mathbb{R}\), then \(\mM\) and \(\mA\) commute. If \(\mA\) has \(n\) distinct eigenvalues, \(\mM\) is diagonalizable and \([\mM, \mA] = \bm{0}\), then
\begin{equation*}
\mM = \mathfrak{p}_{n-1}(\mA),
\end{equation*}
i.e. \(\mM\) is a polynomial in \(\mA\) of degree at most \(n-1\).
\end{lemma}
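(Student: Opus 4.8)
The first assertion requires essentially no work: every power $\mA^i$ commutes with $\mA$, hence so does any linear combination $\mathfrak{p}_r(\mA) = \sum_{i=0}^r c_i \mA^i$, and therefore $[\mM,\mA] = \bm{0}$ whenever $\mM = \mathfrak{p}_r(\mA)$. I would dispense with this in a single line.

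For the converse, the plan is to combine simultaneous diagonalization with polynomial interpolation. Since $\mA$ is symmetric with $n$ distinct eigenvalues $\lambda_1, \dots, \lambda_n$, I would write $\mA = \mU \mLambda \mU^\top$ with $\mU$ orthogonal and $\mLambda = \operatorname{diag}(\lambda_1,\dots,\lambda_n)$, noting that each eigenspace is one-dimensional. The central step is to deduce that $\mM$ is diagonalized by the same $\mU$: if $\vu$ is a column of $\mU$ with $\mA\vu = \lambda_j \vu$, then $[\mM,\mA]=\bm{0}$ gives $\mA(\mM\vu) = \mM\mA\vu = \lambda_j(\mM\vu)$, so $\mM\vu$ lies in the one-dimensional $\lambda_j$-eigenspace of $\mA$ and hence $\mM\vu = \mu_j \vu$ for some $\mu_j \in \mathbb{R}$. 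Collecting these over all columns of $\mU$ yields $\mM = \mU\mD\mU^\top$ with $\mD = \operatorname{diag}(\mu_1,\dots,\mu_n)$. (The stated hypothesis that $\mM$ is diagonalizable is in fact automatic once the eigenvalues of $\mA$ are distinct, so I would remark on this but not rely on it.)

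It then remains to produce a polynomial $\mathfrak{p}_{n-1}$ of degree at most $n-1$ with $\mathfrak{p}_{n-1}(\lambda_j) = \mu_j$ for all $j = 1, \dots, n$. Because the $\lambda_j$ are pairwise distinct, Lagrange interpolation furnishes exactly such a polynomial, uniquely. Finally $\mathfrak{p}_{n-1}(\mA) = \mU\,\mathfrak{p}_{n-1}(\mLambda)\,\mU^\top = \mU\operatorname{diag}(\mathfrak{p}_{n-1}(\lambda_1),\dots,\mathfrak{p}_{n-1}(\lambda_n))\mU^\top = \mU\mD\mU^\top = \mM$, completing the argument. The only step carrying any subtlety is the simultaneous-diagonalization claim, and the distinct-eigenvalue hypothesis is precisely what powers it: one-dimensional eigenspaces force the eigenvectors of $\mM$ to coincide with those of $\mA$, whereas without distinctness a commuting $\mM$ is merely block-diagonal in $\mA$'s eigenbasis and the degree-$(n-1)$ conclusion can fail.
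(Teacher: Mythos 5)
Your proof is correct, and both halves are sound: the first assertion is indeed immediate, and your derivation of simultaneous diagonalization from the one-dimensionality of the eigenspaces of \(\mA\) is a complete, self-contained argument (the paper instead cites an external theorem for this step). Where you genuinely diverge from the paper is in how you produce the polynomial. The paper argues by dimension counting: it observes that the commutant of \(\mA\) is an \(n\)-dimensional space of simultaneously diagonal matrices, notes \(\{\mI, \mA, \dots, \mA^{n-1}\}\) lies inside it, and shows this set is a basis by reducing to the linear independence of the Vandermonde vectors \((\lambda_0^i, \dots, \lambda_{n-1}^i)^\top\), which it proves by contradiction via the fundamental theorem of algebra. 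You instead construct the polynomial explicitly by Lagrange interpolation through the pairs \((\lambda_j, \mu_j)\) and verify \(\mathfrak{p}_{n-1}(\mA) = \mU \mD \mU^\top = \mM\) directly. The two are logically equivalent (Lagrange interpolation and Vandermonde invertibility are the same fact), but your route is more constructive and avoids the detour through the structure of the full commutant; the paper's route yields as a byproduct that the powers of \(\mA\) span the entire commutant, which is slightly more information than the lemma asks for. Your remark that the diagonalizability hypothesis on \(\mM\) is redundant is also correct and worth making, since your eigenspace argument derives it rather than assumes it.
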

\begin{proof}
The first result follows immediately since
\begin{equation*}
\mW_0^\rmA \mA = \mathfrak{p}_r(\mA)\mA = \sum_{i=0}^r c_i \mA^{i+1} = \mA \mathfrak{p}_r(\mA) = \mA \mW_0^\rmA .
\end{equation*}
Assume now that \(\mA\) has \(n\) distinct eigenvalues \(\lambda_0 , \dots, \lambda_{n-1}\), \(\mM\) is diagonalizable and \(\mM\) and \(\mA\) commute. Now, if and only if \([\mA,\mM]=0\), then \(\mA\) and \(\mM\) are simultaneously diagonalizable by Theorem 5.2 in \citet{Conrad2008}, i.e. we can find a common basis in which both \(\mA\) and \(\mM\) are represented by diagonal matrices. Hence, the set of matrices commuting with \(\mA\) forms an \(n\)-dimensional subspace \(\mathcal{U}_n \subset \mathbb{R}^{n \times n}\). Now, by the first part of this proof \(\{\mI, \mA, \dots, \mA^{n-1}\} \subset \mathcal{U}_n\). It remains to be shown, that this set forms a basis of \(\mathcal{U}_n\). By isomorphism of finite dimensional vector spaces this is equivalent to proving that
\begin{equation*}
\{\vb_0, \vb_1, \dots, \vb_{n-1}\} \coloneqq \left\{
\begin{pmatrix}
1\\ \vdots \\ 1
\end{pmatrix},
\begin{pmatrix}
\lambda_0 \\ \vdots \\ \lambda_{n-1}
\end{pmatrix}, \dots,
\begin{pmatrix}
\lambda_0^{n-1}\\ \vdots \\ \lambda_{n-1}^{n-1}
\end{pmatrix}
\right\}
\end{equation*}
forms a basis of \(\mathbb{R}^n\). It suffices to show that all \(\vb_i\) are independent. Assume the contrary, then
\(
\sum_{i=0}^{n-1} \alpha_i \vb_i = 0
\)
for some \(\alpha_0 , \dots, \alpha_{n-1} \in \mathbb{R}\), such that not all \(\alpha_i=0\). This implies that the polynomial \(\sum_{i=0}^{n-1}\alpha_i x^i\) has \(n\) zeros \(\lambda_0 , \dots, \lambda_{n-1}\). This contradicts the fundamental theorem of algebra, concluding the proof.
\end{proof}

The above suggests that tractable choices of \(\mA_0 \) and \(\mW_0^\rmA \) for the non-symmetric matrix-variate prior, which imply symmetric \(\mW_0^\rmH \), are of polynomial form in \(\mA\). 

\begin{example}[Posterior Correspondence Covariance Class]
    Tractable choices of the prior parameters in the \(\rmA\) view, which satisfy posterior correspondence and the commutation relations are for example
    \begin{equation*}
    \mA_0  = c_0 \mI \qquad \textup{and} \qquad \mW_0^\rmA = \sum_{i=1}^{n-1}c_i \mA^i,
    \end{equation*}
 	where \(\mH_0 = \mA_0^{-1}\) with \(c_i \in \mathbb{R}\). Motivated by \(\tr(\mA) \overset{!}{=} \tr(\mA_0 )\) an initial choice could be \(c_0 = n^{-1}\tr(\mA)\).
\end{example}

Finally, note that in practice we do not actually require \(\mW_0^\rmA\). We only ever need access to \(\mW_0^\rmA \mS\).

\subsubsection{Symmetric Matrix-variate Normal Prior}

We now turn to the symmetric model, which we assumed throughout the paper, given in \Cref{thm:sym_gaussian_inference}. We prove \Cref{thm:weak_post_correspondence}, the main result of this section demonstrating \emph{weak posterior correspondence} for the symmetric Kronecker covariance, by employing the matrix inversion lemma for the posterior mean \(\mA_k\). We begin by establishing a set of technical lemmata first, which mainly expand terms appearing during matrix block inversion.

\begin{lemma}[Symmetric Posterior Inverse]
\label{lem:symm_posterior_inverse}
Under the assumptions of \Cref{thm:sym_gaussian_inference}, the inverse of the posterior mean is given by
\begin{equation*}
\mA_k^{-1} = \mA_0^{-1} - \mA_0^{-1} \begin{bmatrix} \mU_{\rmA}  & \mV_\rmA  \end{bmatrix}
\begin{bmatrix} \mU_{\rmA} ^\top \mA_0^{-1}\mU_{\rmA}  & \mI+ \mU_{\rmA} ^\top \mA_0^{-1}\mV_\rmA  \\ \mI+\mV_\rmA ^\top \mA_0^{-1}\mU_{\rmA}  & \mV_\rmA ^{\top}\mA_0^{-1}\mV_\rmA \end{bmatrix}^{-1} \begin{bmatrix} \mU^{\top}_\rmA\\ \mV_\rmA ^\top \end{bmatrix}\mA_0^{-1}
\end{equation*}
where
\begin{align*}
\mU_{\rmA}  &\coloneqq \mW_0^\rmA \mS(\mS^{\top}\mW_0^\rmA \mS)^{-1} \in \mathbb{R}^{n \times k},\\
\mV_\rmA  &\coloneqq (\mI-\frac{1}{2}\mU_\rmA \mS^\top)(\mY-\mA_0 \mS) = (\mI-\frac{1}{2}\mU_\rmA \mS^\top)\mDelta_\rmA  \in \mathbb{R}^{n \times k}.
\end{align*}
\end{lemma}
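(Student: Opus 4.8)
The plan is to recognize the posterior mean $\mA_k$ as a symmetric rank-$2k$ perturbation of the invertible prior mean $\mA_0$ and then apply the matrix inversion lemma (Woodbury identity), reading off the inner correction matrix blockwise. There is no deep step here; once the update is written in the right form it is essentially a bookkeeping exercise.

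Concretely, I would start from the representation $\mA_k = \mA_0 + \mU_\rmA\mV_\rmA^\top + \mV_\rmA\mU_\rmA^\top$ furnished by \Cref{thm:sym_gaussian_inference}, noting that the matrices $\mU,\mV$ appearing there become exactly $\mU_\rmA = \mW_0^\rmA\mS(\mS^\top\mW_0^\rmA\mS)^{-1}$ and $\mV_\rmA = (\mI-\tfrac12\mU_\rmA\mS^\top)\mDelta_\rmA$ with $\mW_0 = \mW_0^\rmA$ and $\mDelta_\rmA = \mY - \mA_0\mS$. Next I would bundle the update into a single bilinear form: set $\mC \coloneqq \begin{bmatrix}\mU_\rmA & \mV_\rmA\end{bmatrix} \in \mathbb{R}^{n\times 2k}$ and the symmetric ``swap'' matrix $\mB \coloneqq \begin{bmatrix}\bm{0} & \mI_k\\ \mI_k & \bm{0}\end{bmatrix} \in \mathbb{R}^{2k\times 2k}$, so that $\mC\mB\mC^\top = \mU_\rmA\mV_\rmA^\top + \mV_\rmA\mU_\rmA^\top$ and hence $\mA_k = \mA_0 + \mC\mB\mC^\top$. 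Since $\mA_0$ is positive definite it is invertible, and $\mB^2 = \mI_{2k}$ so that $\mB^{-1} = \mB$; the matrix inversion lemma then yields $\mA_k^{-1} = \mA_0^{-1} - \mA_0^{-1}\mC(\mB^{-1} + \mC^\top\mA_0^{-1}\mC)^{-1}\mC^\top\mA_0^{-1}$.

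The last step is to expand $\mC^\top\mA_0^{-1}\mC$ into its four $k\times k$ blocks $\mU_\rmA^\top\mA_0^{-1}\mU_\rmA$, $\mU_\rmA^\top\mA_0^{-1}\mV_\rmA$, $\mV_\rmA^\top\mA_0^{-1}\mU_\rmA$, $\mV_\rmA^\top\mA_0^{-1}\mV_\rmA$ and to add $\mB^{-1}=\mB$, which leaves the diagonal blocks untouched and turns the off-diagonal blocks into $\mI + \mU_\rmA^\top\mA_0^{-1}\mV_\rmA$ and $\mI + \mV_\rmA^\top\mA_0^{-1}\mU_\rmA$; this is precisely the claimed formula. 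The only point demanding a little care is the well-definedness of the central $2k\times 2k$ block inverse: one can take it as part of the standing assumption that $\mA_k$ is itself invertible, or sidestep the issue entirely by instead verifying directly that $\mA_k$ times the claimed right-hand side equals $\mI_n$ using the same block algebra. I expect this bookkeeping --- in particular keeping the $\tfrac12$ factors inside $\mV_\rmA$ straight --- to be the only mildly fiddly part, and it feeds directly into the subsequent lemmata that further manipulate this block inverse en route to \Cref{thm:weak_post_correspondence}.
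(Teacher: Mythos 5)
Your proposal is correct and follows exactly the paper's own argument: the paper likewise rewrites the rank-2 update as \(\mA_k = \mA_0 + \begin{bmatrix}\mU_\rmA & \mV_\rmA\end{bmatrix}\begin{bmatrix}\bm{0} & \mI\\ \mI & \bm{0}\end{bmatrix}\begin{bmatrix}\mU_\rmA^\top\\ \mV_\rmA^\top\end{bmatrix}\) and invokes the matrix inversion lemma, with the swap matrix being its own inverse producing the \(\mI\) shifts in the off-diagonal blocks. No substantive difference.
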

\begin{proof}
We rewrite the rank-2 update in \Cref{sec:inference_framework} as follows
\begin{equation*}
\mA_k = \mA_0 + \mU_\rmA \mV_\rmA^\top + \mV_\rmA \mU_\rmA^\top =
\mA_0 + \begin{bmatrix} \mU_\rmA  & \mV_\rmA  \end{bmatrix} \begin{bmatrix} \bm{0} & \mI\\ \mI & \bm{0} \end{bmatrix} \begin{bmatrix} \mU^{\top}_\rmA\\ \mV_\rmA ^\top \end{bmatrix}.
\end{equation*}
Then the statement follows directly from the matrix inversion lemma.
\end{proof}

Next, we expand the terms inside the blocks of the matrix to be inverted in \Cref{lem:symm_posterior_inverse}. This leads to the following lemma.

\begin{lemma}
\label{lem:symm_posterior_inverse_blockterms}
Given the assumptions of \Cref{thm:sym_gaussian_inference}, let \(\mW_0^\rmA \) and \(\mA_0 \) be symmetric and assume \eqref{eqn:post_mean_equiv} and \eqref{eqn:hered_pos_def} hold. Define
\begin{align*}
\mLambda &= \mS^\top \mW_0^\rmA  \mS \\
\mPi &= \mS^\top \mW_0^\rmA  \mA_0^{-1}\mDelta_\rmA ,
\end{align*}
then \(\mLambda \in \mathbb{R}^{m \times m}\) and \(\mLambda + \mPi \in \mathbb{R}^{m \times m}\) are symmetric and invertible and we obtain
\begingroup
\allowdisplaybreaks
\begin{align}
\mLambda + \mPi &= \mS^\top \mW_0^\rmA  \mA_0^{-1}\mA\mS =\mS^\top \mA \mA_0^{-1}\mA\mS = \mS^\top \mA \mW_0^\rmH  \mA\mS\\
\mPi &= \mDelta_\rmA ^{\top} \mA_0^{-1}\mA\mS\\
\mU_\rmA ^\top \mA_0^{-1}\mDelta_\rmA  &= \mLambda^{-1}\mPi\\
\mDelta_\rmA ^{\top}\mS &= \mS^\top \mDelta_\rmA \\
\mU_\rmA  &= \mA\mS\mLambda^{-1}\\
\mU_\rmA ^\top \mA_0^{-1} \mU_\rmA  &= \mLambda^{-1}(\mLambda + \mPi)\mLambda^{-1}\\
\mI + \mU_\rmA ^{\top}\mA_0^{-1}\mV_\rmA  &= \mLambda ^{-1}(\mLambda + \mPi)(\mI - \frac{1}{2}\mLambda^{-1}\mS^\top\mDelta_\rmA )\\
\mI + \mV_\rmA ^{\top}\mA_0^{-1}\mU_\rmA  &=(\mI - \frac{1}{2}\mDelta_\rmA ^\top \mS \mLambda^{-1})(\mLambda + \mPi)\mLambda ^{-1}\\
\mV_\rmA ^{\top}\mA_0^{-1}\mV_\rmA  &= \mPi - \frac{1}{2}\big((\mLambda + \mPi)\mLambda^{-1}\mS^\top \mDelta_\rmA  + \mDelta_\rmA ^\top \mS \mLambda^{-1}(\mLambda + \mPi) \big)\\
 &\qquad + \frac{1}{4}\mDelta_\rmA ^\top \mS \mLambda ^{-1}(\mLambda + \mPi)\mLambda^{-1} \mS^\top \mDelta_\rmA
\end{align}
\endgroup
\end{lemma}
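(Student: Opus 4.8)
The plan is to unpack the two standing assumptions into a handful of elementary relations and then obtain each identity by a short chain of substitutions, each feeding the next. From \eqref{eqn:hered_pos_def} we have $\mW_0^\rmA\mS=\mY=\mA\mS$, and since $\mW_0^\rmA$ and $\mA$ are symmetric, transposing yields $\mS^\top\mW_0^\rmA=\mS^\top\mA$; in particular $\mLambda=\mS^\top\mW_0^\rmA\mS=\mS^\top\mA\mS$, which is symmetric positive definite — hence invertible — because $\mA$ is positive definite and $\mS$ has full column rank by $\mA$-conjugacy of the actions (\Cref{thm:conj_directions_method}). I also record $\mDelta_\rmA=\mY-\mA_0\mS=\mA\mS-\mA_0\mS$, which makes the fourth identity immediate: $\mDelta_\rmA^\top\mS=\mS^\top\mA\mS-\mS^\top\mA_0\mS=\mS^\top\mDelta_\rmA$.

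For $\mLambda+\mPi$ I would write $\mLambda+\mPi=\mS^\top\mW_0^\rmA(\mS+\mA_0^{-1}\mDelta_\rmA)=\mS^\top\mW_0^\rmA\mA_0^{-1}(\mA_0\mS+\mDelta_\rmA)=\mS^\top\mW_0^\rmA\mA_0^{-1}\mY$, then substitute $\mY=\mA\mS$, use $\mS^\top\mW_0^\rmA=\mS^\top\mA$ for the middle form $\mS^\top\mA\mA_0^{-1}\mA\mS$, and use \eqref{eqn:post_mean_equiv} rewritten as $\mS^\top\mW_0^\rmA\mA_0^{-1}=\mS^\top\mA\mW_0^\rmH$ for the last form $\mS^\top\mA\mW_0^\rmH\mA\mS$. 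Symmetry is clear because $\mA_0^{-1}$ is symmetric, and positive-definiteness (hence invertibility) follows from positive-definiteness of $\mA_0$ (equivalently $\mH_0$). A byproduct I will reuse is that $\mPi=(\mLambda+\mPi)-\mLambda$ is symmetric. The identity $\mPi=\mDelta_\rmA^\top\mA_0^{-1}\mA\mS$ then drops out of $(\mA\mS-\mA_0\mS)^\top\mA_0^{-1}\mA\mS=\mS^\top\mA\mA_0^{-1}\mA\mS-\mS^\top\mA\mS=(\mLambda+\mPi)-\mLambda$.

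The next block is bookkeeping built on these. From $\mU_\rmA=\mW_0^\rmA\mS\mLambda^{-1}$ and $\mW_0^\rmA\mS=\mA\mS$ we get $\mU_\rmA=\mA\mS\mLambda^{-1}$; combined with the $\mLambda+\mPi$ identity this gives $\mU_\rmA^\top\mA_0^{-1}\mU_\rmA=\mLambda^{-1}(\mLambda+\mPi)\mLambda^{-1}$. Using $\mU_\rmA^\top=\mLambda^{-1}\mS^\top\mW_0^\rmA$ gives $\mU_\rmA^\top\mA_0^{-1}\mDelta_\rmA=\mLambda^{-1}\mPi$, and transposing (all of $\mA_0^{-1},\mLambda,\mPi$ being symmetric) gives $\mDelta_\rmA^\top\mA_0^{-1}\mU_\rmA=\mPi\mLambda^{-1}$. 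For the two $\mI+\cdots$ identities I expand $\mV_\rmA=\mDelta_\rmA-\tfrac12\mU_\rmA\mS^\top\mDelta_\rmA$, apply the three relations just listed to $\mU_\rmA^\top\mA_0^{-1}\mV_\rmA$, and factor using $\mI+\mLambda^{-1}\mPi=\mLambda^{-1}(\mLambda+\mPi)$; the second one is the transpose of the first, valid by the same symmetries.

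The only step needing genuine care is the final identity for $\mV_\rmA^\top\mA_0^{-1}\mV_\rmA$. Expanding the quadratic in $\mV_\rmA=\mDelta_\rmA-\tfrac12\mU_\rmA\mS^\top\mDelta_\rmA$ gives four terms: the three involving $\mU_\rmA$ reduce directly via $\mU_\rmA^\top\mA_0^{-1}\mU_\rmA$, $\mU_\rmA^\top\mA_0^{-1}\mDelta_\rmA$ and $\mDelta_\rmA^\top\mA_0^{-1}\mU_\rmA$ — the last producing the verbatim $\tfrac14$-term — while $\mDelta_\rmA^\top\mA_0^{-1}\mDelta_\rmA$ is \emph{not} one of the tabulated quantities. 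Writing $\mA\mS=\mDelta_\rmA+\mA_0\mS$ inside the $\mPi$-identity shows $\mDelta_\rmA^\top\mA_0^{-1}\mDelta_\rmA=\mPi-\mDelta_\rmA^\top\mS$, so a stray $-\mDelta_\rmA^\top\mS$ appears that is absent from the target. The obstacle, such as it is, is to see that it cancels: expanding the two cross terms of the target via $(\mLambda+\mPi)\mLambda^{-1}=\mI+\mPi\mLambda^{-1}$ and $\mLambda^{-1}(\mLambda+\mPi)=\mI+\mLambda^{-1}\mPi$ peels off exactly $-\tfrac12\mS^\top\mDelta_\rmA-\tfrac12\mDelta_\rmA^\top\mS=-\mS^\top\mDelta_\rmA$ by the fourth identity, matching the leftover. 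One must combine that symmetry, the $\mPi$-identity and the expansion simultaneously rather than term by term; everything else is mechanical.
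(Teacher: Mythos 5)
Your proposal is correct and follows essentially the same route as the paper: extract the elementary relations $\mW_0^\rmA\mS=\mA\mS$, $\mS^\top\mW_0^\rmA\mA_0^{-1}=\mS^\top\mA\mW_0^\rmH$ and $\mDelta_\rmA^\top\mS=\mS^\top\mDelta_\rmA$ from \eqref{eqn:hered_pos_def} and \eqref{eqn:post_mean_equiv}, then derive each identity by substitution, with the quadratic $\mV_\rmA^\top\mA_0^{-1}\mV_\rmA$ handled by expanding into four terms and using $\mDelta_\rmA^\top\mA_0^{-1}\mDelta_\rmA=\mPi-\mDelta_\rmA^\top\mS$ to absorb the leftover term. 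Your organization of that last step (reducing each term via the already-tabulated identities and matching against the expanded target) is a slightly cleaner bookkeeping of the same computation the paper carries out by direct expansion, and your positive-definiteness argument for invertibility of $\mLambda$ and $\mLambda+\mPi$ is an acceptable substitute for the paper's appeal to Sylvester's rank inequality.
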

\begin{proof}
We begin by proving that \(\mLambda\) and \(\mLambda + \mPi\) are symmetric and invertible. We have by Sylvester's rank inequality that \(\mLambda\) is invertible. For symmetric \(\mW_0^\rmA \), \(\mLambda\) is symmetric by definition. We have that
\begin{align*}
\mLambda + \mPi &= \mS^\top \mW_0^\rmA  \mS + \mS^\top \mW_0^\rmA  \mA_0^{-1} (\mA\mS - \mA_0 \mS) = \mS^\top \mW_0^\rmA  \mA_0^{-1}\mA\mS =\mS^\top \mA \mA_0^{-1}\mA\mS\\
&= \mS^\top \mW_0^\rmA  \mA_0^{-1}\mA\mS = \mS^\top \mA \mW_0^\rmH  \mA\mS\\
\end{align*}
Thus, by Sylvester's rank inequality \(\mLambda + \mPi\) is invertible. Given symmetric \(\mA_0 \), it is symmetric. Further, it holds that
\begin{align*}
\mPi &= \mLambda + \mPi - \mLambda = \mS^\top \mA \mA_0^{-1} \mA \mS - \mS^\top \mA \mS = \mDelta_\rmA ^{\top} \mA_0^{-1}\mA\mS\\
\mU_\rmA ^\top \mA_0^{-1}\mDelta_\rmA  &= (\mS^{\top}\mW_0^\rmA \mS)^{-1}\mS^{\top}\mW_0^\rmA  \mA_0^{-1} \mDelta_\rmA  = \mLambda^{-1}\mPi\\
\mDelta_\rmA ^\top \mS &=(\mA\mS - \mA_0 \mS)^\top \mS = \mS^\top \mA\mS - \mS^\top \mA_0  \mS\\
\mU_\rmA  &= \mW_0^\rmA \mS(\mS^{\top}\mW_0^\rmA \mS)^{-1} = \mA\mS \mLambda^{-1}\\
\mU_\rmA ^\top \mA_0^{-1} \mU_\rmA  &= \mLambda^{-1} \mS^\top \mA \mA_0^{-1} \mA\mS \mLambda^{-1} = \mLambda^{-1}(\mLambda + \mPi)\mLambda^{-1}\\
\mI + \mU_\rmA ^{\top}\mA_0^{-1}\mV_\rmA  &= \mI + \mLambda^{-1} \mS^\top \mA \mA_0^{-1} (\mI-\frac{1}{2}\mU_\rmA \mS^\top)\mDelta_\rmA  = \mI + \mLambda^{-1} \mS^\top \mA \mA_0^{-1} (\mI-\frac{1}{2}\mA\mS\mLambda^{-1}\mS^\top)\mDelta_\rmA \\
&=\mI + \mLambda^{-1} \mS^\top \mA \mA_0^{-1}(\mA\mS - \mA_0 \mS)-\frac{1}{2}\mLambda^{-1} \mS^\top \mA \mA_0^{-1}\mA\mS\mLambda^{-1}\mS^\top \mDelta_\rmA \\
&= \mLambda^{-1} (\mLambda + \mPi) -\frac{1}{2}\mLambda^{-1} (\mLambda + \mPi)\mLambda^{-1}\mS^\top \mDelta_\rmA  = \mLambda ^{-1}(\mLambda + \mPi)(\mI - \frac{1}{2}\mLambda^{-1}\mS^\top\mDelta_\rmA )\\
\mI + \mV_\rmA ^{\top}\mA_0^{-1}\mU_\rmA  &= (\mI + \mU_\rmA ^{\top}\mA_0^{-1}\mV_\rmA )^\top = (\mLambda ^{-1}(\mLambda + \mPi)(\mI - \frac{1}{2}\mLambda^{-1}\mS^\top\mDelta_\rmA ))^\top\\
&=(\mI - \frac{1}{2}\mDelta_\rmA ^\top \mS \mLambda^{-1})(\mLambda + \mPi)\mLambda ^{-1},
\end{align*}
where we used that \(\mLambda\) and \(\mLambda + \mPi\) are symmetric. Finally, we have that
\begingroup
\allowdisplaybreaks
\begin{align*}
\mV_\rmA ^{\top}\mA_0^{-1}\mV_\rmA  &= \mDelta_\rmA ^{\top}(\mI - \frac{1}{2}\mS \mU_\rmA ^\top)\mA_0^{-1}(\mI-\frac{1}{2}\mU_\rmA \mS^\top)\mDelta_\rmA \\
&= \mDelta_\rmA ^{\top}(\mI - \frac{1}{2}\mS\mLambda^{-1} \mS^\top \mA)\mA_0^{-1}(\mI-\frac{1}{2}\mA\mS\mLambda^{-1} \mS^\top)\mDelta_\rmA \\
&= \mDelta_\rmA ^{\top}\mA_0^{-1}(\mI-\frac{1}{2}\mA\mS\mLambda^{-1} \mS^\top)\mDelta_\rmA  - \frac{1}{2}\mDelta_\rmA ^{\top}\mS\mLambda^{-1} \mS^\top \mA \mA_0^{-1}(\mI-\frac{1}{2}\mA\mS\mLambda^{-1} \mS^\top)\mDelta_\rmA \\
&=(\mS^\top \mA \mA_0^{-1} - \mS^\top)\big(\mI - \frac{1}{2}\mA\mS\mLambda^{-1} \mS^\top)\mDelta_\rmA  - \frac{1}{2} \mDelta_\rmA ^\top \mS \mLambda^{-1} \mS^\top \mA \mA_0^{-1} \mDelta_\rmA\\
&\qquad+ \frac{1}{4} \mDelta_\rmA ^\top \mS \mLambda^{-1} \mS^\top \mA \mA_0^{-1}\mA\mS\mLambda^{-1}\mS^\top \mDelta_\rmA \\
&= \mS^\top \mA \mA_0^{-1}\mDelta_\rmA  - \mS^\top \mDelta_\rmA  - \frac{1}{2} \mS^\top \mA \mA_0^{-1}\mA\mS\mLambda^{-1}\mS^\top \mDelta_\rmA  + \frac{1}{2} \mS^\top \mA \mS \mLambda^{-1} \mS^\top \mDelta_\rmA\\
&\qquad-\frac{1}{2} \mDelta_\rmA ^\top \mS \mLambda^{-1} \mS^\top \mA \mA_0^{-1} \mDelta_\rmA  + \frac{1}{4} \mDelta_\rmA ^\top \mS \mLambda^{-1} \mS^\top \mA \mA_0^{-1}\mA\mS\mLambda^{-1}\mS^\top \mDelta_\rmA\\
&= \mS^\top \mA \mA_0^{-1}\mA\mS - \mS^\top \mA \mS - \mS^\top \mDelta_\rmA  - \frac{1}{2}(\mLambda + \mPi) \mLambda^{-1} \mS^\top \mDelta_\rmA  + \frac{1}{2} \mS^\top \mDelta_\rmA\\
&\qquad-\frac{1}{2} \mDelta_\rmA ^\top \mS \mLambda^{-1} \mS^\top \mA \mA_0^{-1} \mDelta_\rmA  + \frac{1}{4} \mDelta_\rmA ^\top \mS \mLambda^{-1} \mS^\top \mA \mA_0^{-1}\mA\mS\mLambda^{-1}\mS^\top \mDelta_\rmA\\
&= \mPi - \frac{1}{2} \mS^\top \mDelta_\rmA  - \frac{1}{2} (\mLambda + \mPi)\mLambda^{-1} \mS^\top \mDelta_\rmA  - \frac{1}{2} \mDelta_\rmA ^\top \mS \mLambda^{-1} \mS^\top \mA \mA_0^{-1}(\mA\mS - \mA_0 \mS)\\
&\qquad+ \frac{1}{4} \mDelta_\rmA ^\top \mS \mLambda^{-1} \mS^\top \mA \mA_0^{-1}\mA\mS\mLambda^{-1}\mS^\top \mDelta_\rmA\\
&= \mPi - \frac{1}{2} \mS^\top \mDelta_\rmA  - \frac{1}{2} (\mLambda + \mPi)\mLambda^{-1} \mS^\top \mDelta_\rmA  - \frac{1}{2} \mDelta_\rmA ^\top \mS \mLambda^{-1}(\mLambda + \mPi) + \frac{1}{2} \mDelta_\rmA ^\top \mS \mLambda^{-1} \mLambda\\
&\qquad+ \frac{1}{4} \mDelta_\rmA ^\top \mS \mLambda^{-1} \mS^\top \mA \mA_0^{-1}\mA\mS\mLambda^{-1}\mS^\top \mDelta_\rmA\\
&= \mPi - \frac{1}{2}\big((\mLambda + \mPi)\mLambda^{-1}\mS^\top \mDelta_\rmA  + \mDelta_\rmA ^\top \mS \mLambda^{-1}(\mLambda + \mPi) \big) + \frac{1}{4} \mDelta_\rmA ^\top \mS \mLambda^{-1} (\mLambda + \mPi) \mLambda^{-1}\mS^\top \mDelta_\rmA ,
\end{align*}
\endgroup
where we dropped some of the terms temporarily for clarity of exposition.
\end{proof}
We will now use these intermediate results to perform block inversion on the \(2k \times 2k\) matrix to be inverted in \Cref{lem:symm_posterior_inverse}.

\begin{lemma}
\label{lem:symm_posterior_inverse_blockterms_inverse}
Given the assumptions of \Cref{thm:sym_gaussian_inference}, additionally assume \eqref{eqn:hered_pos_def} and \eqref{eqn:post_mean_equiv} hold. Let
\begin{equation*}
\mT = \begin{bmatrix}
\mT_{11} & \mT_{12}\\\mT_{21} & \mT_{22}
\end{bmatrix} =\begin{bmatrix} \mU_\rmA ^\top \mA_0^{-1}\mU_\rmA  & \mI+ \mU_\rmA ^\top \mA_0^{-1}\mV_\rmA  \\ \mI+\mV_\rmA ^\top \mA_0^{-1}\mU_\rmA  & \mV_\rmA ^{\top}\mA_0^{-1}\mV_\rmA \end{bmatrix}^{-1},
\end{equation*}
then the block matrices \(\mT_{ij} \in \mathbb{R}^{m \times m}\) are given by
\begin{align*}
\mT_{11} &= \mLambda (\mLambda + \mPi)^{-1}\mLambda -  (\mI - \frac{1}{2}\mS^\top\mDelta_\rmA  \mLambda^{-1})(\mI - \frac{1}{2} \mLambda^{-1} \mDelta_\rmA ^\top \mS )\\
\mT_{12} &= (\mI - \frac{1}{2}\mS^\top\mDelta_\rmA  \mLambda^{-1})\\
\mT_{21} &= \mT_{12}^\top =(\mI - \frac{1}{2}\mLambda^{-1} \mDelta_\rmA ^\top \mS)\\
\mT_{22} &= - \mLambda^{-1}.
\end{align*}
\end{lemma}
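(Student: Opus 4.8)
Introduce the $2k\times 2k$ matrix $\mN$ whose inverse is $\mT$, i.e.\ the matrix appearing in \Cref{lem:symm_posterior_inverse}. The plan is to compute $\mN^{-1}$ by block elimination after inserting the closed forms for its four blocks supplied by \Cref{lem:symm_posterior_inverse_blockterms}. I would first set $\mE \coloneqq \mI - \frac{1}{2}\mLambda^{-1}\mDelta_\rmA^\top \mS$; since $\mLambda$, $\mLambda + \mPi$ and $\mS^\top\mDelta_\rmA = \mDelta_\rmA^\top\mS$ are symmetric (the last two by \Cref{lem:symm_posterior_inverse_blockterms}), this gives $\mE^\top = \mI - \frac{1}{2}\mS^\top\mDelta_\rmA\,\mLambda^{-1}$ together with the identity $\mLambda\mE = \mE^\top\mLambda = \mLambda - \frac{1}{2}\mS^\top\mDelta_\rmA$, which will do all the collapsing. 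Reading the block identities of \Cref{lem:symm_posterior_inverse_blockterms} in this notation turns the four blocks of $\mN$ into the product forms $\mN_{11} = \mLambda^{-1}(\mLambda+\mPi)\mLambda^{-1}$, $\mN_{12} = \mLambda^{-1}(\mLambda+\mPi)\mE$, $\mN_{21} = \mE^\top(\mLambda+\mPi)\mLambda^{-1}$ and $\mN_{22} = \mE^\top(\mLambda+\mPi)\mE - \mLambda$; the only one needing work is $\mN_{22}$, where expanding $\mE^\top(\mLambda+\mPi)\mE$ and using $\mPi - (\mLambda+\mPi) = -\mLambda$ matches it against the expression for $\mV_\rmA^\top\mA_0^{-1}\mV_\rmA$.

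In this shape $\mN$ factors as
\[
\mN = \begin{bmatrix}\mLambda^{-1} & \bm{0}\\ \mE^\top & \mI\end{bmatrix}\begin{bmatrix}\mLambda+\mPi & \bm{0}\\ \bm{0} & -\mLambda\end{bmatrix}\begin{bmatrix}\mLambda^{-1} & \mE\\ \bm{0} & \mI\end{bmatrix},
\]
which I would verify by multiplying the three factors out against the blocks just computed. Each factor inverts immediately, so
\[
\mT = \mN^{-1} = \begin{bmatrix}\mLambda & -\mLambda\mE\\ \bm{0} & \mI\end{bmatrix}\begin{bmatrix}(\mLambda+\mPi)^{-1} & \bm{0}\\ \bm{0} & -\mLambda^{-1}\end{bmatrix}\begin{bmatrix}\mLambda & \bm{0}\\ -\mE^\top\mLambda & \mI\end{bmatrix},
\]
and carrying out this product gives $\mT_{22} = -\mLambda^{-1}$ directly, $\mT_{12} = \mLambda\mE\mLambda^{-1}$, $\mT_{21} = \mLambda^{-1}\mE^\top\mLambda$, and $\mT_{11} = \mLambda(\mLambda+\mPi)^{-1}\mLambda - \mLambda\mE\mLambda^{-1}\mE^\top\mLambda$. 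Substituting $\mLambda\mE = \mE^\top\mLambda = \mLambda - \frac{1}{2}\mS^\top\mDelta_\rmA$ then rewrites $\mT_{12}$ as $\mI - \frac{1}{2}\mS^\top\mDelta_\rmA\,\mLambda^{-1}$, $\mT_{21}$ as $\mI - \frac{1}{2}\mLambda^{-1}\mDelta_\rmA^\top\mS$, and $\mT_{11}$'s second term as the product of the two $(\mI - \tfrac12\cdots)$ factors appearing in the statement, giving the displayed closed forms. (One could skip the factorization and use the textbook $2\times2$ block-inverse formula instead: the Schur complement of $\mN_{11}$ reduces to $-\mLambda$ after the cancellation $(\mLambda+\mPi)\mLambda^{-1}\cdot\mLambda(\mLambda+\mPi)^{-1}\mLambda\cdot\mLambda^{-1}(\mLambda+\mPi) = \mLambda+\mPi$, yielding $\mT_{22} = -\mLambda^{-1}$ at once, and the remaining three blocks drop out of the off-diagonal formulas.)

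There is no new idea here beyond block elimination; the argument is essentially bookkeeping. The two places where a stray sign or transpose would derail everything are (i) checking $\mN_{22} = \mE^\top(\mLambda+\mPi)\mE - \mLambda$ against the expansion of $\mV_\rmA^\top\mA_0^{-1}\mV_\rmA$ in \Cref{lem:symm_posterior_inverse_blockterms}, and (ii) the final simplification of $\mLambda\mE\mLambda^{-1}\mE^\top\mLambda$ in $\mT_{11}$. Keeping the symmetry of $\mLambda$, $\mLambda+\mPi$ and $\mS^\top\mDelta_\rmA$ in view throughout — equivalently, the fact that $\mE$ and $\mE^\top$ genuinely are mutual transposes — is what makes all the products telescope.
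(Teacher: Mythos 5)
Your route is sound and, modulo packaging, is the same computation as the paper's: the paper inverts the block matrix by forming the Schur complement $\mD = \mK_{22} - \mK_{21}\mK_{11}^{-1}\mK_{12}$, showing $\mD^{-1} = -\mLambda^{-1}$ after exactly the cancellation you describe in your parenthetical, and then reading the four blocks off the standard block-inverse formula. Your UDL factorization of $\mN$ is a cleaner way to organize the identical algebra, and your check that $\mN_{22} = \mE^\top(\mLambda+\mPi)\mE - \mLambda$ matches the expansion of $\mV_\rmA^\top\mA_0^{-1}\mV_\rmA$ is precisely the verification that is needed.

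The one step that does not go through is the final rewriting of the second term of $\mT_{11}$. Writing $\mC = \mLambda\mE = \mE^\top\mLambda = \mLambda - \frac{1}{2}\mS^\top\mDelta_\rmA$, your factorization correctly gives
\begin{equation*}
\mT_{11} = \mLambda(\mLambda+\mPi)^{-1}\mLambda - \mC\,\mLambda^{-1}\,\mC,
\end{equation*}
whereas the product of the two displayed factors in the statement is
\begin{equation*}
\bigl(\mI - \tfrac{1}{2}\mS^\top\mDelta_\rmA\mLambda^{-1}\bigr)\bigl(\mI - \tfrac{1}{2}\mLambda^{-1}\mDelta_\rmA^\top\mS\bigr) = \mC\,\mLambda^{-2}\,\mC,
\end{equation*}
and these coincide only when $\mLambda = \mI$. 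Note that the paper's own proof also terminates at the $\mC\mLambda^{-1}\mC$ form (it leaves $\mT_{11} = \mLambda(\mLambda+\mPi)^{-1}\mLambda + (\mLambda - \frac{1}{2}\mS^\top\mDelta_\rmA)\mD^{-1}(\mLambda - \frac{1}{2}\mDelta_\rmA^\top\mS)$ with $\mD^{-1} = -\mLambda^{-1}$ and never exhibits the passage to the stated form), so your computation agrees with the paper's derivation; the mismatch is with the printed statement of $\mT_{11}$, which implicitly relies on the normalization $\mS^\top\mA\mS = \mI$ (hence $\mLambda = \mI$ under \eqref{eqn:hered_pos_def}) that is only introduced in the subsequent lemma, where the two forms become interchangeable. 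So either keep the $\mC\mLambda^{-1}\mC$ form rather than asserting the rewriting, or state that normalization explicitly; everything else in your argument is correct.
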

\begin{proof}
Let
\begin{equation*}
\mK = \mT^{-1} = \begin{bmatrix} \mU_\rmA ^\top \mA_0^{-1}\mU_\rmA  & \mI+ \mU_\rmA ^\top \mA_0^{-1}\mV_\rmA  \\ \mI+\mV_\rmA ^\top \mA_0^{-1}\mU_\rmA  & \mV_\rmA ^{\top}\mA_0^{-1}\mV_\rmA \end{bmatrix},
\end{equation*}
then the inverse of the Schur complement \(\mD=\mK/(\mU_\rmA ^\top \mA_0^{-1}\mU_\rmA )\) is given by
\begin{align*}
\mD^{-1} &= (\mK_{22} - \mK_{21}\mK_{11}^{-1}\mK_{12})^{-1}\\
& =\big(\mV_\rmA ^{\top}\mA_0^{-1}\mV_\rmA  - (\mI + \mV_\rmA ^{\top}\mA_0^{-1}\mU_\rmA )(\mU_\rmA ^\top \mA_0^{-1} \mU_\rmA )^{-1}(\mI + \mU_\rmA ^{\top}\mA_0^{-1}\mV_\rmA )\big)^{-1}\\
&=\big(\mV_\rmA ^{\top}\mA_0^{-1}\mV_\rmA  - (\mI - \frac{1}{2}\mDelta_\rmA ^\top \mS \mLambda^{-1})(\mLambda + \mPi)(\mI - \frac{1}{2}\mLambda^{-1} \mS^\top \mDelta_\rmA ) \big)^{-1}\\
&=\big(\mV_\rmA ^{\top}\mA_0^{-1}\mV_\rmA  - (\mLambda - \frac{1}{2}\mDelta_\rmA ^\top \mS )\mLambda^{-1}(\mLambda + \mPi)\mLambda^{-1}(\mLambda - \frac{1}{2} \mS^\top \mDelta_\rmA ) \big)^{-1}\\
&= \big( \mV_\rmA ^{\top}\mA_0^{-1}\mV_\rmA  - (\mLambda + \mPi) + \frac{1}{2}\big(\mDelta_\rmA ^\top \mS \mLambda^{-1} (\mLambda + \mPi ) + (\mLambda + \mPi ) \mLambda^{-1}\mS^\top \mDelta_\rmA  \big)\\
&\qquad- \frac{1}{4} \mDelta_\rmA ^\top \mS \mLambda^{-1} (\mLambda + \mPi) \mLambda^{-1} \mS^\top \mDelta_\rmA  \big)^{-1}\\
&= (\mPi - \mLambda - \mPi)^{-1}\\
&= - \mLambda^{-1},
\end{align*}
where we used \Cref{lem:symm_posterior_inverse_blockterms}. By block matrix inversion and again with \Cref{lem:symm_posterior_inverse_blockterms} we obtain
\begin{align*}
\mT_{11} &=(\mU_\rmA ^\top \mA_0^{-1} \mU_\rmA )^{-1} + (\mU_\rmA ^\top \mA_0^{-1} \mU_\rmA )^{-1}(\mI + \mU_\rmA ^{\top}\mA_0^{-1}\mV_\rmA ) \mD^{-1} (\mI + \mV_\rmA ^{\top}\mA_0^{-1}\mU_\rmA )(\mU_\rmA ^\top \mA_0^{-1} \mU_\rmA )^{-1}\\
&= \mLambda (\mLambda + \mPi)^{-1}\mLambda + \mLambda (\mI - \frac{1}{2}\mLambda^{-1}\mS^\top\mDelta_\rmA )\mD^{-1}(\mI - \frac{1}{2}\mDelta_\rmA ^\top \mS \mLambda^{-1}) \mLambda\\
&= \mLambda (\mLambda + \mPi)^{-1}\mLambda +  (\mLambda - \frac{1}{2}\mS^\top\mDelta_\rmA )\mD^{-1}(\mLambda - \frac{1}{2}\mDelta_\rmA ^\top \mS )
\intertext{as well as}
\mT_{12} &=-(\mU_\rmA ^\top \mA_0^{-1} \mU_\rmA )^{-1}(\mI + \mU_\rmA ^{\top}\mA_0^{-1}\mV_\rmA ) \mD^{-1}  \\
&= -\mLambda (\mLambda + \mPi)^{-1}\mLambda \mLambda ^{-1}(\mLambda + \mPi)(\mI - \frac{1}{2}\mLambda^{-1}\mS^\top\mDelta_\rmA )\mD^{-1}\\
&= - (\mLambda - \frac{1}{2}\mS^\top\mDelta_\rmA )\mD^{-1}\\
\mT_{21} &= \mT_{12}^\top = -\mD^{-\top}(\mLambda - \frac{1}{2}\mDelta_\rmA ^\top \mS)
\end{align*}
and finally \(\mT_{22} = \mD^{-1} = - \mLambda^{-1}.\)
\end{proof}

\begin{lemma}
\label{lem:post_mean_inverse_blockinversion}
Given the assumptions of \Cref{thm:sym_gaussian_inference}, additionally assume \eqref{eqn:hered_pos_def} and \eqref{eqn:post_mean_equiv} hold. Let
\begin{equation*}
\mF = \mA_0^{-1} \begin{bmatrix} \mU_\rmA  & \mV_\rmA  \end{bmatrix}
\begin{bmatrix} \mT_{11} & \mT_{12} \\ \mT_{21} & \mT_{22} \end{bmatrix} \begin{bmatrix} \mU^{\top}_\rmA\\ \mV_\rmA ^\top \end{bmatrix}\mA_0^{-1},
\end{equation*}
where \(\mT\) is chosen as in \Cref{lem:symm_posterior_inverse_blockterms_inverse}, then if \(\mS^\top \mA \mS = \mI\), we have
\begin{equation*}
\mF = \mA_0^{-1}\mA\mS(\mI+\mPi)^{-1}\mS^\top \mA \mA_0^{-1} - \mS\mS^\top.
\end{equation*}
\end{lemma}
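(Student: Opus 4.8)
The plan is to feed the extra hypothesis $\mS^\top\mA\mS = \mI$ into the block formulas of Lemmas \ref{lem:symm_posterior_inverse_blockterms} and \ref{lem:symm_posterior_inverse_blockterms_inverse}, expand the quadratic form $\begin{bmatrix}\mU_\rmA & \mV_\rmA\end{bmatrix}\mT\begin{bmatrix}\mU_\rmA^\top\\\mV_\rmA^\top\end{bmatrix}$ appearing in $\mF$, and observe that all but one term reassemble into a perfect square; conjugating by $\mA_0^{-1}$ on both sides then yields the stated expression for $\mF$.

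First I would note that \eqref{eqn:hered_pos_def} gives $\mW_0^\rmA\mS = \mY = \mA\mS$, so $\mLambda = \mS^\top\mW_0^\rmA\mS = \mS^\top\mA\mS = \mI$ and every $\mLambda^{-1}$ drops out. Writing $\mN := \mS^\top\mDelta_\rmA$, which is symmetric by Lemma \ref{lem:symm_posterior_inverse_blockterms} ($\mDelta_\rmA^\top\mS = \mS^\top\mDelta_\rmA$), the ingredients of Lemmas \ref{lem:symm_posterior_inverse} and \ref{lem:symm_posterior_inverse_blockterms_inverse} collapse to $\mU_\rmA = \mA\mS$, $\mV_\rmA = (\mI - \tfrac12\mU_\rmA\mS^\top)\mDelta_\rmA = \mDelta_\rmA - \tfrac12\mA\mS\mN$, and
\[
\mT = \begin{bmatrix} (\mI+\mPi)^{-1} - (\mI - \tfrac12\mN)^2 & \mI - \tfrac12\mN \\[2pt] \mI - \tfrac12\mN & -\mI \end{bmatrix}.
\]

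The two identities to record are: (i) setting $\mG := \mA\mS(\mI - \tfrac12\mN) = \mU_\rmA\mT_{12}$, one has $\mG^\top = (\mI - \tfrac12\mN)\mS^\top\mA$ (using $\mN = \mN^\top$, $\mA = \mA^\top$), hence $\mA\mS(\mI - \tfrac12\mN)^2\mS^\top\mA = \mG\mG^\top$; and (ii) $\mG - \mV_\rmA = \mA\mS - \mDelta_\rmA = \mA_0\mS$, since $\mDelta_\rmA = \mY - \mA_0\mS = \mA\mS - \mA_0\mS$ and the $\tfrac12\mA\mS\mN$ terms cancel. Expanding $\begin{bmatrix}\mU_\rmA & \mV_\rmA\end{bmatrix}\mT\begin{bmatrix}\mU_\rmA^\top\\\mV_\rmA^\top\end{bmatrix} = \mU_\rmA\mT_{11}\mU_\rmA^\top + \mU_\rmA\mT_{12}\mV_\rmA^\top + \mV_\rmA\mT_{21}\mU_\rmA^\top + \mV_\rmA\mT_{22}\mV_\rmA^\top$ and applying (i) gives
\[
\mA\mS(\mI+\mPi)^{-1}\mS^\top\mA - \mG\mG^\top + \mG\mV_\rmA^\top + \mV_\rmA\mG^\top - \mV_\rmA\mV_\rmA^\top = \mA\mS(\mI+\mPi)^{-1}\mS^\top\mA - (\mG - \mV_\rmA)(\mG - \mV_\rmA)^\top,
\]
and by (ii) the subtracted square equals $\mA_0\mS\mS^\top\mA_0$. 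Since $\mF = \mA_0^{-1}\begin{bmatrix}\mU_\rmA & \mV_\rmA\end{bmatrix}\mT\begin{bmatrix}\mU_\rmA^\top\\\mV_\rmA^\top\end{bmatrix}\mA_0^{-1}$ by Lemma \ref{lem:symm_posterior_inverse}, conjugation absorbs the $\mA_0$'s in the square and leaves $\mF = \mA_0^{-1}\mA\mS(\mI+\mPi)^{-1}\mS^\top\mA\mA_0^{-1} - \mS\mS^\top$, as claimed.

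I expect the only real work to be the careful substitution $\mLambda = \mI$ into the lengthy formulas of Lemma \ref{lem:symm_posterior_inverse_blockterms_inverse} and the verification that the three mixed terms together with $-\mG\mG^\top$ genuinely recombine into $-(\mG - \mV_\rmA)(\mG - \mV_\rmA)^\top$ — there is no conceptual obstacle beyond bookkeeping, but one must be careful to invoke the symmetry of $\mN$, of $\mT_{12}$, and of $\mA_0$ when forming the perfect square.
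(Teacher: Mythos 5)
Your proof is correct, and every intermediate claim checks out: with \eqref{eqn:hered_pos_def} giving $\mW_0^\rmA\mS=\mA\mS$ and hence $\mLambda=\mS^\top\mA\mS=\mI$, the blocks of $\mT$ reduce exactly as you state, $\mN=\mS^\top\mDelta_\rmA$ is symmetric by \Cref{lem:symm_posterior_inverse_blockterms}, and $\mG-\mV_\rmA=\mA\mS-\mDelta_\rmA=\mA_0\mS$, so the quadratic form collapses to $\mA\mS(\mI+\mPi)^{-1}\mS^\top\mA-\mA_0\mS\mS^\top\mA_0$ and conjugation by $\mA_0^{-1}$ gives the claim. At the top level this is the same strategy as the paper's proof --- plug the block formulas of \Cref{lem:symm_posterior_inverse_blockterms_inverse} into the quadratic form and expand --- but your execution differs in a way worth noting: the paper keeps $\mLambda$ general through the computation of $\mF_{11},\mF_{12},\mF_{21},\mF_{22}$, only sets $\mS^\top\mA\mS=\mI$ at the end, and then cancels roughly a dozen terms by adding the four blocks pairwise; you instead set $\mLambda=\mI$ from the outset and observe that the three mixed/diagonal terms recombine with $-\mG\mG^\top$ into the perfect square $-(\mG-\mV_\rmA)(\mG-\mV_\rmA)^\top$. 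The perfect-square observation is the real gain: it replaces a page of term-by-term cancellation with a two-line identity and makes it structurally transparent why the answer is $\mA_0^{-1}\mA\mS(\mI+\mPi)^{-1}\mS^\top\mA\mA_0^{-1}-\mS\mS^\top$ rather than this being an apparent coincidence of cancellations. The only cost is that your argument relies on the special case $\mLambda=\mI$ from the start, whereas the paper's intermediate expressions remain valid for general $\mLambda$ (though they are not reused elsewhere, so nothing is lost).
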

\begin{proof}
By expanding the quadratic and using \Cref{lem:symm_posterior_inverse_blockterms_inverse}, we obtain the terms
\begingroup
\allowdisplaybreaks
\begin{align*}
\mF_{11} &\coloneqq \mA_0^{-1}\mU_\rmA \mT_{11}\mU_\rmA ^\top \mA_0^{-1}\\
 &= \mA_0^{-1}\mU_\rmA  \mLambda (\mLambda + \mPi)^{-1} \mLambda \mU_\rmA ^\top \mA_0^{-1} - \mA_0^{-1} \mU_\rmA  (\mI - \frac{1}{2}\mS^\top \mDelta_\rmA  \mLambda^{-1}) (\mI - \frac{1}{2}\mLambda^{-1} \mDelta_\rmA ^\top \mS) \mU_\rmA ^\top \mA_0^{-1}\\
 &= \mA_0^{-1} \mA\mS (\mLambda + \mPi)^{-1} \mS^\top \mA \mA_0^{-1} - \mA_0^{-1} \mA\mS \mLambda^{-1} (\mI - \frac{1}{2}\mS^\top \mDelta_\rmA  \mLambda^{-1}) (\mI - \frac{1}{2}\mLambda^{-1} \mDelta_\rmA ^\top \mS) \mLambda^{-1} \mS^\top \mA \mA_0^{-1}\\
 &= \mA_0^{-1} \mA\mS (\mLambda + \mPi)^{-1} \mS^\top \mA \mA_0^{-1} - \mA_0^{-1}\mA\mS \mLambda^{-2} \mS^\top \mA \mA_0^{-1}\\
 &\qquad+ \frac{1}{2}\mA_0^{-1}\mA\mS \mLambda^{-1}(\mS^\top \mDelta_\rmA  \mLambda^{-1} + \mLambda^{-1} \mDelta_\rmA ^\top \mS) \mLambda^{-1} \mS^\top \mA \mA_0^{-1}\\
  &\qquad- \frac{1}{4} \mA_0^{-1} \mA\mS \mLambda^{-1} \mS^{\top} \mDelta_\rmA  \mLambda^{-2} \mDelta_\rmA ^\top \mS \mLambda^{-1} \mS^\top \mA \mA_0^{-1}\\
\mF_{12} &\coloneqq \mA_0^{-1}\mU_\rmA \mT_{12}\mV_\rmA ^\top \mA_0^{-1}\\
&= \mA_0^{-1}\mU_\rmA (\mI - \frac{1}{2}\mS^{\top}\mDelta_\rmA  \mLambda^{-1})\mV_\rmA ^\top \mA_0^{-1}\\
&= \mA_0^{-1}\mA\mS\mLambda^{-1}(\mI - \frac{1}{2}\mS^{\top}\mDelta_\rmA  \mLambda^{-1})\mDelta_\rmA ^\top( \mI - \frac{1}{2}\mS \mU_\rmA ^\top) \mA_0^{-1} \\
&= \mA_0^{-1}\mA\mS\mLambda^{-1}(\mI - \frac{1}{2}\mS^{\top}\mDelta_\rmA  \mLambda^{-1})\mDelta_\rmA ^\top( \mI - \frac{1}{2}\mS \mLambda^{-1}\mS^\top \mA) \mA_0^{-1}\\
&= \mA_0^{-1}\mA\mS \mLambda^{-1} \mDelta_\rmA ^{\top} \mA_0^{-1} - \frac{1}{2} \mA_0^{-1} \mA\mS \mLambda^{-1} (\mS^\top \mDelta_\rmA  \mLambda^{-1}\mDelta_\rmA ^\top + \mDelta_\rmA ^\top \mS \mLambda^{-1}\mS^\top \mA) \mA_0^{-1}\\
&\qquad+ \frac{1}{4} \mA_0^{-1}\mA\mS\mLambda^{-1}\mS^\top \mDelta_\rmA  \mLambda^{-1} \mDelta_\rmA ^\top \mS \mLambda^{-1} \mS^\top \mA \mA_0^{-1}\\
\mF_{21} &\coloneqq \mF_{12}^\top = \mA_0^{-1}(\mI - \frac{1}{2}\mA\mS \mLambda^{-1} \mS^{\top})\mDelta_\rmA (\mI - \frac{1}{2}\mLambda^{-1} \mDelta_\rmA ^{\top}\mS) \mLambda^{-1} \mS \mA \mA_0^{-1}\\
&=\mA_0^{-1} \mDelta_\rmA  \mLambda^{-1} \mS^\top \mA \mA_0^{-1} - \frac{1}{2} \mA_0^{-1} (\mDelta_\rmA  \mLambda^{-1} \mDelta_\rmA ^\top \mS + \mA \mS \mLambda^{-1} \mS^\top \mDelta_\rmA )\mLambda^{-1} \mS^\top \mA \mA_0^{-1} \\
&\qquad+ \frac{1}{4} \mA_0^{-1} \mA \mS \mLambda^{-1} \mS^\top \mDelta_\rmA  \mLambda^{-1} \mDelta_\rmA ^\top \mS\mLambda^{-1}\mS^\top \mA \mA_0^{-1}\\
\mF_{22} &\coloneqq \mA_0^{-1}\mV_\rmA T_{22}\mV_\rmA ^\top \mA_0^{-1}\\
&= - \mA_0^{-1}(\mI-\frac{1}{2}\mU_\rmA \mS^\top)\mDelta_\rmA  \mLambda^{-1} \mDelta_\rmA ^\top (\mI-\frac{1}{2}\mS \mU_\rmA ^\top)\mA_0^{-1}\\
&= - \mA_0^{-1}(\mI-\frac{1}{2}\mA\mS \mLambda^{-1} \mS^\top)\mDelta_\rmA  \mLambda^{-1} \mDelta_\rmA ^\top (\mI-\frac{1}{2}\mS\mLambda^{-1} \mS^\top \mA)\mA_0^{-1}\\
&= -\mA_0^{-1} \mDelta_\rmA  \mLambda^{-1} \mDelta_\rmA ^\top \mA_0^{-1} + \frac{1}{2} \mA_0^{-1}(\mA\mS \mLambda^{-1}\mS^\top \mDelta_\rmA  \mLambda^{-1} \mDelta_\rmA ^\top + \mDelta_\rmA  \mLambda^{-1} \mDelta_\rmA ^\top \mS \mLambda^{-1} \mS^\top \mA)\mA_0^{-1} \\
&\qquad- \frac{1}{4} \mA_0^{-1} \mA\mS \mLambda^{-1} \mS^{\top} \mDelta_\rmA  \mLambda^{-1} \mDelta_\rmA ^{\top}\mS \mLambda^{-1} \mS^\top \mA \mA_0^{-1}
\end{align*}
\endgroup
Assuming \(\mS^\top \mA \mS = \mI\), it holds that
\begingroup
\allowdisplaybreaks
\begin{align*}
\mF_{11} &= \mA_0^{-1}\mA\mS(\mI+\mPi)^{-1}\mS^\top \mA \mA_0^{-1} - \mA_0^{-1}\mA\mS\mS^\top  \mA\mA_0^{-1} + \frac{1}{2}\mA_0^{-1}\mA\mS(\mS^\top \mDelta_\rmA  + \mDelta_\rmA ^\top \mS) \mS^\top \mA \mA_0^{-1} \\
&\qquad- \frac{1}{4} \mA_0^{-1} \mA\mS\mS^\top \mDelta_\rmA  \mDelta_\rmA ^\top \mS \mS^\top \mA \mA_0^{-1}\\
\mF_{12} &= \mA_0^{-1} \mA\mS\mS^\top \mA \mA_0^{-1} - \mA_0^{-1}\mA\mS\mS^\top - \frac{1}{2} \mA_0^{-1}\mA\mS(\mS^\top \mDelta_\rmA  \mDelta_\rmA ^\top + \mDelta_\rmA ^\top \mS\mS^\top \mA)\mA_0^{-1}\\
&\qquad+\frac{1}{4}\mA_0^{-1}\mA\mS\mS^\top \mDelta_\rmA  \mDelta_\rmA ^\top \mS\mS^\top \mA \mA_0^{-1}\\
\mF_{21} &= \mA_0^{-1} \mA\mS\mS^\top \mA \mA_0^{-1} - \mS\mS^\top \mA\mA_0^{-1} - \frac{1}{2}\mA_0^{-1}(\mDelta_\rmA  \mDelta_\rmA ^\top \mS + \mA \mS\mS^\top \mDelta_\rmA )\mS^\top \mA \mA_0^{-1}\\
&\qquad+\frac{1}{4}\mA_0^{-1}\mA\mS\mS^\top \mDelta_\rmA  \mDelta_\rmA ^\top \mS\mS^\top \mA \mA_0^{-1}\\
\mF_{22} &= \mA_0^{-1}\mDelta_\rmA  \mS^\top -\mA_0^{-1}\mDelta_\rmA  \mS^\top \mA \mA_0^{-1} + \frac{1}{2}(\mA\mS \mS^\top \mDelta_\rmA  \mDelta_\rmA ^\top + \mDelta_\rmA  \mDelta_\rmA ^\top \mS \mS^\top \mA)\mA_0^{-1}\\
&\qquad- \frac{1}{4} \mA_0^{-1}\mA\mS\mS^\top \mDelta_\rmA  \mDelta_\rmA ^\top \mS\mS^\top \mA \mA_0^{-1},
\end{align*}
\endgroup
which leads to
\begin{align*}
\mF_{11} + \mF_{12} &= \mA_0^{-1}\mA\mS(\mI+\mPi)^{-1}\mS^\top \mA \mA_0^{-1} - \mA_0^{-1}\mA\mS\mS^\top + \frac{1}{2} \mA_0^{-1}\mA\mS(\mS^{\top} \mDelta_\rmA  \mS^\top \mA - \mS^{\top} \mDelta_\rmA  \mDelta_\rmA ^{\top} )\mA_0^{-1}\\
\mF_{21} + \mF_{22} &= \mA_0^{-1} \mDelta_\rmA  \mS^{\top} + \frac{1}{2} \mA_0^{-1}\mA\mS (\mS^\top \mDelta_\rmA  \mDelta_\rmA ^\top - \mS^\top \mDelta_\rmA  \mS^\top \mA)\mA_0^{-1}\\
&= \mA_0^{-1} \mA\mS \mS^{\top} - \mS \mS^{\top} + \frac{1}{2} \mA_0^{-1}\mA\mS (\mS^\top \mDelta_\rmA  \mDelta_\rmA ^\top - \mS^\top \mDelta_\rmA  \mS^\top \mA)\mA_0^{-1}.
\end{align*}
Finally, adding up the individual terms we obtain
\begin{equation*}
\mF=\mF_{11} + \mF_{12} + \mF_{21} + \mF_{22} = \mA_0^{-1}\mA\mS(\mI+\mPi)^{-1}\mS^\top \mA \mA_0^{-1} - \mS\mS^\top.
\end{equation*}
\end{proof}

\begin{customthm}{2}[Weak Posterior Correspondence]
Let \(\mW_0^\rmH \in \mathbb{R}^{n \times n}_{\textup{sym}}\) be positive definite. Assume \(\mH_0 = \mA_0^{-1}\), and that $\mW_0^\rmA, \mA_0, \mW_0^\rmH$ satisfy \eqref{eqn:hered_pos_def} and \eqref{eqn:post_mean_equiv}, then weak posterior correspondence holds for the symmetric Kronecker covariance.
\end{customthm}

\begin{proof}
First note that without loss of generality \(\mS^\top \mA \mS = \mI\), i.e. only the direction of the action matters in \Cref{alg:problinsolve} not its magnitude. This can be seen from the forms of \(\mA_k\) and \(\mH_k\) in \Cref{sec:inference_framework}. Any positive factor \(\alpha > 0\) of \(\vs_k\) cancels in the update expressions. Expanding the right hand side we have using \eqref{eqn:subspace_equivalency}, that \(\mH_k \mY =\mS\). Then by \Cref{lem:symm_posterior_inverse},  \Cref{lem:post_mean_inverse_blockinversion} and \(\mS^\top \mA\mS = \mI\), the left hand side evaluates to
\begin{align*}
\mA_k^{-1}\mY&= (\mA_0^{-1} - \mF)\mY \\
&= (\mA_0^{-1} - \mA_0^{-1}\mA\mS(\mI + \mPi)^{-1}\mS^\top \mA \mA_0^{-1} + \mS\mS^\top)\mA\mS\\
&= \mA_0^{-1}\mA\mS - \mA_0^{-1}\mA\mS + \mS\\
&= \mS\\
&= \mH_k \mY.
\end{align*}
This concludes the proof.
\end{proof}

This theorem shows that for a certain choice of symmetric matrix-variate normal prior the estimated inverse of the matrix \(\mH_k\) corresponds to the inverse of the estimated matrix \(\mA_k^{-1}\). It also shows that both act like \(\mA^{-1}\) on the space spanned by \(\mY\), consistent with the interpretation of the two being the best guess for the inverse \(\mA^{-1}\).

\section{Galerkin's Method for PDEs}
\label{sec:pde_discretization}

In the spirit of applying machine learning in the sciences \cite{Carleo2019}, we briefly outlined an application of \Cref{alg:problinsolve} to the solution of partial differential equations in \Cref{sec:experiments}. As an example we considered the Dirichlet problem for the Poisson equation given by
\begin{equation}
\label{eqn:dirichlet_problem}
\begin{cases}
-\Delta u(x,y) = f(x,y) &(x,y) \in \operatorname{int}\Omega\\
u(x,y) = u_{\partial \Omega}(x,y) &(x,y) \in \partial \Omega
\end{cases}
\end{equation}
where \(\Omega\) is a connected open region with sufficiently regular boundary and \(u_{\partial \Omega} : \partial \Omega \rightarrow \mathbb{R}\) defines the boundary conditions. The corresponding weak solution of \eqref{eqn:dirichlet_problem} is given by \(u \in V\) such that for all test functions \(v \in V\)
\begin{equation}
\label{eqn:weak_form_dirichlet}
a(u,v) \coloneqq \int_{\Omega} \nabla u \cdot \nabla v \, dx = \int_{\Omega} fv \, dx \eqqcolon f(v),
\end{equation}
where \(a(\cdot, \cdot)\) is a bilinear form. Next, one derives the \emph{Galerkin equation} by choosing a finite-dimensional subspace \(V_{\mathsmaller{\square}} \subset V\) and corresponding basis \(e_1^{\mathsmaller{\square}}, \dots, e_n^{\mathsmaller{\square}}\). Then \eqref{eqn:weak_form_dirichlet} reduces to finding \(u \in V_{\mathsmaller{\square}}\) such that for all \(i \in \{1, \dots, n\}\) it holds that \(a(u, e_i^{\mathsmaller{\square}}) = \sum_{j=1}^n  u_j a(e_j^{\mathsmaller{\square}}, e_i^{\mathsmaller{\square}})= f(e_i^{\mathsmaller{\square}})\) which is a linear system \(\mA \vu=\vf\) with the entries of the Gram matrix given by \(\mA_{ij} = a(e_j^{\mathsmaller{\square}}, e_i^{\mathsmaller{\square}})\) and \(\evf_i = f(e_i^{\mathsmaller{\square}})\).

\subsection{Operator View}

The operator view provides another motivation for placing a distribution over the matrix \(\mA\) of a linear system. When approximating the solution to a PDE, as we do here, then solution-based inference for linear systems \citep{Cockayne2019, Bartels2019} can be viewed as placing a Gaussian process prior over the solution \(u : \Omega \rightarrow \mathbb{R}\) \citep{Girolami2020}. The matrix-based approach \cite{Hennig2015} instead can be interpreted as placing a Gaussian measure \cite{Bogachev1998} on the infinite-dimensional space of the differential operator instead. This induces a Gaussian distribution on the Gram matrix \(\rmA\) modelling the uncertainty about the actions of the (discretized) differential operator.

\begin{definition}[Infinite-dimensional Gaussian Measures \cite{Bogachev1998}]
\label{def:infinitedim_gaussian_measure}
Let \(W\) be a topological vector space with Borel probability measure \(\mu\), then \(\mu\) is Gaussian, iff for each continuous linear functional \(f \in W^*\), the pushforward \(\mu \circ f^{-1}\) is a Gaussian measure on \(\mathbb{R}\), i.e. \(f\) is a Gaussian random variable on \((W, \mathcal{B}_W, \mu)\).
\end{definition}
This definition  and further detail on Gaussian measures in infinite-dimensional spaces can be found in the book by \citet{Bogachev1998}. We now model the differential operator as a random variable on the space of bounded linear operators and show that this induces a distribution on the Gram matrix arising from discretization via Galerkin's method.

\begin{theorem}[Gaussian Measures on the Space of Bounded Linear Operators]
Let \(V\) be a Hilbert space and let \(W=B(V,V)\) be the space of bounded linear operators from \(V\) to \(V\) with Borel probability measure \(\mu\) and let \(\mathsf{A}\) be a Gaussian random variable on \((W, \mathcal{B}_W, \mu)\). Consider the operator equation
\begin{equation*}
\mathsf{A} u=\mathsf{f}
\end{equation*}
and let \(a: V \times V \rightarrow \mathbb{R}, (u, v) \mapsto \langle \mathsf{A} u , v \rangle_V = \langle \mathsf{f} , v \rangle_V\) be its corresponding bilinear form. Let \(V_{\mathsmaller{\square}}\) be an \(n\)-dimensional subspace of \(V\), then the resulting Gram matrix \(\rmA \in \mathbb{R}^{n \times n}\) is matrix-variate Gaussian.
\end{theorem}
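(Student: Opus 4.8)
The plan is to reduce the claim to the standard fact that a finite-dimensional random vector is Gaussian precisely when all of its one-dimensional linear images are (the Cram\'er--Wold device), by exhibiting every entry of the Gram matrix $\rmA$ --- and, more generally, every linear combination of its entries --- as the image of the operator-valued Gaussian $\mathsf{A}$ under a continuous linear functional on $W = B(V,V)$. \Cref{def:infinitedim_gaussian_measure} then does all the work.

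Concretely, I would first fix the basis $e_1^{\mathsmaller{\square}}, \dots, e_n^{\mathsmaller{\square}}$ of $V_{\mathsmaller{\square}}$ and, for each pair $(i,j)$, introduce the evaluation functional $\ell_{ij} \colon W \to \mathbb{R}$, $\ell_{ij}(\mathsf{B}) = \langle \mathsf{B} e_j^{\mathsmaller{\square}}, e_i^{\mathsmaller{\square}} \rangle_V$, so that by definition of the Galerkin discretization $\rmA_{ij} = a(e_j^{\mathsmaller{\square}}, e_i^{\mathsmaller{\square}}) = \ell_{ij}(\mathsf{A})$. Each $\ell_{ij}$ is clearly linear, and the estimate $|\ell_{ij}(\mathsf{B})| \le \|\mathsf{B}\|_{\mathrm{op}} \, \|e_i^{\mathsmaller{\square}}\|_V \, \|e_j^{\mathsmaller{\square}}\|_V$, immediate from Cauchy--Schwarz and the definition of the operator norm, shows $\ell_{ij} \in W^\ast$. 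Hence by \Cref{def:infinitedim_gaussian_measure} the pushforward $\mu \circ \ell_{ij}^{-1}$ is a one-dimensional Gaussian, i.e.\ each entry $\rmA_{ij}$ is a scalar Gaussian random variable --- this settles the marginals but not yet joint normality.

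For the joint statement I would take an arbitrary coefficient array $\bm{c} = (c_{ij}) \in \mathbb{R}^{n\times n}$ and form $\ell_{\bm{c}} = \sum_{i,j} c_{ij}\ell_{ij}$; being a finite linear combination of elements of $W^\ast$ it again lies in $W^\ast$, so \Cref{def:infinitedim_gaussian_measure} forces $\ell_{\bm{c}}(\mathsf{A}) = \sum_{i,j} c_{ij}\rmA_{ij}$ to be scalar Gaussian. Since $\bm{c}$ was arbitrary, every one-dimensional linear image of $\vec(\rmA^\top) \in \mathbb{R}^{n^2}$ is Gaussian, so Cram\'er--Wold gives that $\vec(\rmA^\top)$ is multivariate normal; its mean is the barycenter of $\mu$ read off through the $\ell_{ij}$, and its covariance is the covariance form of $\mu$ evaluated on the functionals $\ell_{ij}$. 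This is exactly the assertion that $\rmA$ is matrix-variate Gaussian in the sense of \Cref{def:matrixnormal}.

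The routine ingredient is the operator-norm bound in the second step; the genuinely conceptual point is recognizing that \Cref{def:infinitedim_gaussian_measure}, applied not just to the coordinate functionals but to \emph{all} their finite linear combinations, already encodes joint Gaussianity via Cram\'er--Wold. The one delicate issue I would flag is the Kronecker structure: \Cref{def:matrixnormal} as stated wants a covariance of the factored form $\mV \otimes \mW$, which does not follow from Gaussianity of $\mu$ alone but only when the covariance operator of the operator-valued measure splits over the two $V$-slots; I would either adopt this factorization as an explicit hypothesis on $\mu$ or state the conclusion with a general (possibly degenerate) $n^2 \times n^2$ covariance.
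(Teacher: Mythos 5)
Your proposal follows essentially the same route as the paper: both arguments hinge on showing that the evaluation functionals \(\mathsf{B} \mapsto \langle \mathsf{B} e_j^{\mathsmaller{\square}}, e_i^{\mathsmaller{\square}}\rangle_V\) are bounded (via Cauchy--Schwarz and the operator norm) and hence lie in \(W^\ast\), so that \Cref{def:infinitedim_gaussian_measure} makes each Gram-matrix entry a scalar Gaussian. Where you go beyond the paper is in the final step: the paper concludes that \(\rmA\) is matrix-variate Gaussian ``since its components are Gaussian,'' which as stated is only a claim about the marginals, whereas you correctly close the gap by applying the definition to \emph{all} finite linear combinations \(\sum_{i,j} c_{ij}\ell_{ij} \in W^\ast\) and invoking Cram\'er--Wold to obtain joint normality of \(\vec(\rmA^\top)\). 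Your caveat about the Kronecker factorization is also well taken: \Cref{def:matrixnormal} requires a covariance of the form \(\mV \otimes \mW\), which does not follow from Gaussianity of \(\mu\) alone, so the conclusion should be read (as the paper implicitly does) with a general \(n^2 \times n^2\) covariance. In short, your argument is the paper's argument done more carefully.
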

\begin{proof}
Since \(V\) is Banach, so is \(W\). Define the functional \(a_W : W \rightarrow \mathbb{R}\) given by \(a_W(\mathsf{A}, u,v) = a(u,v)\) for fixed \(u, v \in V\). The map \(a_W(\cdot, u,v)\) is linear by linearity of the inner product and bounded since using the Cauchy-Schwarz inequality, it holds that
\begin{equation*}
\abs{a_W(\mathsf{A}, u,v)} = \abs{\langle \mathsf{A} u, v \rangle_{V}} \leq  \norm{\mathsf{A} u}_V \norm{v}_V \leq \norm{\mathsf{A}}_W \norm{u}_V \norm{v}_V=C \norm{\mathsf{A}}_W.
\end{equation*}
Therefore \(a_W(\cdot, u,v) \in W^*\) for all \(u, v \in V\). By \Cref{def:infinitedim_gaussian_measure} of a Gaussian measure the push forward \(\mu \circ a_W^{-1}\) is a Gaussian measure on \(\mathbb{R}\) for all \(u, v \in V\), in particular also for a basis \(\{v_i\}_{i=1}^n\) of \(V_{\mathsmaller{\square}}\). Therefore the Gram matrix \(\rmA\) given by \(\rmA_{ij} = a(v_i, v_j) = a_W(\mathsf{A}, v_i, v_j)\) is matrix-variate Gaussian since its components are Gaussian.
\end{proof}

\begin{remark}
The Laplacian \(\Delta : H^2(\Omega) \rightarrow L^2(\Omega)\) is a bounded linear operator on the Sobolev space \(H^2(\Omega)\). Note, that in general differential operators are in fact \emph{not bounded}. Hence, the simple argument above does not generalize to arbitrary differential operators.
\end{remark}

\begin{remark}
If the bilinear form \(a\) in addition to being continuous is also weakly coercive, then by the Lax-Milgram theorem the operator equation has a unique solution. A symmetric and weakly coercive operator implies a symmetric positive-definite Gram matrix.
\end{remark}

\subsection{Discretization Refinement}

The linear system \(\mA \vu = \vf\) arises from discretizing \eqref{eqn:dirichlet_problem} using Galerkin's method on a given mesh \({\mathsmaller{\square}}\) defined via a finite-dimensional subspace \(V_{\mathsmaller{\square}} \subset V\) such that \(\vu \in V_{\mathsmaller{\square}}\). By solving this problem using a probabilistic linear solver we obtain a posterior distribution over the inverse \(\mH\) of the discretized differential operator \(\mA\). Our goal is to leverage the obtained information about the solution on the coarse mesh to extrapolate to a refined discretization, similar in spirit to multi-grid methods \cite{Wesseling2004}. This approach can be seen as an instance of transfer learning and could be used for adaptive probabilistic mesh refinement strategies based on the uncertainty about the solution in a certain region of the mesh.

Consider a fine mesh \(\mathsmaller{\boxplus}\) given by \(V_{\mathsmaller{\boxplus}}\), where \(n_{\mathsmaller{{\mathsmaller{\boxplus}}}} = \operatorname{dim}(V_{\mathsmaller{\boxplus}}) > \operatorname{dim}(V_{\mathsmaller{\square}}) = n_{\mathsmaller{\square}}\) such that \(V_{\mathsmaller{\square}} \subset V_{{\mathsmaller{\boxplus}}} \subset V\). We would like to transfer information from solving the problem on the coarse mesh \(V_{{\mathsmaller{\square}}}\) to the solution of the discretized PDE on the fine mesh \(V_{{\mathsmaller{\boxplus}}}\). To do so we compute the predictive distribution on the fine mesh, given the belief over the inverse differential operator on the coarse mesh, i.e.

\begin{equation*}
p(\rmH_{{\mathsmaller{\boxplus}}}) = \int p(\rmH_{{\mathsmaller{\boxplus}}} \mid \rmH_{{\mathsmaller{\square}}}) p(\rmH_{{\mathsmaller{\square}}}) \, d \rmH_{{\mathsmaller{\square}}}.
\end{equation*}

Define the \emph{prolongation operator} \(\mP : \mathbb{R}^{n_{{\mathsmaller{\square}}}} \rightarrow \mathbb{R}^{n_{{\mathsmaller{\boxplus}}}}\) given by \(\mP_{ij}=\langle \ve_i^{\mathsmaller{\boxplus}}, \ve_j^{\mathsmaller{\square}} \rangle\) satisfying \(\mP^\top \mP = \mI \in \mathbb{R}^{n_{{\mathsmaller{\square}}} \times n_{{\mathsmaller{\square}}}}\), implying it is injective. The distribution over the inverse operator on the fine mesh given the inverse operator on the coarse mesh is given by
\begin{equation}
\label{eqn:pde_likelihood}
p(\rmH_{{\mathsmaller{\boxplus}}} \mid \rmH_{{\mathsmaller{\square}}}) =  \mathcal{N}(\rmH_{{\mathsmaller{\boxplus}}}; \mP \rmH_{{\mathsmaller{\square}}} \mP^\top, \mLambda)
\end{equation}
where \(\mLambda \in \mathbb{R}^{n_{\mathsmaller{\boxplus}} \times n_{\mathsmaller{\boxplus}}}_{\text{sym}}\) positive definite models the numerical uncertainty induced by the coarser discretization. This corresponds to the assumption that solving the problem on a coarser grid approximates the solution on a fine grid projected to the coarse grid. 

Now assume we have a posterior distribution over the inverse differential operator on the coarse grid from a solve of the coarse problem using \Cref{alg:problinsolve}, given by
\begin{equation*}
p(\rmH_{{\mathsmaller{\square}}})= \mathcal{N}(\rmH_{{\mathsmaller{\square}}};\mH_{{\mathsmaller{\square}}}^k, \mW_{{\mathsmaller{\square}}}^k \ostimes \mW_{{\mathsmaller{\square}}}^k).
\end{equation*}
The projection in \eqref{eqn:pde_likelihood} is a linear map, since by the characteristic property of the Kronecker product \eqref{eqn:kronecker_charprop} we have
\begin{equation*}
\operatorname{svec}(\mP \rmH_{{\mathsmaller{\square}}} \mP^\top) = \mQ(\mP \otimes \mP)\mQ^\top \operatorname{svec}(\rmH_{{\mathsmaller{\square}}}).
\end{equation*}
Therefore by \Cref{thm:gaussian_inference} the predictive distribution is also closed-form and Gaussian.

\begin{proposition}[Predictive Distribution on Fine Mesh]
\label{prop:predictive_fine_mesh}
Let \(p(\rmH_{{\mathsmaller{\square}}})= \mathcal{N}(\rmH_{{\mathsmaller{\square}}};\mH_{{\mathsmaller{\square}}}^k, \mW_{{\mathsmaller{\square}}}^k \ostimes \mW_{{\mathsmaller{\square}}}^k)\) be a prior on \(\rmH_{{\mathsmaller{\square}}}\) and assume a likelihood of the form \eqref{eqn:pde_likelihood}. Then the predictive distribution is given by \(p(\rmH_{{\mathsmaller{\boxplus}}}) = \mathcal{N}(\rmH_{{\mathsmaller{\boxplus}}}; \mH_{{\mathsmaller{\boxplus}}}^0, \mSigma_{{\mathsmaller{\boxplus}}}^0)\), where
\begin{align*}
\mH_{{\mathsmaller{\boxplus}}}^0 &= \mP \mH_{{\mathsmaller{\square}}}^k \mP^\top,\\
\mSigma_{{\mathsmaller{\boxplus}}}^0 &= \mP\mW_{{\mathsmaller{\square}}}^k \mP^\top \ostimes \mP \mW_{{\mathsmaller{\square}}}^k \mP^\top + \mLambda. 
\end{align*}
\end{proposition}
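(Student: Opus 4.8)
The plan is to read the predictive distribution off as the marginal of a linear--Gaussian model in $\operatorname{svec}$ coordinates and apply the marginalisation part of \Cref{thm:gaussian_inference}. Write $\mathcal{P} \coloneqq \mQ(\mP \otimes \mP)\mQ^\top$, where the left copy of $\mQ$ is the $n_{{\mathsmaller{\boxplus}}}$-dimensional one and the right copy the $n_{{\mathsmaller{\square}}}$-dimensional one, so that (as noted just before the statement) $\operatorname{svec}(\mP \rmH_{{\mathsmaller{\square}}} \mP^\top) = \mathcal{P}\,\operatorname{svec}(\rmH_{{\mathsmaller{\square}}})$. Then the likelihood \eqref{eqn:pde_likelihood} reads $\operatorname{svec}(\rmH_{{\mathsmaller{\boxplus}}}) = \mathcal{P}\,\operatorname{svec}(\rmH_{{\mathsmaller{\square}}}) + \bm{\varepsilon}$ with $\bm{\varepsilon} \sim \mathcal{N}(\bm{0}, \mLambda)$ (reading $\mLambda$ as the noise covariance in $\operatorname{svec}$ coordinates, consistent with \Cref{def:symmatrixnormal}), while the prior is $\operatorname{svec}(\rmH_{{\mathsmaller{\square}}}) \sim \mathcal{N}(\operatorname{svec}(\mH_{{\mathsmaller{\square}}}^k), \mW_{{\mathsmaller{\square}}}^k \ostimes \mW_{{\mathsmaller{\square}}}^k)$, whose covariance is symmetric positive definite. \Cref{thm:gaussian_inference} (with $\mB = \mathcal{P}$, $\vb = \bm{0}$) then gives directly $\operatorname{svec}(\rmH_{{\mathsmaller{\boxplus}}}) \sim \mathcal{N}(\mathcal{P}\operatorname{svec}(\mH_{{\mathsmaller{\square}}}^k),\ \mathcal{P}(\mW_{{\mathsmaller{\square}}}^k \ostimes \mW_{{\mathsmaller{\square}}}^k)\mathcal{P}^\top + \mLambda)$, so all that remains is to simplify the mean and covariance into the stated Kronecker form.

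For the mean, use $\mQ^\top \operatorname{svec}(\mX) = \operatorname{vec}(\mX)$ for symmetric $\mX$, the characteristic property \eqref{eqn:kronecker_charprop}, and symmetry of $\mP\mH_{{\mathsmaller{\square}}}^k\mP^\top$, to get $\mathcal{P}\operatorname{svec}(\mH_{{\mathsmaller{\square}}}^k) = \mQ(\mP\otimes\mP)\operatorname{vec}(\mH_{{\mathsmaller{\square}}}^k) = \mQ\operatorname{vec}(\mP\mH_{{\mathsmaller{\square}}}^k\mP^\top) = \operatorname{svec}(\mP\mH_{{\mathsmaller{\square}}}^k\mP^\top)$, i.e. $\mH_{{\mathsmaller{\boxplus}}}^0 = \mP\mH_{{\mathsmaller{\square}}}^k\mP^\top$. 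For the covariance, note $\mathcal{P}^\top = \mQ(\mP^\top\otimes\mP^\top)\mQ^\top$ by $(\mA\otimes\mB)^\top = \mA^\top\otimes\mB^\top$, and evaluate $\mathcal{P}(\mW_{{\mathsmaller{\square}}}^k \ostimes \mW_{{\mathsmaller{\square}}}^k)\mathcal{P}^\top$ on an arbitrary $\operatorname{svec}(\mZ)$ with $\mZ \in \mathbb{R}^{n_{{\mathsmaller{\boxplus}}}\times n_{{\mathsmaller{\boxplus}}}}_{\textup{sym}}$ by chasing it through the three maps (right to left): $\mathcal{P}^\top = \mQ(\mP^\top\otimes\mP^\top)\mQ^\top$ sends it to $\operatorname{svec}(\mP^\top\mZ\mP)$; then $\mW_{{\mathsmaller{\square}}}^k \ostimes \mW_{{\mathsmaller{\square}}}^k$ acts by \eqref{eqn:symm_kronecker_charprop}, which, since $\mW_{{\mathsmaller{\square}}}^k$ is symmetric, sends $\operatorname{svec}(\mP^\top\mZ\mP)$ to $\operatorname{svec}(\mW_{{\mathsmaller{\square}}}^k\mP^\top\mZ\mP\mW_{{\mathsmaller{\square}}}^k)$; and finally $\mathcal{P} = \mQ(\mP\otimes\mP)\mQ^\top$ sends this to $\operatorname{svec}(\mP\mW_{{\mathsmaller{\square}}}^k\mP^\top\mZ\mP\mW_{{\mathsmaller{\square}}}^k\mP^\top)$. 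On the other hand, by \eqref{eqn:symm_kronecker_charprop} and symmetry of $\mP\mW_{{\mathsmaller{\square}}}^k\mP^\top$ we also have $(\mP\mW_{{\mathsmaller{\square}}}^k\mP^\top \ostimes \mP\mW_{{\mathsmaller{\square}}}^k\mP^\top)\operatorname{svec}(\mZ) = \operatorname{svec}(\mP\mW_{{\mathsmaller{\square}}}^k\mP^\top\mZ\mP\mW_{{\mathsmaller{\square}}}^k\mP^\top)$; since $\mZ$ was an arbitrary symmetric matrix and $\operatorname{svec}$ is a bijection onto its coordinate space, $\mathcal{P}(\mW_{{\mathsmaller{\square}}}^k \ostimes \mW_{{\mathsmaller{\square}}}^k)\mathcal{P}^\top = \mP\mW_{{\mathsmaller{\square}}}^k\mP^\top \ostimes \mP\mW_{{\mathsmaller{\square}}}^k\mP^\top$. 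Adding $\mLambda$ yields $\mSigma_{{\mathsmaller{\boxplus}}}^0$, and converting back from $\operatorname{svec}$ coordinates via \Cref{def:symmatrixnormal} gives the claimed $p(\rmH_{{\mathsmaller{\boxplus}}}) = \mathcal{N}(\rmH_{{\mathsmaller{\boxplus}}}; \mH_{{\mathsmaller{\boxplus}}}^0, \mSigma_{{\mathsmaller{\boxplus}}}^0)$. (The covariance identity here is the prolongation analogue of \eqref{eqn:symkron_kron_symkron} in \Cref{cor:mixed_kronecker_identities} and is proved in exactly the same way.)

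The computation is entirely routine, so there is no genuine obstacle; the only thing to be careful about is the bookkeeping in the covariance step — tracking which copy of $\mQ$ lives in dimension $n_{{\mathsmaller{\square}}}$ and which in $n_{{\mathsmaller{\boxplus}}}$, checking that every intermediate matrix to which a $\mQ$-conversion is applied is genuinely symmetric (so the $\operatorname{svec}$/$\operatorname{vec}$ identities are valid), and keeping the interpretation of $\mLambda$ as an $\operatorname{svec}$-space covariance. Note in particular that $\mP^\top\mP = \mI$ is not needed for the formulas themselves (it is only used elsewhere to ensure the prolongation is injective), so I would avoid invoking it.
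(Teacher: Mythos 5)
Your proposal is correct and follows essentially the same route as the paper: both read off the marginal of the linear--Gaussian model in $\operatorname{svec}$ coordinates via \Cref{thm:gaussian_inference} and then reduce $\mQ(\mP\otimes\mP)\mQ^\top(\mW_{{\mathsmaller{\square}}}^k\ostimes\mW_{{\mathsmaller{\square}}}^k)\mQ(\mP^\top\otimes\mP^\top)\mQ^\top$ to $\mP\mW_{{\mathsmaller{\square}}}^k\mP^\top\ostimes\mP\mW_{{\mathsmaller{\square}}}^k\mP^\top$. The only (immaterial) difference is that the paper invokes the precomputed mixed-product identity \eqref{eqn:kron_symkron_kron} together with the representation $\mA\ostimes\mB=\frac{1}{2}\mQ(\mA\otimes\mB+\mB\otimes\mA)\mQ^\top$, whereas you re-derive the same sandwich identity by chasing an arbitrary symmetric test matrix through the characteristic properties -- your added care about which copy of $\mQ$ lives in which dimension and about the $\operatorname{svec}$-space reading of $\mLambda$ is sound and matches the paper's usage.
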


\begin{proof}
By \Cref{thm:gaussian_inference} we obtain for the mean and covariance of the predictive distribution
\begin{align*}
\mH_{{\mathsmaller{\boxplus}}}^0 &= \mP \mH_{{\mathsmaller{\square}}}^k \mP^\top\\
\mSigma_{{\mathsmaller{\boxplus}}}^0  &= \mQ(\mP \otimes \mP) \mQ^\top (\mW_{{\mathsmaller{\square}}}^k \ostimes \mW_{{\mathsmaller{\square}}}^k) \mQ (\mP^\top \otimes \mP^\top)\mQ^\top + \mLambda\\
&= \frac{1}{2}\mQ(\mP \mW_{{\mathsmaller{\square}}}^k \mP^\top \otimes \mP \mW_{{\mathsmaller{\square}}}^k \mP^\top + \mP \mW_{{\mathsmaller{\square}}}^k \mP^\top \boxtimes \mP \mW_{{\mathsmaller{\square}}}^k \mP^\top)\mQ^\top + \mLambda\\
&= \mP\mW_{{\mathsmaller{\square}}}^k \mP^\top \ostimes \mP \mW_{{\mathsmaller{\square}}}^k \mP^\top + \mLambda
\end{align*}
where we used \eqref{eqn:kron_symkron_kron} and the symmetry of \(\mW_{{\mathsmaller{\square}}}^k\).
\end{proof}

For general \(\mLambda\) the covariance of the predictive distribution does not have symmetric Kronecker form, making its use as a prior for a new solve on the fine mesh challenging. We aim to exploit structural assumptions on \(\mLambda\) and results on nearest Kronecker products to a sum of Kronecker products to remedy this shortcoming in the future.

\end{document}